\tikzset{every path/.append style = {arrows = -latex}}
\theoremstyle{plain}
\newtheorem{theorem}{Theorem}[section]
\newtheorem*{theorem*}{Theorem}
\newtheorem{proposition}[theorem]{Proposition}
\newtheorem{lemma}[theorem]{Lemma}
\theoremstyle{definition}
\newtheorem*{assumption}{Assumption}
\title{Multistep Inverse Is Not All You Need}
\author{Alexander Levine  \\
    alevine0@cs.utexas.edu \\
    The University of Texas at Austin
    \And
    Peter Stone \\
    pstone@cs.utexas.edu\\
    The University of Texas at Austin and Sony AI
    \And
    Amy Zhang \\
    amy.zhang@austin.utexas.edu\\
    The University of Texas at Austin 
    }
\begin{document}

\maketitle

\begin{abstract}
In real-world control settings, the observation space is often unnecessarily high-dimensional and subject to time-correlated noise. However, the \textit{controllable} dynamics of the system are often far simpler than the dynamics of the raw observations. It is therefore desirable to learn an encoder to map the observation space to a simpler space of control-relevant variables. In this work, we consider the Ex-BMDP model, first proposed by \cite{efroni2022provably}, which formalizes control problems where observations can be factorized into an action-dependent latent state which evolves deterministically, and action-independent time-correlated noise. \cite{lamb2022guaranteed} proposes the ``AC-State'' method for learning an encoder to extract a complete action-dependent latent state representation from the observations in such problems. AC-State is a \textit{multistep-inverse} method, in that it uses the encoding of the the first and last state in a path to predict the \textit{first} action in the path. However,  we identify cases where AC-State will fail to learn a correct latent representation of the agent-controllable factor of the state. We therefore propose a new algorithm, ACDF, which combines multistep-inverse prediction with a latent forward model. ACDF is guaranteed to correctly infer an action-dependent latent state encoder for a large class of  Ex-BMDP models. We demonstrate the effectiveness of ACDF on tabular Ex-BMDPs through numerical simulations; as well as high-dimensional environments using neural-network-based encoders. Code is available at \url{https://github.com/midi-lab/acdf}.
\end{abstract}
\section{Introduction}
\label{sec:intro}

In rich-observation decision-making domains, such as robotics, much of the information that the agent observes is irrelevant to any plausible control objective. To allow for efficient planning, it is therefore desirable to learn a compact latent state representation, containing only the information potentially relevant to planning. One approach to
this problem is to learn a \textit{control-endogenous latent representation} \citep{efroni2022provably,lamb2022guaranteed}. The intuition behind this approach is that the observations that an agent receives, such as images, may contain a large amount of irrelevant information (including time-correlated noise) which represents parts of the environment that the agent has no control over. By contrast, the agent-controllable dynamics of the system can in some cases be represented by a \textit{small number of states with deterministic transitions.} This representation allows for efficient planning, and provides a view of the world model than can be directly interpreted by humans. The use of such representations has shown success, for example, in learning robotic manipulation tasks from images in a noisy environment \citep{lamb2022guaranteed}. 

\cite{efroni2022provably} introduces the Ex-BMDP formalism to represent this kind of environment. An Ex-BMDP is a (reward-free) Markov Decision Process in which each \textit{observed} state $x \in \mathcal{X}$ can be factored into an agent-controllable \textit{endogenous} state $s\in \mathcal{S}$, which follows a deterministic transition function, and an \textit{exogenous} state $e \in \mathcal{E}$, which evolves stochastically, independently of actions. It is important to note that the observations $x$ in this model are \textit{not explicitly} segmented into factors: rather, in order to extract the controllable latent state $s$, we must \textit{learn} an encoder.\footnote{Additionally, $x$ can also have time-independent noise: the observation $x$ is sampled from a distribution that depends only on $s$ and $e$. It is assumed that the observation $x$ contains enough information to fully specify $s$ and $e$.}

Numerous prior works have proposed methods for discovering latent representations useful for planning~\citep{wang2022denoised,zhang2020learning,pathak2017curiosity,NEURIPS2020_26588e93}. We focus our attention on the multi-step inverse method \citep{lamb2022guaranteed,islam2023principled}, which is compelling due to its explicit theoretical justification. In brief, consider states $x_t$ and $x_{t+k}$ visited by a policy. If an encoder $\phi$ is learned, such that $\phi(x_t)$ and $\phi(x_{t+k})$ provide sufficient information to predict the \textit{first action} $a_t$ on the path between  $x_t$ and $x_{t+k}$, then the learned representation $s = \phi(x)$ is claimed to be a complete endogenous latent state representation, providing the necessary and sufficient information to infer the latent dynamics. 

This paper discusses cases where the multistep inverse method proposed by \cite{lamb2022guaranteed}, known as AC-State, will fail to discover control-endogenous latent dynamics of an Ex-BMDP, and proposes a method that provably succeeds for a very general class of  Ex-BMDP's. In particular, we identify two flaws with the AC-State method:
\begin{itemize}
    \item The maximum length of the segment $k$ between $x_t$ and $x_{t+k}$ required for multistep-inverse dynamics prediction in order to correctly learn the encoder can be much larger than claimed.
    \item If the dynamics are periodic, then multistep-inverse dynamics are insufficient for learning an appropriate encoder $\phi$, regardless of $k$.
\end{itemize}
We then propose a modified method, which we call \textbf{ACDF}, which fixes these issues. We show that any encoder which minimizes our loss function (on infinite samples) is \textit{guaranteed to be a control-endogenous latent representation}. Specifically, we:
\begin{itemize}
    \item Give a corrected formulation of the number of steps of multistep-inverse dynamics prediction required to learn an Ex-BMDP.
    \item Propose to use a \textit{latent forward dynamics loss}, to enforce that the learned endogenous states are in fact compatible with deterministic dynamics.
\end{itemize}
In addition to our theoretical claims, we show empirically that ACDF can produce a more accurate endogenous latent model in Ex-BMDPs exhibiting certain properties.

\section{Background and Motivating Example} \label{sec:background}
Here, we formally describe the Ex-BMDP model and AC-State algorithm, and provide a simple example in which AC-State fails:
\subsection{Ex-BMDP Model} \label{sec:exbmdp}
\begin{figure}[h]
\centering
\begin{tikzpicture}[node distance = 0.4cm and 1cm, minimum size = 1cm]
\node[draw=black,shape=circle](et){$e_t$};
\node[draw=black,shape=circle](etm)[left = of et]{$e_{t-1}$};
\node[draw=black,shape=circle](etp)[right = of et]{$e_{t+1}$};
\node[draw=none](etmm)[left = of etm]{...};
\node[draw=none](etpp)[right = of etp]{...};

\node[draw=black,shape=circle,fill=lightgray](xt)[below = of et, xshift=0.5cm]{$x_t$};
\node[draw=black,shape=circle,fill=lightgray](xtm)[left = of xt]{$x_{t-1}$};
\node[draw=black,shape=circle,fill=lightgray](xtp)[right = of xt]{$x_{t+1}$};

\node[draw=black,shape=rectangle](st)[below = of xt, xshift=-0.5cm]{$s_t$};
\node[draw=black,shape=rectangle](stm)[left = of st]{$s_{t-1}$};
\node[draw=black,shape=rectangle](stp)[right = of st]{$s_{t+1}$};
\node[draw=none](stmm)[left = of stm]{...};
\node[draw=none](stpp)[right = of stp]{...};

\node[draw=black,shape=circle,fill=lightgray](at)[below = of st, xshift=0.5cm]{$a_t$};
\node[draw=black,shape=circle,fill=lightgray](atm)[left = of at]{$a_{t-1}$};
\node[draw=black,shape=circle,fill=lightgray](atp)[right = of at]{$a_{t+1}$};
\node[draw=none](atmm)[left = of atm]{...};
\node[draw=none](atpp)[right = of atp]{...};

\path  (etmm) edge (etm);
\path  (etm) edge (et);
\path  (et) edge (etp);
\path  (etp) edge (etpp);

\path  (stmm) edge (stm);
\path  (stm) edge (st);
\path  (st) edge (stp);
\path  (stp) edge (stpp);

\path  (stm) edge (xtm);
\path  (st) edge (xt);
\path  (stp) edge (xtp);

\path  (etm) edge (xtm);
\path  (et) edge (xt);
\path  (etp) edge (xtp);

\path  (atmm) edge (stm);
\path  (atm) edge (st);
\path  (at) edge (stp);
\path  (atp) edge (stpp);

\end{tikzpicture}
\caption{Probabilistic graphical model of the Ex-BMDP transition dynamics, as described in Section \ref{sec:exbmdp}. Endogenous states $s_t$ are shown as squares to indicate that they are \textit{deterministic} functions of the previous endogenous states and actions. Observations $x_t$ and actions $a_t$ are shown in gray to indicate that they are observable. We do \textit{not} show dependencies that may determine the actions~$a_t$.}
\label{fig:exbmbmdpgraphical}
\end{figure}
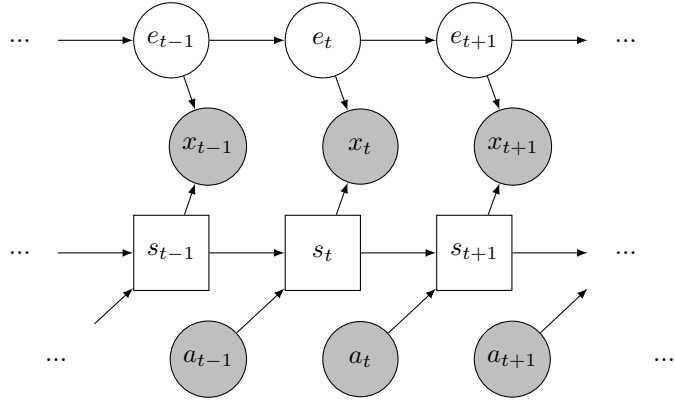
To formalize this notion of control-endogenous latent dynamics, consider a reward-free MDP with states $x \in \mathcal{X}$, discrete actions $a \in \mathcal{A}$, initial distribution $\mathcal{D}_0 \in \Delta(\mathcal{X})$,  and transition function $x_{t+1} \sim \mathcal{T}(x|x_t,a_t)$. The MDP admits a \textit{control endogenous latent representation} if its transition function can be decomposed as follows:
\begin{equation}
\begin{split}
    x_{t+1} &\sim \mathcal{Q}(x|s_{t+1},e_{t+1}),\\
    s_{t+1} &= T(s_{t}, a_{t}), \,\,\,\,\, s_t = \phi(x_t),\\
    e_{t+1} &\sim \mathcal{T}_e(e|e_{t}),\,\,\,\,\, e_t = \phi_e(x_t),\\
\end{split} \label{eq:def_exbmdp}
\end{equation}
where:
\begin{itemize}
    \item $s \in \mathcal{S}$ and $e \in \mathcal{E}$ are referred to as the \textit{control-endogenous} and \textit{control-exogenous} latent states, respectively. We assume that $\mathcal{S}$ is finite, and typically $ |\mathcal{S}| \ll |\mathcal{X}|$.\footnote{In our proofs, we also assume that $\mathcal{E}$ and $\mathcal{X}$ are finite, but this is a technical limitation of our theory: see discussion in Appendix \ref{sec:assumptions}. In any case, the Ex-BMDP formulation is most useful when $ |\mathcal{S}| \ll |\mathcal{X}|$.}
    \item The endogenous latent state $s$ evolves according to a \textit{deterministic} transition function $T$.
    \item The exogenous latent state $e$ evolves according to a Markovian transition function $\mathcal{T}_e$ that \textit{does not} depend on actions.
    \item We make the \textit{block assumption} on the observation emission function $\mathcal{Q}$ \citep{du2019provably}: that is, we assume that if $(s,e) \neq (s',e')$, then $\mathcal{Q}(x|s,e)$ and $\mathcal{Q}(x|s',e')$ have disjoint support. In other words, an observation $x \in \mathcal{X}$ corresponds to only a single pair $(s,e)$.
    \item The encoders $\phi,\phi_e$ are the (deterministic) inverses of $\mathcal{Q}$: that is, if $x \sim \mathcal{Q}(x|s,e)$, then $s= \phi(x),\,e=\phi_e(x)$.
\end{itemize}
An MDP that admits such a representation is known as an Ex-BMDP. Note that an Ex-BMDP can have \textit{multiple} valid factorizations into endogenous and exogenous states. (We discuss this fact further in Appendix \ref{sec:nonunique}.) In this work, our objective is to learn the encoder $\phi$ that is the endogenous state encoder for some valid factorization of the Ex-BMDP: specifically, we will aim to return a \textit{minimal-state} encoder: the number of endogenous states $|\mathcal{S}|$ should be as small as possible. We show a probabilistic graphical model of the Ex-BMDP transition dynamics in Figure \ref{fig:exbmbmdpgraphical}.
\subsection{AC-State} \label{sec:AC-State}
We now give more detail on the AC-State method as proposed by \cite{lamb2022guaranteed}. In addition to the Ex-BMDP formulation, that work makes the following further ``bounded diameter'' assumption:

\begin{assumption}[Assumption 3.1 from \cite{lamb2022guaranteed}] ``The length of the shortest path between any $z_1 \in \mathcal{S}$ to any $z_2 \in \mathcal{S}$  is bounded by $D$.''
\end{assumption}

\cite{lamb2022guaranteed} then proposes to learn the endogenous encoder $\phi_\theta$ by learning a \textit{multi-step  inverse dynamics model}. This model is a learned classifier $f(\phi_\theta(x_t),\phi_\theta(x_{t+k});k)$ which takes the encoded endogenous latent states of two observations separated by $k \leq D$ time-steps, as well as $k$, and returns a normalized distribution over \textit{actions} in $\mathcal{A}$. This classifier is trained to predict the first action $a_t$ taken on the trajectory from $x_t$ to  $x_{t+k}$.  The classifier $f$ is trained jointly with the encoder model. Then, in the theoretical treatment, the optimal encoder $\phi_{\theta^*}$ is defined as the encoder which allows $f$ to reach the minimum achievable value of this classification loss while also using the fewest number of distinct output states. (In practice, \cite{lamb2022guaranteed} uses a discrete information bottleneck and an associated loss term to minimize the output range, rather than learning multiple models.) Explicitly:
\begin{equation}
\begin{split}
    &\mathcal{L}_\text{AC-State}(\phi_\theta) := \min_f \mathop\mathbb{E}_{k \sim \{1,...,D\} }\mathop\mathbb{E}_{(x_t,a_t, x_{t+k})}  -\log(  f_{a_t}(\phi_\theta(x_t) ,\phi_\theta(x_{t+k}) ; k))\\
    &\{\theta\}^* := \{\theta^{**} | \theta^{**} = \arg\min_\theta \mathcal{L}_{\text{AC-State}}(\phi_\theta)\}\\
    &\theta^* := \arg\min _{\theta \in \{\theta\}^*} \|\text{Range}(\phi_\theta)\|\\
\end{split} \label{eq:multistep_inverse}
\end{equation}
where $f_{a_t}(\cdot)$ represents the probability assigned by $f$ to the action $a_t$. Given some assumptions about the behavioral policy (which are satisfied, for example, by a uniformly random policy), it is claimed that the $\phi_\theta$ which minimizes this loss (assuming perfect function approximation and large numbers of samples) will produce a control-endogenous latent representation of the MDP. The transition function $T(s,a)$ can then be inferred after-the-fact by applying the learned $\phi$ to all observed states $x$ and counting the transitions between the resulting latent states.

In practice, the diameter $D$ of the endogenous latent dynamics of the Ex-BMDP is unknown \textit{a priori}. Throughout this work, we will use $D$ to represent the true diameter of the endogenous latent dynamics\footnote{Specifically, the minimum diameter of \textit{any} endogenous representation which meets the assumptions in Appendix~\ref{sec:assumptions}.}, and $K$ to represent the number of steps actually used in practice. Based on the above assumption, AC-State is claimed by \cite{lamb2022guaranteed} to work as long as $K \geq D$.
\begin{figure}[h]
    \centering
    \includegraphics[width=\textwidth]{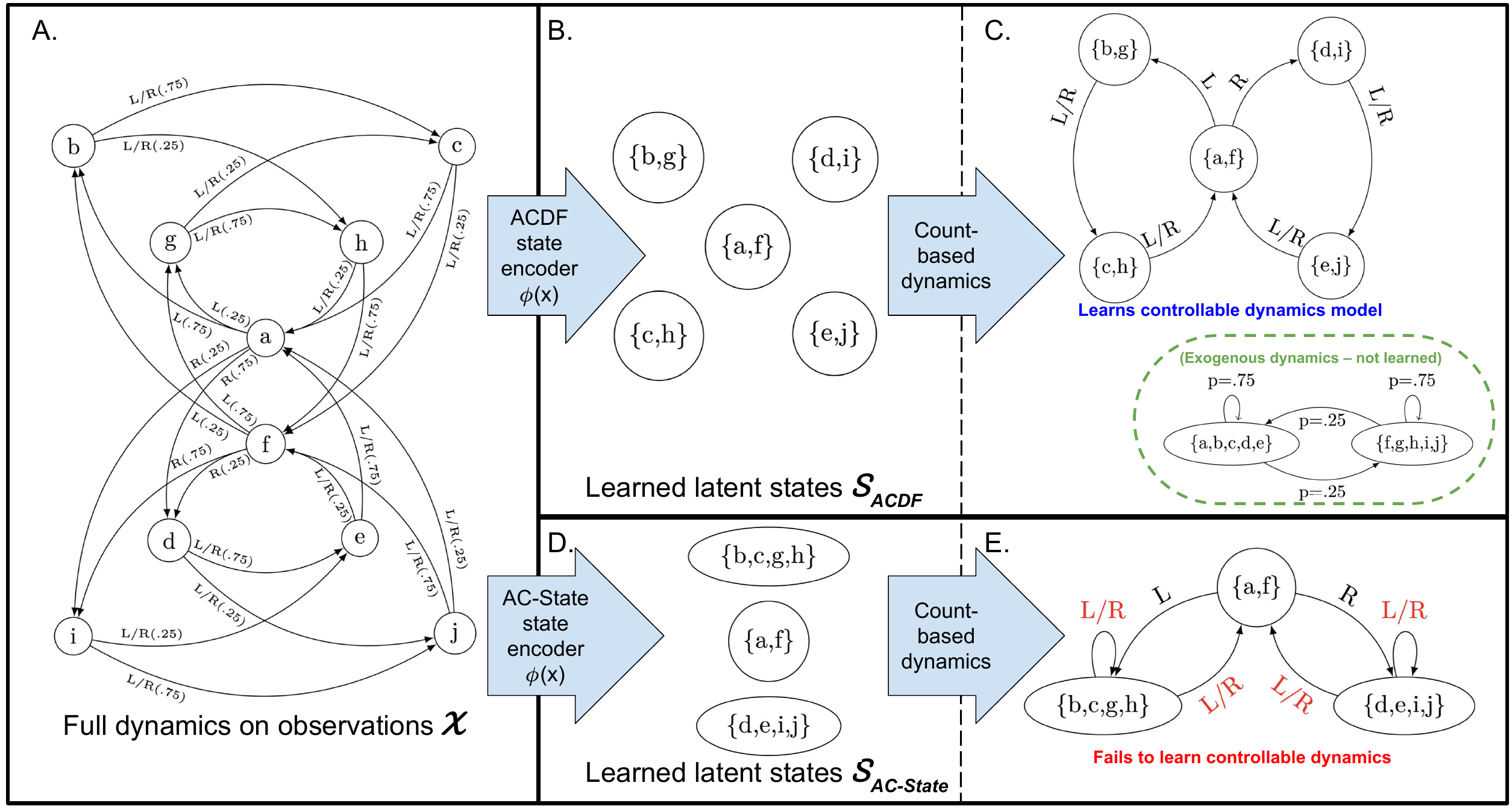}
    \caption{A tabular example where our proposed method ACDF successfully learns a control-endogenous state encoder, while the multistep-inverse method AC-State fails. (A) Full dynamics of the example Ex-BMDP: observed states are $\mathcal{X}=\{a,b...,j\}$ and actions are `L' and `R.' Transitions are stochastic: numbers in parentheses after action labels on transitions represent the probability of that transition, conditioned on the action.  (B) Encoded latent states $\phi(x) \in \mathcal{S}$, where $\phi$ is the encoder learned using our proposed method, ``ACDF.'' For example, $\phi$ maps the observed states $b$ and $g$ to the same latent state in $\mathcal{S}$. (C) Dynamics on the encoded latent states $\mathcal{S}$. The dynamics are deterministic, and capture the full agent-controllable factor of the state. Once $\phi$ is learned, these dynamics can be inferred from transition data by simple counting. The agent-\textit{independent} exogenous dynamics are shown in the inset: these dynamics are not learned by our method. (D) Encoded latent states produced by the encoder $\phi$ output by the AC-State algorithm \citep{lamb2022guaranteed}. (E)  The encoded latent states learned by AC-State are \textit{incorrect}: the encoding conflates states with different forward dynamics, resulting in under-determined transitions between latent states. }
    \label{fig:intro_figure}
\end{figure}

However, we can demonstrate a simple case where AC-State \textit{will not} recover a control-endogenous latent representation. In particular, consider the  Ex-BMDP shown in Figure \ref{fig:intro_figure}. If we focus our attention on the ``correct'' endogenous latent state dynamics shown in Figure \ref{fig:intro_figure}-C, we can note that:
\begin{itemize}
    \item We can't infer the action $a_t$ from $s_t$ and $s_{t+k}$ if $s_t$ is any state other than the state $s_t = \{a,j\}$, because for all other states in $\mathcal{S}_{\text{ACDF}}$, the actions L and R have the same effect.
    \item We can't infer $a_t$ given $s_t$ and $s_{t+k}$ for any $k > 2$, because, once $\{a,j\}$ is visited a second time, $a_t$ no longer has any impact on the current state.
    \item The only remaining case is that $s_t = \{a,j\}$ and $k\in \{1,2\}$. In this case, if $s_{t+k}$ is \textit{either} $\{b,g\}$ \textit{or}  $\{c,h\}$, then we know that $a_t =L$. Similarly, if $s_{t+k}$ is \textit{either} $\{d,i\}$ \textit{or}  $\{e,j\}$, then we know that $a_t =R$.
\end{itemize}
Note that there are \textit{no} cases where predicting $a_t$  requires distinguishing between the states $\{b,g\}$ and  $\{c,h\}$, or distinguishing between the states  $\{d,i\}$ and $\{e,j\}$. Therefore the optimal minimal-state AC-State encoder, the $\phi_{\theta^*}$ produced by Equation \ref{eq:multistep_inverse}, will distinguish between only three ``states'': $\{a,j\}$, $\{b,c,g,h\}$, and $\{d,e,i,j\}$.  However, these learned states do not constitute a control-endogenous latent representation of the MDP. In particular, the resulting transition function is not deterministic (See Figure \ref{fig:intro_figure}-E).
Consequently, the inferred control-endogenous state dynamics are not enough to predict, for example, whether taking the actions sequence $L,R,R,R$ starting in state $\{a,j\}$ will end at state $\{a,j\}$, state $\{b,c,g,h\}$, or state $\{d,e,i,j\}$.

This example demonstrates the non-universality of the multistep-inverse method at learning control-endogenous latent dynamics. In this work, we further develop the theory of endogenous latent dynamics, and demonstrate that combining the multistep-inverse method with a \textit{latent forward dynamics model} is in fact sufficient to learn a control-endogenous latent encoder.

\section{Guaranteed Learning of Control-Endogenous Dynamics}
In this section, we propose a modified loss function to replace $\mathcal{L}_\text{AC-State}$, for which we prove (in Appendix \ref{sec:ACDF}) that any minimum is a correct control-endogenous latent representation. Further, we show that the minimum-range $\phi_\theta$ which minimizes our loss function is a \textit{minimal-state} control-endogenous latent representation. We call our method \textbf{ACDF}, or \textbf{AC}-State+\textbf{D}'+\textbf{F}orward.
The loss function is given as follows:

\begin{equation}
\begin{split}
    \mathcal{L}_\text{ACDF}(\phi_\theta) := &\min_f \mathop\mathbb{E}_{k \sim \{1,...,D'\} }\mathop\mathbb{E}_{(x_t,a_t, x_{t+k})}  -\log( f_{a_t}(\phi_\theta(x_t) ,\phi_\theta(x_{t+k}) ; k))  \\
    +&\min_g \mathop\mathbb{E}_{(x_t,a_t, x_{t+1})}  -\log( g_{\phi_\theta(x_{t+1})}(\phi_\theta(x_t), a_t)).
\end{split}
\label{eq:acdf}
\end{equation}
Where, relative to $\mathcal{L}_\text{AC-State}$:
\begin{itemize}
    \item We have replaced the upper-bound on the control-endogenous diameter, $D$, with a new quantity $D'$, to be defined below. Note that if we are only given the diameter $D$, we can use an upper-bound $D' := 2D^2+D$, which is tight up to a constant multiple in the worst case.
    \item We have added a \textit{latent forward dynamics model} $g$ over the learned latent states. This model takes the encoded endogenous latent state of an observation $x_t$, and the action $a_t$, and returns a normalized probability distribution over the (discrete) encoded latent states. The model is trained to predict the next latent state $\phi_\theta(x_{t+1})$, which should be a deterministic function of the previous state and the chosen action, and is optimized jointly with $\phi_\theta$. 
\end{itemize}
We explain the logic behind these two modifications below:
\subsection{Why $D$ Steps is Insufficient for Multi-Step Dynamics} \label{sec:witness_dist}

We assume, as \cite{lamb2022guaranteed} does, that the Ex-BMDP admits a latent representation such that the control-endogenous dynamics have diameter upper-bounded by some $D$, as described in Section \ref{sec:intro}. In order to introduce our alternative bound $D'$, we first define a few quantities. For a given endogenous representation with states $\mathcal{S}$, we define the \textit{witness distance} $W(a,b)$ between states $a, b \in \mathcal{S}$ as the minimum number of steps $k$ such that there exists some \textit{witness state} $c \in \mathcal{S}$ such that a path of length \textit{exactly} $k$ exists from $c$ to $a$ and also from $c$ to $b$ (See Figure \ref{fig:witness_dist_figure}-A). Note that this quantity may be infinite even for a bounded-diameter graph, if the dynamics are periodic (in which case $W(a,b) = \infty$, see Figure \ref{fig:witness_dist_figure}-E). The \textit{witness distance of an endogenous representation} $W(\phi)$ is the \textit{maximum finite} $W(a,b)$, for any pair of states $a,b$ in the endogenous representation. Finally, the quantity $D'$ is defined as any upper bound on the witness distance $W(\phi)$.\footnote{More precisely, it is defined as any upper bound on the \textit{minimum value} of the witness distance $W(\phi)$ over the set of $\phi$'s that are minimum-$|S|$, finite-diameter endogenous latent representations of the Ex-BMDP. }

\begin{figure}
    \includegraphics[width=0.95\textwidth]{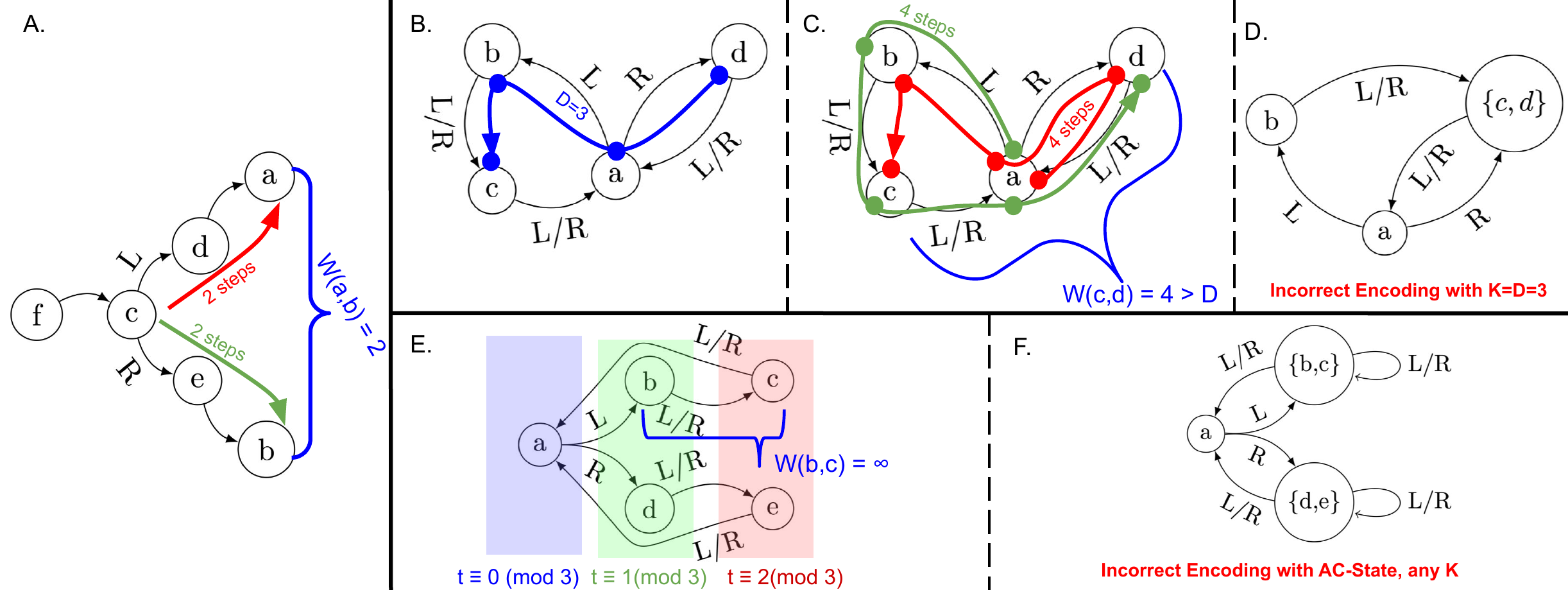}
    \caption{A. Example of the \textit{witness distance} $W(a,b)$. B-D. Witness distance can be greater than D, leading AC-State to fail. E-F. Witness distance can be \textit{infinite} if the dynamics are periodic, which also leads to AC-State failures. (See text of Section \ref{sec:witness_dist}.)}
    \label{fig:witness_dist_figure}
\end{figure}

The proofs in \cite{lamb2022guaranteed} implicitly assume that the witness distance $W(a,b)$ of any pair of states $a,b \in \mathcal{S}$ is upper-bounded by the diameter $D$ of the transition graph. To briefly sketch the main proof in \cite{lamb2022guaranteed}, let $W(a,b) = k \leq D$ with witness state $c$. Assume $s_t = c$ and  $s_{t+k} \in \{a,b\}$, and we wish to accurately predict $a_t$ given $x_{t}$ and $x_{t+k}$ (in order to minimize the $k$-step inverse loss, which is part of our loss function because $k \leq D$, by assumption). Being able to determine from $x_{t+k}$  whether  $s_{t+k} = a$ or $s_{t+k} =b$ is \textit{guaranteed to} help us make this prediction, because the sets of possible values that $a_t$ can take will be \textit{disjoint} depending on if  $s_{t+k} =a$ or $s_{t+k} =b$. (Otherwise, the witness distance would be most $k-1$, because $a$ and $b$ could both be reached from $s_{t+1}$, leading to a contradiction.)

 However, it is \textit{not} true in general that $W(a,b) \leq D$. Figure \ref{fig:witness_dist_figure}-B,C,D gives an example on a simple four-state graph, where $D=3$, but the largest witness distance $W(c,d) =4$. The AC-State loss with $K=D=3$ will learn an encoder that fails to distinguish all of the endogenous latent states. To analyse this example: we can't infer the action $a_t$ from $s_t$ and $s_{t+k}$ if $s_t$ is any state other than $a$, because for all other states, the actions L and R have the same effect. From $a$, the states reachable in exactly $k=1$ step are $\{b,d\}$; in $k=2$ steps are $\{c,a\}$; and in $k=3$ steps are $\{a,b,d\}$. In particular, when the classifier $f$ is provided with the value of $k \in \{1,2,3\}$, there is \textit{no need for the encoder to be able to distinguish} the state $c$ from the state $d$.

AC-State will therefore produce \textit{incorrect} learned endogenous latent dynamics for an Ex-BMDP with these true endogenous dynamics (Figure \ref{fig:witness_dist_figure}-D). This learned representation is incorrect because it is in fact \textit{controllable} whether the agent is in state $c$ or state $d$ four steps after being in state $a$ (as seen in \ref{fig:witness_dist_figure}-C), but the AC-State dynamics do not capture this controllablility.

It also may be \textit{impossible} to reach two states from the same state in exactly the same number of steps, leading to an \textit{infinite} witness distance. This case occurs if the endogenous dynamics are periodic. AC-State may then fail by conflating states belonging to different cyclic classes. (See Figure \ref{fig:witness_dist_figure}-E,F.)

\subsubsection{$D' := 2D^2+D$ is a Tight Upper-Bound}
In Appendix \ref{sec:d_bounds}, we show that if the witness distance $W(a,b)$ between any two states is finite, then it is upper-bounded by $D' := 2D^2+D$. Therefore, if all pairs of endogenous states have finite witness distance between them, and $D$ (or an upper-bound on $D$) is known, then the multistep-inverse loss with $K\geq 2D^2+D$ will be sufficient to distinguish all pairs of endogenous states, and hence learn a correct endogenous state encoder.

Furthermore, we show that this bound is tight up to a constant multiplicative factor. We explicitly construct Ex-BMDPs on which AC-State learns an incorrect state encoder for any $K <  D^2/2 +O(D)$, for an infinite sequence of arbitrarily-large values of $D$. If we use both the multistep-inverse loss and a latent forward-dynamics loss, as in the ACDF algorithm (Equation \ref{eq:acdf}), the particular family of  Ex-BMDPs  we use to derive this lower-bound no longer minimizes the loss. However, through an alternative construction, we can find Ex-BMDPs where the ACDF loss is minimized by an incorrect encoder if  $K <  D^2/4 +O(D)$. Therefore the upper bound  $D' := 2D^2+D$ is still tight for ACDF.

\subsection{Forward Latent Dynamics for Periodic Transition Functions}
The above discussion is applicable only to \textit{finite} witness distances. With bounded-diameter endogenous dynamics, we show in Appendix \ref{sec:d_bounds} that the witness distance between two states can be \textit{infinite}, if and only if the endogenous dynamics are \textit{periodic} (meaning that, for some period $p > 1$, each endogenous state $s\in \mathcal{S}$ can only be visited in time intervals that are multiples of $p$). Then AC-State may fail regardless of the number of steps K used in the multistep inverse dynamics. 

However, we show in Appendix \ref{sec:ACDF} that augmenting the multistep-inverse loss with a latent forward dynamics loss is sufficient to force the encoder to distinguish between states belonging to different cyclic classes of a periodic endogenous MDP. (In brief, either the states have different latent forward dynamics, or else they can be differentiated entirely by a cyclic \textit{exogenous} state factor, so the Ex-BMDP admits a more-minimal endogenous representation.) Thus, we prove that any $\phi$  which minimizes the ACDF loss is a correct endogenous latent state encoder.

\section{Experiments}
\label{sec:numerica}
\textbf{Numerical Simulation.} First, to capture the \textit{statistical} properties of our proposed method, without concerns about optimization or function approximation, we performed numerical simulations on tabular Ex-BMDPs. In these environments,  $|\mathcal{X}|$ is small enough that we can consider \textit{all possible} encoders $\phi$, and use count-based estimates for the classifier $f$. Consequently, the only source of error in minimizing $\mathcal{L}_{\text{AC-State}}$ or $\mathcal{L}_{\text{ACDF}}$ is the \textit{sampling error} caused by limited data collection. Full details of the experiments are included in Appendix \ref{sec:numerical_apdx}, and
results are shown in Figure \ref{fig:numeric_results}. In general, we see that the forward latent dynamics loss in $\mathcal{L}_{\text{ACDF}}$ not only enabled correct inference in periodic examples, but also lowered the value of $K$ necessary to learn the dynamics, and additionally yielded improved sample-efficiency even when it wasn't strictly necessary -- as in the ``control'' example.

\textbf{Deep Reinforcement Learning.} We also ran deep-RL experiments on two gridworld-like environments with image observations. The first ``baseline'' environment, from \cite{lamb2022guaranteed}'s released code, consists of nine copies of a four-room maze, with the ego-agent in one maze and random ``distractor'' agents in the others as exogenous noise. The second environment is constructed similarly, but with dynamics designed to be \textit{periodic}. Brief results for optimized hyperparameters are in Table \ref{tab:drl_results}, with more details and results in Appendix \ref{sec:DRL}. ACDF preserved performance on the baseline task while also more consistently learning a correct encoder for the periodic task.

\begin{figure}
    \centering
\includegraphics[width=0.93\textwidth]{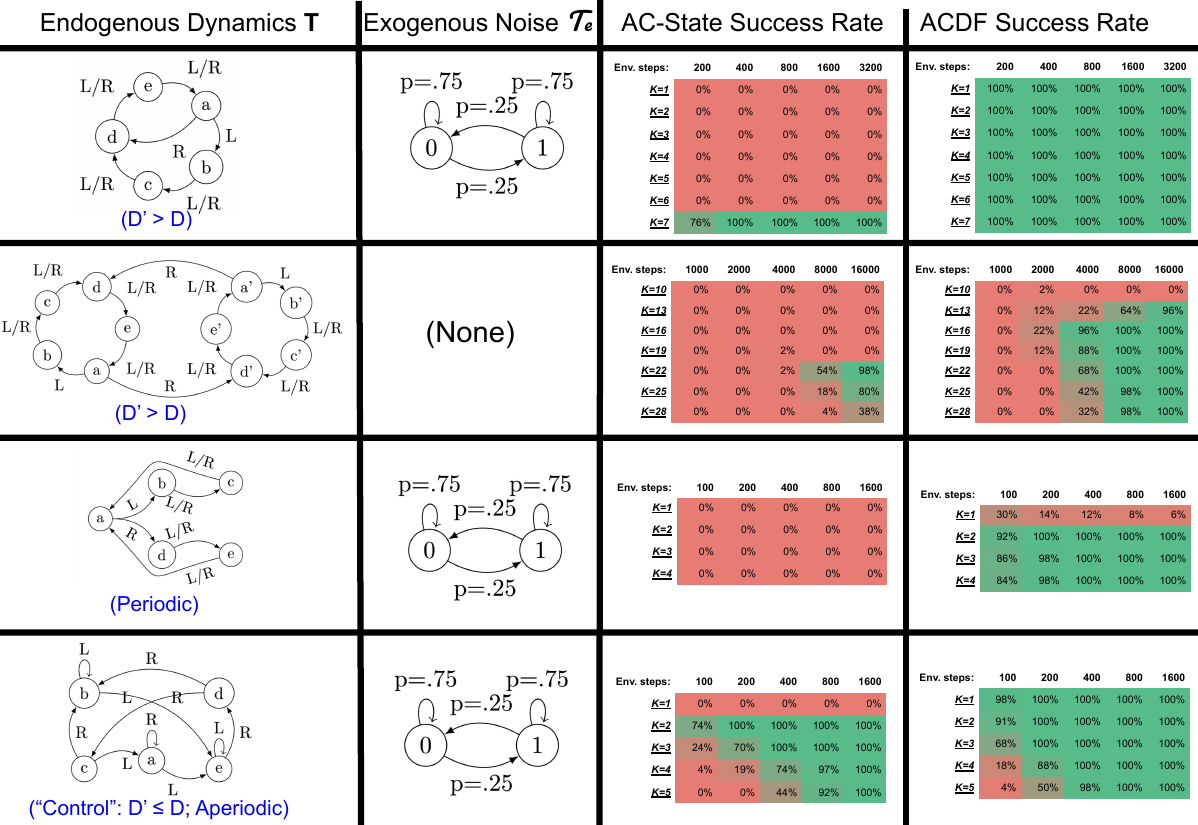}
    \caption{Results of numerical simulation experiments. Four environments are tested, with the dynamics given in the first two columns. For each environment, $|\mathcal{X}| = 10$, and $\mathcal{X}$ is isomorphic to  $\mathcal{S} \times \mathcal{E}$. In the last two columns, we show the success rate of each method (AC-State and ACDF) at learning the correct endogenous dynamics over 50 simulations. We show this success rate as a function of the hyperparameter $K$ and the number of environment steps used for learning.} 
    \label{fig:numeric_results}
\end{figure}
\begin{table}
    \centering
    \begin{tabular}{|c|c|c|c|c|}
    \hline
        &Baseline/AC-State &   Baseline/ACDF &  Periodic/AC-State &   Periodic/ACDF \\
        \hline
        Success Rate & 20/20 training runs& 20/20 '' ''& 1/20 '' ''& 19/20 '' ''\\
         \hline
    \end{tabular}
    \caption{Deep RL Results. Success measured as usability of the final $\phi$ for open-loop planning. }
    \label{tab:drl_results}
\end{table}

\section{Related Works} \label{sec:related_works}
In the area of learning latent representations for reinforcement learning problems, there are numerous ways of defining the ``purpose'' of the representation -- i.e., what information should ideally be included or excluded from the representation. Our work specifically focuses on the Ex-BMDP formulation given in Section \ref{sec:intro}, which was first proposed by \cite{efroni2022provably}, and later studied by \cite{lamb2022guaranteed}. Note that the setting considered by \cite{efroni2022provably} is \textit{time-inhomogeneous}: the Ex-BMDP is assumed to progress for a finite number of steps $H$ from a (near) deterministic control-endogenous start state, and the algorithm learns a \textit{different} state representation for \textit{each} time-step. Consequently,  number of states needed to represent the dynamics is potentially greatly increased, and this setting does not allow for generalization to long sequences, or encoding when the current time step is not known. In contrast, \cite{lamb2022guaranteed} and this work consider the infinite-time-horizon Ex-BMDP. Note that because AC-State is the only prior work to our knowledge that considers the infinite-time-horizon finite-$\mathcal{S}$ Ex-BMDP explicitly, we compare to it as our sole baseline in experiments. (While \cite{wang2022denoised} considers a similar formulation, with continuous latent states $\mathcal{S}$, the approach proposed in \cite{wang2022denoised} requires learning a \textit{generative model} of the \textit{complete} system state $\mathcal{X}$, explicitly modeling even the \textit{exogenous} state $\mathcal{E}$ of the system. \cite{kooi2023interpretable} also requires explicitly modeling both the ``controllable'' and ``uncontrollable'' latent dynamics, and also uses a somewhat different definition for how these factors relate than in the Ex-BMDP framework.) 

Other works present multi-step inverse methods learning for \textit{timestep-dependent} dynamics, including by \cite{mhammedi2023representation}, which allows for nondeterministic endogenous dynamics, and \cite{efroni2022sample}, which considers an explicitly factored state. Multi-step inverse methods have also been used in the continuous-state setting. \cite{mhammedi2020learning}  considers the special case of linear control-endogenous dynamics, and derives theoretical guarantees in this setting. More recently, \cite{islam2023principled} and \cite{koul2023pclast} have used multi-step inverse methods as an empirical technique for representation learning under continuous latent states. Note that while \cite{koul2023pclast} does learn a forward dynamics model, the forward-dynamics loss is not used to train the state encoder: it is only used for planning.

Aside from the Ex-BMDP framework and multi-step inverse methods,  other methods have been proposed to learn compact relevant state representations. \cite{efroni2022provably} discusses how several of these classes of methods will sometimes include control-exogenous noise into the latent representation, or fail to include control-endogenous information. Techniques such as Deep Bisimulation for Control \citep{zhang2020learning} and DeepMDP \citep{DBLP:conf/icml/GeladaKBNB19} ultimately rely on an external reward signal to determine what features are relevant for control, and so may fail to represent controllable aspects of the environment that do not affect the training reward. \cite{misra2020kinematic} demonstrates that one-step inverse dynamics, such as those used empirically by \cite{pathak2017curiosity}, are insufficient for learning control-endogenous dynamics. Techniques based on ``compressing'' states, e.g. through auto-encoders \citep{hafner2019learning} may also include exogenous information.

\cite{hutter2022uniqueness} is another work that examines the limitations of inverse models. However, it explores the conditions under which an inverse dynamics model on an MDP is sufficient to uniquely learn the transition function. It does not consider endogenous state representations or the ``block'' setting. Note that the problem of learning an endogenous encoder is somewhat ``easier'' than learning a forward dynamics model: in the setting we consider, the forward dynamics on the endogenous states are to be learned directly from samples of the transition function \textit{after} learning the encoder. By contrast, \cite{hutter2022uniqueness} considers the problem of inferring the transition function directly from the inverse dynamics \textit{alone} -- the conditions in which these tasks are possible may differ. 

\section{Limitations and Future Work}
One limitation to our work is that the cases where  AC-State fails are in some sense ``edge cases.'' In particular, if the endogenous dynamics have even a single state with a transition to itself, then AC-State with $K=D$ should succeed (see  Appendix \ref{sec:self_edges}.) However, in many real-world environments, staying in place forever is not possible, so ACDF may still be useful. In addition, we have shown that adding a forward dynamics loss can improve sample efficiency and reduce the dependence on $K$, even when ACDF is not strictly necessary. A general limitation of the Ex-BMDP model is that it assumes full  observation of the state. However, \cite{wu2023agentcentric} has extended AC-State to partially-observed environments: this extension can be straightforwardly adapted to ACDF. In Appendix \ref{sec:alt_fixes}, we discuss some alternative approaches we initially considered for ``fixing'' AC-State, which we determine do not work generally; however, this discussion may provide inspiration for future algorithms. 

\section*{Acknowledgements}
A portion of this research has taken place in the Learning Agents Research Group (LARG) at UT Austin. LARG research is supported in part by NSF (FAIN-2019844, NRT-2125858), ONR (N00014-18- 2243), ARO (E2061621), Bosch, Lockheed Martin, and UT Austin’s Good Systems grand challenge. Peter Stone serves as the Executive Director of Sony AI America and receives financial compensation for this work. The terms of this arrangement have been reviewed and approved by the University of Texas at Austin in accordance with its policy on objectivity in research. Alexander Levine is supported by the NSF Institute for Foundations of Machine Learning. We thank Alex Lamb for releasing the code for the original AC-State paper \citep{lamb2022guaranteed} at our request.
\bibliography{main}
\bibliographystyle{rlc}

\newpage
\appendix

\section{A Note on Terminology}
Due to assumptions (1), (4), and (5) in the technical assumptions section below (Section \ref{sec:assumptions}), when discussing MDPs (including Ex-BMDPs) in these appendices, we will most often be referring to finite MDPs under stationary behavior policies that assign nonzero probability to each possible action. As a consequence, properties of the induced Markov chain like periodicity, irreducibly, cyclic classes, etc. will be invariant to the particular choice of behavior policy, and will only depend on the structure of the MDP. Therefore, when unambiguous, we will sometimes refer to, for instance,``the periodicity of the MDP'', as shorthand to mean ``the periodicity of the induced Markov chain of the MDP under any policy that assigns nonzero probability to each action.'' (We avoid this shorthand in major theorem statements, such as the statement of Theorem \ref{thm:finite_wd}.)

\section{Technical Assumptions} \label{sec:assumptions}
In this section, we discuss the technical assumptions we make on the Ex-BMDP model, the behavioral policy, and on the collection of data. We also note where these assumptions differ from those of \cite{lamb2022guaranteed}.
Recall that, by the definition of the Ex-BMDP, there must exist \textit{at least one} endogenous state representation, which we call $s^* \in \mathcal{S}^*$, with deterministic dynamics $T^*$ and a corresponding exogenous state representation, which we will call $e^* \in \mathcal{E}^*$. Let the endogenous encoder $\phi^*$  map $\mathcal{X}$ to $\mathcal{S}^*$, and exogenous encoder $\phi_e^*$  map $\mathcal{X}$ to $\mathcal{E}^*$. Specifically, we will use these symbols to refer to some correct endogenous state representation which has minimal $|\mathcal{S}|$ among the set of correct endogenous state representations.

While this representation is not necessarily unique (as we show in Appendix \ref{sec:nonunique}) it will still be useful to refer to it. In particular, some of our assumptions are in terms of this representation. When we refer to an assumption about $\mathcal{S^*}$,  $\mathcal{E^*}$, etc., unless otherwise specified, we mean that there exists at least one minimal-endogenous-state decomposition of the Ex-BMDP for which all of these assumptions simultaneously hold.

We first give a brief statement of our assumptions, and then re-state them with more thorough discussion.

\textbf{Brief Assumptions:}
\begin{enumerate}
    \item In the proofs of the correctness of ACDF, we assume that \textbf{$\mathcal{X}$ is finite.} 
    \item We assume that \textbf{for some correct minimal-state endogenous encoder $\phi^*$, the endogenous latent dynamics $T^*$ have diameter bounded by $D$}.
    \item We assume that, for the exogenous encoder $\phi^*_e$ corresponding to the above $\phi^*$, \textbf{there are no transient exogenous latent states in $\mathcal{E^*}$}.
    \item We assume that, for the above-mentioned endogenous encoder $\phi^*$, \textbf{the behavioral policy $\pi(x)$ used to collect data depends only on $\phi^*(x)$, and ignores exogenous noise $\mathcal{E}^*$.} 
    \item \textbf{Coverage assumptions:}  \textbf{ For $k \leq D', x,x' \in \mathcal{X},a\in\mathcal{A}$, if $x'$ is reachable from $x,a$ in exactly $k$ steps, then we assume that we will sample $(x_t = x, a_t = a, x_{t+k} = x')$ with fixed, finite, nonzero probability.} Let $\mathcal{D}_{(k)}$ be the  distribution with this property from which $(x_t = x, a_t = a, x_{t+k} = x')$ are sampled when computing the expectation in Equation \ref{eq:acdf}.
    \end{enumerate}
We now discuss each of these assumptions in more detail:
\begin{enumerate}
    \item In the proofs of the correctness of ACDF, we assume that \textbf{$\mathcal{X}$ is finite.} This assumption is necessary because otherwise, some observations $x\in \mathcal{X}$  occur with infinitesimal probability, so an incorrect encoding $\phi(x)$ on such observations would not have a finite effect on the overall loss. (However, in practice, $\phi$ is a neural network with the capacity to generalize to new observations in a continuous space: this requirement should be thought of as more of a technical limitation of our proofs rather than a real constraint.) 
    \item Similar to the ``bounded diameter'' assumption of \cite{lamb2022guaranteed}, we assume that \textbf{for some correct minimal-state endogenous encoder $\phi^*$, the endogenous latent dynamics $T^*$ have diameter bounded by $D$}.
    \item We assume that, for the exogenous encoder $\phi^*_e$ corresponding to the above $\phi^*$, \textbf{there are no transient endogenous latent states in $\mathcal{E^*}$}. This assumption is necessary because it might not be possible to uniquely determine the endogenous state of an observation if the observation's exogenous state only occurs in the first few steps of a trajectory. (For example, if two endogenous states have the same forward dynamics, it would be impossible to uniquely assign an observation $x$ to one state or the other, if $\phi^*_e(x)$ only occurs at $t=0$.) Note that as a consequence of this assumption, the states of $\mathcal{E}^*$ can be partitioned into some number of closed, recurrent communicating classes, which each may be periodic or aperiodic. Additionally, because $\mathcal{X}$ is finite, we can assume that $\mathcal{E^*}$ is also finite. 
    \item We assume that, for the above-mentioned endogenous encoder $\phi^*$, \textbf{the behavioral policy $\pi(x)$ used to collect data depends only on $\phi^*(x)$, and ignores exogenous noise $\mathcal{E}^*$.} \cite{lamb2022guaranteed} also makes this assumption explicitly.  While this assumption may seem difficult to meet, because it seemingly requires prior knowledge of $\phi^*$, we note that a policy that takes actions in $\mathcal{A}$ according to a fixed distribution at all time-steps (such as a \textbf{uniform random policy}) will meet this requirement.
    \item \textbf{Coverage/Initial Distribution assumptions:} We only provide an asymptotic, rather than statistical, analysis. In other words, our results are only proven to hold for the \textit{population} expectation in Equation \ref{eq:acdf}, which occurs in the limit as samples approach infinity. However, me must still be explicit about the distribution from which we draw tuples $(x_t,a_t, x_{t+k})$ (and $(x_t,a_t, x_{t+1})$). To summarize, \textbf{ for $k \leq D', x,x' \in \mathcal{X},a\in\mathcal{A}$, if $x'$ is reachable from $x,a$ in exactly $k$ steps, then we assume that we will sample $(x_t = x, a_t = a, x_{t+k} = x')$ with fixed, finite, nonzero probability.} We discuss some implications below:
    \begin{itemize}
        \item \textbf{Unlike \cite{lamb2022guaranteed}, we do not assume a single trajectory}. \cite{lamb2022guaranteed} at one point mentions that data is assumed to be collected in a single, long trajectory, which follows dynamics that have a stationary distribution which assigns finite probability to all \textit{endogenous} states. However, such a sampling process does not necessarily give adequate coverage, because neither we nor \cite{lamb2022guaranteed} explicitly assume that the \textit{overall} dynamics $\mathcal{T}$ on the observations $\mathcal{X}$ are irreducible. In particular, $\mathcal{T}$ may be reducible if either the exogenous dynamics $\mathcal{T}^*_e$ are reducible, or if both $\mathcal{T}^*_e$ and the endogenous dynamics are periodic with the same period (a case we discuss further below). \textbf{We therefore assume at least one trajectory for each communicating class in $\mathcal{X}$.} Because  $\mathcal{X}$ is assumed finite, we can assume a finite number of these communicating classes. When we consider the ``infinite sample'' limit, we can assume that the length of \textit{each} of these trajectories goes to infinity (and therefore the time-averaged state visitation approaches a stationary distribution).
        \item Because (under any policy that assigns nonzero probability to every action) the endogenous dynamics are irreducible while the exogenous dynamics only have recurrent communicating classes,  it follows that each communicating class of $\mathcal{X}$  is finite, closed and recurrent. Therefore \textbf{any stationary policy that assigns nonzero probability to each action while in each endogenous state $s^* \in \mathcal{S^*}$ will, averaging over time, approach a stationary distribution assigning finite, nonzero probability to each $x$ in the communicating class in $\mathcal{X}$ that the chain started in.} Because chains will reach each $x$ with finite, nonzero probability and the execute each sequence of actions of finite length with finite probability, they \textbf{will therefore sample each possible $(x_t = x, a_t = a, x_{t+k} = x')$ with fixed finite probability. Any collection of such chains is a therefore valid data-collection mechanism.}
        \item \textbf{Unlike \cite{lamb2022guaranteed}, we do not assume that the initial exogenous and endogenous states are independent.} \cite{lamb2022guaranteed} assumes that the initial distribution of the Ex-BMDP $\mathcal{D}_0$ is such that $\phi^*(x_0)$ and $\phi_e^*(x_0)$ are distributed independently. 
        This assumption seems overly-strict, given that, as discussed above, \cite{ lamb2022guaranteed} \textit{also} assumes that collecting a single trajectory is sufficient. (If a single episode is sufficient to learn the dynamics, how can the \textit{distribution} of $x_0$ possibly matter?)
        We do \textit{not} make this independence assumption, rather, \textbf{we define $\mathcal{D}_0$ to be a distribution over $\mathcal{X}$ directly}.\footnote{We do, however, assume that  $x_0$ is in fact in the support of $Q^*(\phi^*(x_0), \phi_e^*(x_0))$; in other words, that any observation $x$ in the support of $\mathcal{D}_0$ is also in the support of $Q^*(s^*,e^*)$ for \textit{some} choice of $(s^*,e^*)$.} A particular consequence of \textit{not} making this independence assumption is that \textbf{we do not assume that all pairs ($s^*,e^*$) necessarily correspond to an observation in $\mathcal{X}$}. In other words, \textbf{we assume coverage over $\mathcal{X}$, not over $\mathcal{S}^* \times \mathcal{E}^*$.} This consequence comes into play when both the endogenous and exogenous dynamics are periodic, with the same period. We discuss the implications in Section \ref{sec:no_indep} below.
        \item \textbf{A note about periodicity in \cite{lamb2022guaranteed}:} One of the major claims of this paper is that \cite{lamb2022guaranteed} incorrectly handles Ex-BMDPs with periodic endogenous dynamics. Note that \cite{lamb2022guaranteed}  makes the explicit assumption that ``Markov chain $\mathcal{T}_D$ has a stationary distribution $\mu_D$ such that $\mu_D (s, a) > 0$ and $\pi_D (a | s) \geq \pi_\text{min}$, for all $s\in \mathcal{S}$ and $a\in \mathcal{A}$'', where ``$\mathcal{T}_D(s' | s)$ [is] the Markov chain induced on the control-endogenous state space by executing the policy $\pi_D$ by which AC-State collects the data.'' We wish to emphasise that this assumption \textit{should not} preclude Ex-BMDPs with periodic dynamics. In particular, note that periodic Markov chains (like $\mathcal{T}_D$ would be in the case of periodic dynamics) can indeed have stationary distributions that  assign nonzero probabilities to all states. Recall that a stationary distribution is merely any distribution $\mu$ for which $ \mu P = \mu$, where $P$ is the transition matrix. The existence of such a distribution is a property only of the transition function $P$, not the initial distribution. Therefore, even if a periodic Markov chain has a fixed initial state $x_0$, and therefore the distribution of the random variable $x_t$ (for any fixed $t$) will only have nonzero probability on states of one cyclic class, it is \textit{still the case} that the Markov chain can have a \textit{stationary distribution} that assigns probability to every state. Therefore \cite{lamb2022guaranteed} is not ``off the hook'' for periodic Ex-BMDPs. (Moreover, the periodicity or aperiodicity is never discussed explicitly in \cite{lamb2022guaranteed}, and the lines quoted above are part of a discussion about the behavioral policy $\pi_D$, \textit{not} about the structure of the Ex-BMDP. Therefore it is doubtful this assumption was intended in any way to exclude periodic dynamics.)

    \end{itemize}
    
    \end{enumerate}

\subsection{Implications of not assuming independence in initial exogenous and endogenous states} 
\label{sec:no_indep}
Above, we mentioned that, unlike \cite{lamb2022guaranteed}, we do not assume that the initial exogenous and endogenous states are distributed independently. This fact matters when both the endogenous and exogenous dynamics are periodic, with the same period. Assume that this period is $k$, and the exogenous dynamics are irreducible, such that $\mathcal{S}^*$ has cyclic classes $\{\mathcal{S}_0^*,...,\mathcal{S}_{k-1}^*\}$ and $\mathcal{E}^*$ has cyclic classes $\{\mathcal{E}_0^*,...,\mathcal{E}_{k-1}^*\}$. If the support of $\mathcal{D}_0$ only includes states $x_0$ such that, for example $(\phi^*(x_0), \phi^*_e(x_0))) \in \mathcal{S}^*_0 \times  \mathcal{E}^*_1 $, then the only states which will \textit{ever} be reachable will be states $x$ such that $(\phi^*(x), \phi^*_e(x)) \in \mathcal{S}^*_0 \times  \mathcal{E}^*_1  \cup \mathcal{S}^*_1 \times  \mathcal{E}^*_2   \cup ... \cup  \mathcal{S}^*_{k-1} \times  \mathcal{E}^*_0$. Because $\mathcal{X}$ is defined as the set of states that the Ex-BMDP can be in, it is therefore the case that $\forall x \in \mathcal{X}$, $(\phi^*(x), \phi^*_e(x)) \in \mathcal{S}^*_0 \times  \mathcal{E}^*_1  \cup \mathcal{S}^*_1 \times  \mathcal{E}^*_2   \cup ... \cup  \mathcal{S}^*_{k-1} \times  \mathcal{E}^*_0$. Note that in this case, $\mathcal{X}$ is \textit{not} equivalent to the union of the supports of $\mathcal{Q^*}(s^*,e^*)$, for all $s^*\in \mathcal{S^*} $ and  all $e^*\in \mathcal{E^*} $: in fact, in this case, $\mathcal{Q^*}(s^*,e^*)$ is not even well-defined for cases where $s^*$ and $e^*$ cannot co-occur. 

(Also, note that the \textit{initial distribution of the Ex-BMDP $\mathcal{D}_0$} is a distinct concept from the \textit{initial states used in the sampling trajectories.} The former is a property of the Ex-BMDP itself; the latter is a property of our algorithm.)

This difference in assumptions is important because there exist some Ex-BMDPs where a decomposition with \textit{independent} $(s_0,e_0)$ \textit{exists}, but a  more-minimal endogenous representation is possible if we allow for arbitrary initial distributions over $\mathcal{X}$. The ACDF algorithm will find these more-minimal representations. Explicitly, consider the following deterministic Ex-BMDP with $|\mathcal{X}| = 10$ and $\mathcal{A} = \{L,R\}$:

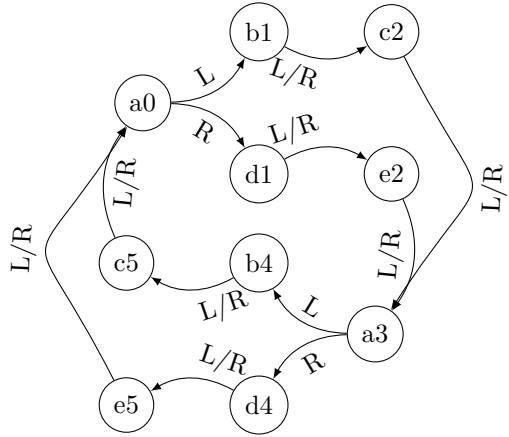
\begin{figure}[h!]
    \centering
\begin{tikzpicture}[node distance = 0.4cm and 1cm]
\node[draw=black,shape=circle](a){a0};
\node[draw=black,shape=circle](b)[above right = of a] {b1};
\node[draw=black,shape=circle](d)[below right = of a] {d1};
\node[draw=black,shape=circle](c)[right = of b] {c2};
\node[draw=black,shape=circle](e)[right = of d] {e2};
\node[draw=black,shape=circle](b')[below = of d] {b4};
\node[draw=black,shape=circle](a')[below right = of b']{a3};
\node[draw=black,shape=circle](d')[below left  = of a'] {d4};
\node[draw=black,shape=circle](c')[left = of b']{c5};
\node[draw=black,shape=circle](e')[ left = of d'] {e5};
\path  (a) edge [bend right]["L",sloped](b);
\path  (a) edge [bend left]["R",swap,sloped](d);
\path  (b) edge [bend right]["L/R",swap,sloped,pos=0.2](c);
\path  (d) edge [bend left]["L/R",sloped,pos=0.2](e);
\path  (c) edge [bend left]["L/R",sloped,looseness=2,pos=0.5, swap](a');
\path  (e) edge [bend left]["L/R",sloped](a');

\path  (a') edge [bend left]["L",sloped](b');
\path  (a') edge [bend right]["R",swap,sloped](d');
\path  (b') edge [bend left]["L/R",swap,sloped,pos=0.2](c');
\path  (d') edge [bend right]["L/R",sloped,pos=0.2](e');

\path  (e') edge [bend left]["L/R",sloped,looseness=2,pos=0.5](a);
\path  (c') edge [bend left]["L/R",sloped,swap](a);
\end{tikzpicture}
\caption{Full Ex-BMDP model of example in Section \ref{sec:no_indep}}. \label{fig:no_initial_independence}
\end{figure}

For this Ex-BMDP, let the initial state distribution $\mathcal{D}_0$ take value $x_0 = b1$ with probability 0.5, and $x_0 = e2$ with probability 0.5.

The ACDF algorithm will learn the  minimal-state endogenous representation of the Ex-BMDP shown in Figure \ref{fig:no_initial_independence_2}, with $|\mathcal{S} |= 5$.

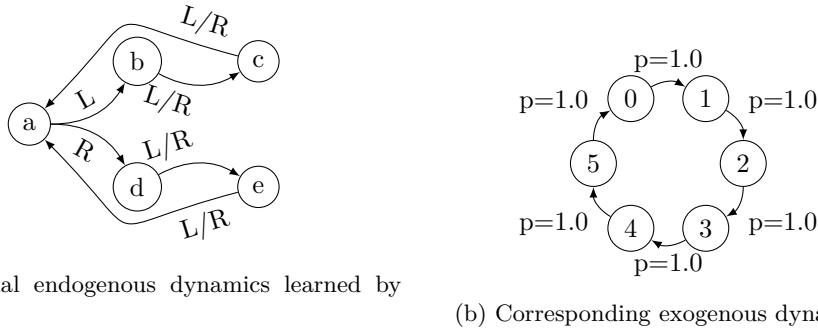
\begin{figure}[h!]
\centering
\begin{subfigure}{0.45\textwidth}
\centering
\begin{tikzpicture}[node distance = 0.4cm and 1cm]
\node[draw=black,shape=circle](a){a};
\node[draw=black,shape=circle](b)[above right = of a] {b};
\node[draw=black,shape=circle](d)[below right = of a] {d};
\node[draw=black,shape=circle](c)[right = of b] {c};
\node[draw=black,shape=circle](e)[right = of d] {e};
\path  (a) edge [bend right]["L",sloped](b);
\path  (a) edge [bend left]["R",swap,sloped](d);
\path  (b) edge [bend right]["L/R",swap,sloped,pos=0.2](c);
\path  (d) edge [bend left]["L/R",sloped,pos=0.2](e);
\path  (c) edge [bend right]["L/R",sloped,looseness=2,pos=0.1](a);
\path  (e) edge [bend left]["L/R",swap,sloped,looseness=2,pos=0.1](a);

\end{tikzpicture}
    \caption{Minimal endogenous dynamics learned by ACDF.}
\end{subfigure}
\centering
\begin{subfigure}{0.45\textwidth}
\centering
\begin{tikzpicture}
  \foreach \x in {0,...,5} {
      \pgfmathsetmacro{\angle}{(\x+1)*360/6}
    \pgfmathsetmacro{\xcoord}{-0.035cm*cos(\angle)}
    \pgfmathsetmacro{\ycoord}{0.035cm*sin(\angle)}
    \node[circle, draw, minimum size=0.5cm] at  (\xcoord,\ycoord) (nodec\x) {\x};}

\path  (nodec1) edge [bend left]["p=1.0"](nodec2);
\path  (nodec2) edge [bend left]["p=1.0"](nodec3);
\path  (nodec3) edge [bend left]["p=1.0"](nodec4);
\path  (nodec4) edge [bend left]["p=1.0"](nodec5);
\path  (nodec5) edge [bend left]["p=1.0"](nodec0);
\path  (nodec0) edge [bend left]["p=1.0"](nodec1);

\end{tikzpicture}
    \caption{Corresponding exogenous dynamics.}
\end{subfigure}
    \caption{Decomposition of the Ex-BMDP in Figure \ref{fig:no_initial_independence} that would be learned by the ACDF algorithm. As usual, the ACDF algorithm learns an encoder for the endogenous state  alone (left); the exogenous dynamics are implicit. (The observation $x$ emitted by latent states $(s,e)$ is given by concatenating the labels of $s$ and $e$; for example state $x=$ `c5' is reached when $s =$ `c' and $e =$ `5'.) }  \label{fig:no_initial_independence_2}
\end{figure}
This is a minimal endogenous latent representation, and one can confirm that the endogenous and exogenous latent dynamics together are equivalent to the dynamics shown in Figure \ref{fig:no_initial_independence} on any trajectory that starts on $\mathcal{D}_0$. However, under this decomposition, the initial state distribution $\mathcal{D}_0$ corresponds to a non-independent joint distribution on $\mathcal{S}$ and $\mathcal{E}$: it assigns probability 0.5 to the pair $(s_0=`b',e_0=`1')$ and probability 0.5 to the pair $(s_0=`e',e_0=`2')$. 

By contrast, consider the \textit{trivial} encoder $\phi'(x) = x$, where $\mathcal{S'} = \mathcal{X}$ and $\mathcal{E'}$ consists of a single state with a self-edge. This is also a valid control-endogenous representation (the transitions on $\mathcal{S'}$ are deterministic, and the (single-state) exogenous dynamics do not depend on actions.) Moreover, $s'_0$ and $e'_0$ are independent in  $\mathcal{D}_0$ under this decomposition. Therefore, if we were to accept  independent initial-state distribution as a \textit{defining feature} of the Ex-BMDP framework, we would be forced to return this trivial encoder with $|\mathcal{X}| = 10$, as the only valid Ex-BMDP decomposition.

We reject this assumption, because we believe that the more-minimal encoder returned by ACDF better captures an intuitive notion of \textit{controllability}: an agent has no control over whether they are in state `a0' or `a3' for instance. Further, while it might seem like the fact that $|\mathcal{S} \times \mathcal{E}|> |\mathcal{X}|$ means that our returned encoding is \textit{redundant}, recall that \textit{only the encoding of $\mathcal{S}$ is actually learned.}  In fact, loosening the definition of the Ex-BMDP model in this way can only lead to more concise learned representations (because it can only increase the set of valid decompositions of the Ex-BMDP).

Lastly, we note that, while \cite{lamb2022guaranteed} may \textit{state} an assumption of decoupled initial exogenous and endogenous states, AC-State also does not enforce in any way that the returned representation will have independent exogenous and endogenous states, even if such a representation is possible. (In fact, on this particular example, AC-State will return an \textit{incorrect} encoding consisting of 3 states with nondeterministic dynamics).
\subsubsection{Implications for the structure of $\mathcal{X}$}
As discussed above, our lack of an independence assumption on $s_0,e_0$ means that $\mathcal{X}$ may not contain elements corresponding to all pairs $(s,e)$, for $s \in \mathcal{S}$, $e \in \mathcal{E}$. However, we prove the following lemma which will be useful in showing when such an $(s,e)$ pair does correspond to some element in $\mathcal{X}$.
\begin{lemma}
    Consider any policy on $\mathcal{S}^*$ that assigns nonzero probability to all actions (i.e., any valid behavioral policy). Let $s, s' \in \mathcal{S}^*$ and  $e, e' \in \mathcal{E}^*$. If $(s',e')$ is reachable from $(s,e)$, then $(s,e)$ is reachable from $(s',e')$. Consequentially, if $(s',e')$ is reachable from $(s,e)$, and $(s',e')$ corresponds to an observation in $\mathcal{X}$, then  $(s,e)$ also corresponds to an observation in $\mathcal{X}$. \label{lemma:reachability}
\end{lemma}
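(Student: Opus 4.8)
The key structural fact I would exploit is that the induced Markov chain on $\mathcal{S}^* \times \mathcal{E}^*$ (under any behavioral policy assigning nonzero probability to every action) decomposes into closed recurrent communicating classes: by Assumption 3 the exogenous dynamics $\mathcal{T}_e^*$ have no transient states, and by Assumption 4 the behavioral policy depends only on $\phi^*$, so actions and exogenous noise are conditionally independent given $s^*$. Since $\mathcal{S}^*$ is finite and irreducible (it has bounded diameter $D$, hence is strongly connected as a directed graph) while $\mathcal{E}^*$ has only recurrent classes, the product chain's communicating classes are all closed and recurrent. The statement ``if $(s',e')$ is reachable from $(s,e)$ then $(s,e)$ is reachable from $(s',e')$'' is then exactly the statement that reachability within a closed recurrent communicating class is symmetric.

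Concretely, I would argue as follows. First, fix any behavioral policy $\pi$ on $\mathcal{S}^*$ with $\pi(a\mid s) > 0$ for all $s,a$, and consider the induced Markov chain on the finite state space $\mathcal{S}^* \times \mathcal{E}^*$. I claim this chain has no transient states. Indeed, suppose $(s,e)$ is transient; then from $(s,e)$ the chain reaches, with probability one, some closed recurrent class $C$, say via a state $(s',e')\in C$. Project onto the exogenous coordinate: the $\mathcal{E}^*$-marginal evolves as an autonomous Markov chain $\mathcal{T}_e^*$ which, by Assumption 3, has no transient states, so $e$ lies in a recurrent $\mathcal{E}^*$-class, and $e'$ is in the same recurrent $\mathcal{E}^*$-class (since $e'$ is reachable from $e$), hence $e$ is reachable from $e'$ in $t$ steps for some $t$ with $\Pr[\,e_t = e \mid e_0 = e'\,] > 0$ in fact for $t$ in an arithmetic progression. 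Project onto the endogenous coordinate: $\mathcal{S}^*$ is strongly connected (diameter $D < \infty$), so from $s'$ there is a path of \emph{every} sufficiently large length back to $s$ — more carefully, from $s'$ there is a path back to $s$ of length $\ell$, and we may pad it with detours to realize any length $\ell + m\,\gcd(\text{cycle lengths through }s)$. The only subtlety is matching the step counts in the two coordinates; here I use that because both coordinates advance in lockstep, I need a common step count $t$ realizing $e' \to e$ in the exogenous chain \emph{and} $s' \to s$ in the endogenous graph simultaneously, which exists by the Chinese-remainder-style argument on the two periods (or trivially if either is aperiodic). This gives a positive-probability path $(s',e') \to (s,e)$, contradicting that $(s,e)$ is transient and hence outside $C$'s class while $(s',e')\in C$. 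So the product chain has no transient states, meaning every state lies in a closed recurrent communicating class, within which reachability is symmetric — proving the first sentence.

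For the second sentence: if $(s',e')$ is reachable from $(s,e)$, then by the first part $(s,e)$ is reachable from $(s',e')$, i.e.\ there is a positive-probability trajectory from $(s',e')$ to $(s,e)$. Now if $(s',e')$ corresponds to an observation in $\mathcal{X}$, then (by the coverage/reachability setup in Assumption 5 and the fact that $\mathcal{X}$ is exactly the set of observations the Ex-BMDP can visit under $\mathcal{D}_0$) some trajectory starting from $\mathcal{D}_0$ visits $(s',e')$; extending that trajectory along the positive-probability path to $(s,e)$ shows $(s,e)$ is also visited, hence corresponds to an element of $\mathcal{X}$.

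The main obstacle I anticipate is the step-count synchronization between the endogenous and exogenous coordinates when both are periodic: I must ensure that the length of the return path $s' \to s$ in $\mathcal{S}^*$ and the recurrence time $e' \to e$ in $\mathcal{E}^*$ can be made to coincide. This is where finiteness, strong connectivity of $\mathcal{S}^*$, and the absence of transient $\mathcal{E}^*$-states all get used together; once one is comfortable that periods of recurrent classes and the set of realizable path lengths form eventually-periodic sets, the synchronization is a standard number-theoretic fact, but it is the one place the argument is not a one-liner.
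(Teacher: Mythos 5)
Your plan follows essentially the same route as the paper: reduce the claim to a step-count synchronization between the endogenous and exogenous coordinates, using irreducibility of $\mathcal{S}^*$, the absence of transient states in $\mathcal{E}^*$, and the periodic structure of each chain. The one place your plan is not yet a proof is the step you yourself flag as the crux: invoking a ``Chinese-remainder-style argument'' to find a common return length $t$ with $t$ a valid path length for $s' \to s$ and simultaneously for $e' \to e$. Two arithmetic progressions $t \equiv c_s \pmod{k_s}$ and $t \equiv c_e \pmod{k_e}$ intersect only when $c_s \equiv c_e \pmod{\gcd(k_s,k_e)}$, and this compatibility is exactly where the hypothesis that $(s',e')$ is \emph{reachable from} $(s,e)$ must be used: if the forward path has length $l$, then every return length in either coordinate is forced to be $\equiv -l$ modulo that coordinate's period, so both residues are $-l$ and the congruences are automatically consistent. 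Without that observation the CRT step could fail (and indeed would fail for a pair with no forward path between them, which is why the lemma is not vacuous). The paper realizes the same synchronization concretely by first padding the forward path from $(s,e)$ so that its total length is $\equiv 0 \pmod{k_s k_e}$, landing at an intermediate $(s'',e'')$ whose coordinates lie in the same cyclic classes as $s$ and $e$ respectively, and then returning in exactly $k_s k_e m n$ steps in both coordinates at once; your proof becomes complete once you make the residue-compatibility explicit in this way. The second sentence of the lemma is handled identically in both arguments.
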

\begin{proof}
    
Note that the dynamics on $\mathcal{S^*}$ are irreducible, and there are no transient states in $\mathcal{E^*}$. We know that $e$ and $e'$ must belong to the same communicating class in $\mathcal{E}^*$, so we can treat the dynamics on $\mathcal{E^*}$ as irreducible without loss of generality.

Let $M_s$ and $M_e$ be the two Markov chain transition matrices. Let $k_s$ be the periodicity of the endogenous dynamics, and $k_e$ be the periodicity of the exogenous dynamics. Note that $M_s^{k_s}$ is ergodic when restricted to the domain of each cyclic class of $\mathcal{S}^*$, and $M_e^{k_e}$ is ergodic when restricted to the domain of each cyclic class of $\mathcal{E}^*$.  Then for some $n$, $m$,  for any $n' \geq n$ and $m' \geq m$, $M_s^{k_s \cdot n'}$  has positive probability between any two states in the same cyclic class in $\mathcal{S^*}$ and $M_e^{k_e \cdot m'}$ has positive probability between any two states in the same cyclic class in $\mathcal{E}^*$. Then in particular, $M_s^{k_s \cdot k_e \cdot m \cdot n}$ has positive probability between any two states in the same cyclic class in $\mathcal{S^*}$, and  $M_e^{k_s \cdot k_e \cdot m \cdot n}$ has positive probability between any two states in the same cyclic class in $\mathcal{E^*}$.

Now, let the cyclic classes of $e$ and $s$ be defined as 0 on their respective Markov chains. Let $l((s,e),(s',e'))$ be the length of a path from $(s,e)$ to $(s',e')$. Now, consider any state $(s'',e'')$ reached by taking $k_e\cdot k_s - (l((s,e),(s',e'))\,\%\,(k_e\cdot k_s))$ steps starting at $(s',e')$. Then $(s'',e'')$ can be reached in a path of length $l((s,e),(s'',e''))$ from $(s,e)$, where 
\begin{equation}
    l((s,e),(s'',e'')) = l((s,e),(s',e') +  k_e\cdot k_s - (l((s,e),(s',e'))\,\%\,(k_e\cdot k_s)) \equiv 0 \pmod{k_e \cdot k_s}
\end{equation}
Then $s''$ belongs to the same cyclic class (0) as $s$ on $M_s$, and $e''$ belongs to the same cyclic class (0) as $e$ on $M_e$. Then $M_s^{k_s \cdot k_e \cdot m \cdot n}$ has positive probability to transition from $s''$ to $s$, and  $M_e^{k_s \cdot k_e \cdot m \cdot n}$ has positive probability to transition from $e''$ to $e$. Then we can reach $(s,e)$ from $(s'',e'')$ in a finite number ($k_s \cdot k_e \cdot m \cdot n$) of steps. Because we can reach $(s'',e'')$ from $(s',e')$, this implies that we can reach $(s,e)$ from $(s',e')$, as desired. As a consequence, if $(s',e')$ corresponds to an observation in $\mathcal{X}$, then the latent dynamics of the Ex-BMDP will eventually reach $(s,e)$ with positive probability, so $(s,e)$  also  corresponds to an observation in $\mathcal{X}$.
\end{proof}

\section{ Endogenous Latent Dynamics are not Unique} \label{sec:nonunique}
In this section, we provide some clarifications on the theory of endogenous latent dynamics, which  allow us to more clearly state our theoretical results. In their theoretical presentation, \cite{lamb2022guaranteed} implicitly assumes that a Ex-BMDP  has a \textit{unique} endogenous latent representation; i.e., that a single Ex-BMDP only admits a single (minimal $|\mathcal{S}|$) decomposition of the observation $x$ into endogenous state $s$ and exogenous state $e$.\footnote{ For example, \cite{lamb2022guaranteed} makes claims such as  ``We present an asymptotic analysis of AC-State showing it recovers the control-endogenous latent state encoder $f^*$.''} Consequentially, their theoretical claims are often made in terms of the ``ground truth'' endogenous latent state. Here, we show that this assumption is  unwarranted.

\subsection{An Ex-BMDP with Multiple Control-Endogenous Representations}
Consider the Ex-BMDP defined by control-endogenous states $\mathcal{S} = \{a,b\}$, exogenous states $\mathcal{E} = \{0,1\}$, actions $\mathcal{A} = \{\text{Stay},\text{Move}\}$ and transitions as follows:
\begin{figure}[h!]
\centering
\begin{subfigure}{0.45\textwidth}
\centering
\begin{tikzpicture}
\node[draw=black,shape=circle](a){a};
\node[draw=black,shape=circle](b)[right = of a]{b};
\path  (a) edge [bend right]["Move",swap](b);
\path  (b) edge [bend right]["Move"](a);
\path  (a) edge [loop above]["Stay"](a);
\path  (b) edge [loop above]["Stay"](b);
\end{tikzpicture}
    \caption{Endogenous transitions $T(s,a)$}
\end{subfigure}
\centering
\begin{subfigure}{0.45\textwidth}
\centering
\begin{tikzpicture}
\node[draw=black,shape=circle](0){0};
\node[draw=black,shape=circle](1)[right = of 0]{1};
\path  (1) edge [bend right]["p = 1.0",swap](0);
\path  (0) edge [bend right]["p = 1.0",swap](1);
\end{tikzpicture}
    \caption{Exogenous transitions $\mathcal{T}_e(e|e_t)$}
\end{subfigure}
    \label{fig:nonunique_1}
    \caption{An Ex-BMDP with $|\mathcal{S}| = 2$}
\end{figure}
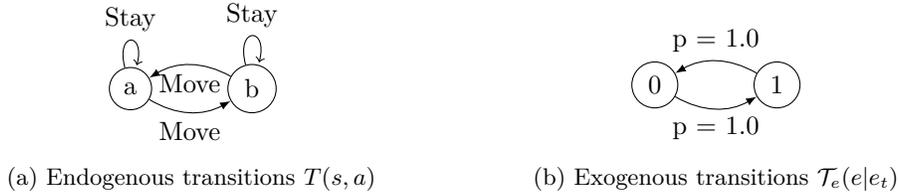

The observations $x \in \mathcal{X}$ are defined by simply concatenating the endogenous and exogenous state labels: $\mathcal{X} = \{a0,a1,b0,b1\}$. 

This Ex-BMDP in fact admits a different control-endogenous state representation, with the same (minimal) endogenous state space size $|\mathcal{S}| = 2$. In  particular, consider the representation:

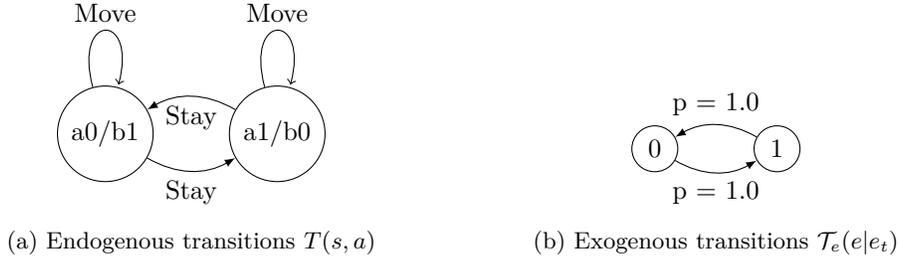
\begin{figure}[h!]
\centering
\begin{subfigure}{0.45\textwidth}
\centering
\begin{tikzpicture}
\node[draw=black,shape=circle](a0/b1){a0/b1};
\node[draw=black,shape=circle](a1/b0)[right = of a0/b1]{a1/b0};
\path  (a0/b1) edge [bend right]["Stay",swap](a1/b0);
\path  (a1/b0) edge [bend right]["Stay"](a0/b1);
\path  (a0/b1) edge [loop above]["Move"](a0/b1);
\path  (a1/b0) edge [loop above]["Move"](a1/b0);
\end{tikzpicture}
    \caption{Endogenous transitions $T(s,a)$}
\end{subfigure}
\begin{subfigure}{0.45\textwidth}
\centering
\begin{tikzpicture}
\node[draw=black,shape=circle](0){0};
\node[draw=black,shape=circle](1)[right = of 0]{1};
\path  (1) edge [bend right]["p = 1.0",swap](0);
\path  (0) edge [bend right]["p = 1.0",swap](1);
\end{tikzpicture}
    \caption{Exogenous transitions $\mathcal{T}_e(e|e_t)$}
\end{subfigure}
    \label{fig:nonunique_2}
    \caption{A different decomposition of the same Ex-BMDP, also with $|\mathcal{S}| = 2$}
\end{figure}

In this decomposition, the observation $x$ is defined such that, for example, if $s = a0/b1$ and $e= 0$, then $x= a0$. 

Note that this alternative encoding is not a mere relabeling of the same endogenous states: the pair of observations $\{a0,a1\}$ belong to \textit{different} endogenous states under this representation, although they have the same endogenous state under the first representation. We can confirm that the full MDP (that is, the observed MDP on $\mathcal{X}$) is in fact the same for both:
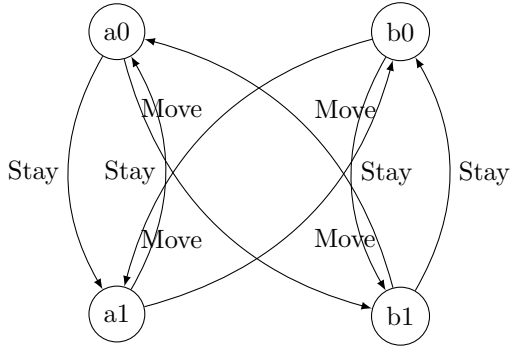
\begin{figure}[h!]
    \centering
\begin{tikzpicture}
\node[draw=black,shape=circle](a0){a0};
\node[draw=black,shape=circle](a1)[below =3cm of a0] {a1};
\node[draw=black,shape=circle](b0)[right =3cm of a0] {b0};
\node[draw=black,shape=circle](b1)[below =3cm of b0] {b1};

\path  (a0) edge [bend right]["Move",swap](b1);
\path  (a0) edge [bend right]["Stay",swap](a1);
\path  (a1) edge [bend right]["Move",swap](b0);
\path  (a1) edge [bend right]["Stay"](a0);
\path  (b0) edge [bend right]["Move",swap](a1);
\path  (b0) edge [bend right]["Stay"](b1);
\path  (b1) edge [bend right]["Move",swap](a0);
\path  (b1) edge [bend right]["Stay",swap](b0);
\end{tikzpicture}

\caption{Observed transitions on the full MDP}
    \label{fig:nonunique_3}
\end{figure}
\section{Bounds on D' and K} \label{sec:d_bounds}
In this section, we prove an upper bound on the maximum witness distance $D'$ in terms of the endogenous transition diameter $D$. We also show lower bounds on the number of steps $K$ of multi-step inverse needed to correctly learn a dynamics model in terms of $D$.
\begin{theorem}
Consider a  MDP on states $\mathcal{S}$ with deterministic transition function $T$, and finite diameter $D$ as defined by Assumption 3.1 from \cite{lamb2022guaranteed} (reproduced in Section \ref{sec:background} above). Recall the definition of witness distance $W(a,b)$ from Section \ref{sec:witness_dist}. For any $a,b \in \mathcal{S}$, either $W(a,b) \leq 2D^2 +D$, or $W(a,b) = \infty$. Furthermore,  $W(a,b) = \infty$ if and only if all Markov chains induced by any policy on $T$ (that assigns nonzero probability to each action) are periodic, and  $a$ and $b$ belong to different cyclic classes of such  chains. \label{thm:finite_wd}
\end{theorem}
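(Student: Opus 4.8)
The plan is to translate $W(a,b)$ into a statement about walk lengths in a digraph, dispose of the structural "$\infty$" dichotomy first, and then grind out the quantitative bound last. Throughout, let $G$ be the digraph on $\mathcal{S}$ with edge set $\{(s,T(s,a)) : s\in\mathcal{S},\,a\in\mathcal{A}\}$. Since the diameter $D$ is finite, $G$ is strongly connected; and since this edge set is exactly the support of the Markov chain induced by \emph{any} policy with full support on $\mathcal{A}$, strong connectivity, the period $p$, and the cyclic classes $\mathcal{S}_0,\dots,\mathcal{S}_{p-1}$ are policy-independent, as claimed. For $u,v\in\mathcal{S}$ write $L(u,v)\subseteq\mathbb{N}$ for the set of lengths of walks $u\to v$ in $G$ and $C_v:=L(v,v)$ for the closed-walk lengths at $v$; note $C_v$ is an additive sub-semigroup of $\mathbb{N}$ whose nonzero elements have gcd $p$, and $C_v+L(u,v)\subseteq L(u,v)$. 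By definition, $W(a,b)=\min\{k:\exists c,\ k\in L(c,a)\cap L(c,b)\}$.

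The dichotomy comes next. If $p>1$ and $a\in\mathcal{S}_i$, $b\in\mathcal{S}_j$ with $i\neq j$, then every element of $L(c,a)$ is congruent to $i$ minus the class-index of $c$ modulo $p$, and every element of $L(c,b)$ to $j$ minus the class-index of $c$; these residues differ, so $L(c,a)\cap L(c,b)=\emptyset$ for every $c$ and $W(a,b)=\infty$ — this is the "only if" half of the characterization. For the "if" half, and for finiteness whenever $a,b$ lie in a common cyclic class $\mathcal{S}_i$ (in particular the aperiodic case $p=1$), I pass to the auxiliary digraph $G_i$ on $\mathcal{S}_i$ whose edges are the pairs joined by a $G$-walk of length exactly $p$: standard facts about cyclic classes give that $G_i$ is strongly connected and aperiodic, and since $G$-distances within $\mathcal{S}_i$ are multiples of $p$ bounded by $D$, its diameter $d_i$ satisfies $d_i\le\lfloor D/p\rfloor$. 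So it suffices to prove a self-contained lemma: \emph{in a strongly connected aperiodic digraph of diameter $d$ there is a bound $f(d)$ such that for every $m\ge f(d)$ and every ordered pair of vertices there is a walk of length exactly $m$.} Granting this with $f(d)\le 2d^2+d$, applying it to $G_i$ for the pairs $(a,a)$ and $(a,b)$ and taking the witness $c=a$ gives $W(a,b)\le p\cdot f(d_i)\le p\bigl(2(D/p)^2+D/p\bigr)=2D^2/p+D\le 2D^2+D$, which is finite — finishing both the dichotomy and the stated bound.

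The remaining lemma, with its tight constant, is the main obstacle. Fix a target $v$ in the aperiodic graph of diameter $d$; since $L(u,v)\supseteq\operatorname{dist}(u,v)+C_v$, it is enough to show $C_v$ contains every integer $\ge 2d^2$. Now $C_v$ is a numerical semigroup of gcd $1$, so by a Frobenius/conductor estimate it suffices to exhibit enough \emph{small} elements with gcd $1$: for each vertex $s$, $\operatorname{dist}(v,s)+\operatorname{dist}(s,v)\in C_v$ and is $\le 2d$; for each edge $(s,s')$, $\operatorname{dist}(v,s)+1+\operatorname{dist}(s',v)\in C_v$ and is $\le 2d+1$; and any out-neighbour of $v$ yields an element of $C_v$ in $[1,d+1]$. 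Summing the edge-quantities around any directed cycle $\gamma$ makes the distances telescope, so $|\gamma|$ is a signed integer combination of these elements; hence their overall gcd divides $\gcd_\gamma|\gamma|=1$. Feeding this generating set into the conductor bound gives $C_v\subseteq$-complement of size $O(d^2)$; the delicate point — and where most of the effort goes — is organizing the estimate to land at exactly $2d^2$ rather than a weaker constant multiple, which is what makes $D':=2D^2+D$ tight up to a constant. (The matching lower-bound constructions, showing AC-State fails for $K<D^2/2+O(D)$ and ACDF for $K<D^2/4+O(D)$, are separate explicit families of Ex-BMDPs and are not part of this argument.)
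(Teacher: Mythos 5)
Your proposal is correct and follows essentially the same route as the paper: reduce the periodic case to the aperiodic one via the $p$-th power graph on a cyclic class, generate the numerical semigroup of closed-walk lengths at the witness by ``out-and-back'' loops of length at most $2d$ (one of length at most $d+1$) together with loops through single edges of length at most $2d+1$, and invoke the Schur/Frobenius conductor bound $(b_1-1)(b_k-1)\le 2d^2$. The only substantive difference is your gcd argument --- telescoping the edge-loop lengths around an arbitrary directed cycle to show the generators' gcd divides every cycle length --- which is a cleaner packaging than the paper's factor-by-factor modular construction but proves the same fact.
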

\begin{proof}

Consider a Markov chain induced by any policy on the transition function $T$ that assigns nonzero probability to each action in each state. Because of the finite diameter, we know that the Markov chain is irreducible. We will initially consider the case that the Markov chain is aperiodic, and later reduce the general case to the aperiodic case.

Our proof technique is inspired by a related result from \cite{perkins1961theorem}, which demonstrated that, for an irreducible aperiodic Markov chain consisting of $N$ states, a path of length exactly $N^2 -2N +2$ exists between any pair of states. Building on this technique, our proof additionally takes advantage of the  bounded-diameter assumption present in our setting, instead of simply using $|\mathcal{S}|$. \cite{perkins1961theorem} uses the following theorem sourced from \cite{b2f296f6-cfc1-3d96-b687-b7c753783415}, which is ultimately attributed to Issai Schur (presented here in the form used in \cite{perkins1961theorem}):

\begin{theorem*}[Theorem of Schur]
    Consider any set of positive integers $\mathcal{B} = \{b_1,...,b_k\}$ such that $\text{gcd}(\mathcal{B}) = 1$, where $k = |\mathcal{B}|$,  $b_1$ is the smallest integer, and $b_k$ the largest. Then any integer $n \geq (b_1 -1)(b_k-1)$ can be represented as a sum in the form $n=\sum_{i=1}^k a_i b_i$, where $a_1,...,a_k$ are non-negative integers.
\end{theorem*}

For any state $a \in \mathcal{S}$, we know from the definition of aperiodicity that the greatest common denominator of the lengths of all possible self-loops from $a$ to $a$ must be 1. Let $\mathcal{L}_a$ be this complete set of self-loops of $a$. We will use $|\cdot|$ to denote length of a path (such as a loop). We will show first that there exists a subset $\mathcal{Q}_a \subseteq \mathcal{L}_a$, such that the length of each loop in $\mathcal{Q}_a$ is at most $2D+1$, the shortest loop in $\mathcal{Q}_a$ has length at most $D+1$, and such that  $\text{gcd}(\{|p| \big | p \in \mathcal{Q}_a\}) = 1$.

Consider the subset $\mathcal{P}_a \subseteq \mathcal{L}_a$  consisting of paths in the form $a \rightarrow ... \rightarrow a' \rightarrow  ... \rightarrow a$,  $ \forall a' \in  \mathcal{S} \setminus \{a\} $, where the segments  $a \rightarrow ... \rightarrow a'$ take a shortest-possible path from $a$ to $a'$, and conversely, the segments  $a' \rightarrow ... \rightarrow a$ take a shortest-possible path from $a'$ to $a$. In other words, $\mathcal{P}_a$ is the set of shortest self-loops of $a$ containing each possible other state $a'$. Let $d(a,a')$ be the length of the shortest path from $a$ to $a'$.  By the bounded-diameter assumption, we know that  $|p| = d(a,a') + d(a',a) \leq 2D, \forall p \in \mathcal{P}_a$. Furthermore, there must  exist some $a'$ with an edge directly incident on $a$: for this $a'$, we have $d(a',a) = 1$, and thus  $|p| = d(a,a') + d(a',a) \leq D +1$. Therefore $ \forall p \in \mathcal{P}_a,\,\,|p| \leq 2D$, and  $ \exists p \in \mathcal{P}_a,\,\,|p| \leq D+1$.

Now, there are two possibilities:
\begin{enumerate}
    \item $\text{gcd}(\{|p| \big{|} p \in \mathcal{P}_a\}) = 1$. In this case we tan take $\mathcal{Q}_a = \mathcal{P}_a $
    \item $\text{gcd}(\{|p| \big{|} p \in \mathcal{P}_a\}) = n > 1$. In this case, let $m$ be any factor of $n$ greater than 1. Because  $\text{gcd}(\{|p| \big{|} p \in \mathcal{L}_a\}) = 1$, there must exist some $p' \in \mathcal{L}_a$ such that $m \nmid |p'|$ ($m$ does not divide $|p'|$). (Furthermore, we can assume that the state $a$ does not occur in the middle of $p'$: if we are given such a ``double-loop" with  $m \nmid |p'|$, then $m$ must not divide the length of either the first segment or the second segment, or both.)  Let $p'_i$ be the $i$th element in the path $p'$ (zero-indexed). Then $p'_0 = p'_{|p'|} = a$. Note that 
    \begin{itemize}
        \item $ d(a,p'_{|p'|}) - d(a,p'_0) = 0-0 = 0 \equiv 0 \pmod{m}$
        \item $|p'| -0 = |p'| \not \equiv 0 \pmod{m}$
    \end{itemize}
    Therefore, it \textit{cannot} be the case that for all $i \in [0,|p'|)$, \,\,\, $d(a,p'_{i+1}) - d(a,p'_{i}) \equiv 1 \pmod{m}$. (To confirm this fact, sum both sides of this equivalence over $i$ from $0$ to $|p'|-1$.)
    Fix $i$ so that $p'_{|i|}$ is some state on $p'$ where this equivalence does not hold. (That is,   $d(a,p'_{i+1}) - d(a,p'_{i}) \not\equiv 1 \pmod{m}$.) Now, consider a self-loop of $a$ constructed from the following segments:
    \begin{itemize}
        \item A shortest-length path from $a$ to $p'_{i} $
        \item The edge from $p'_{i}$ to $p'_{i+1}$
        \item  A shortest-length path from $p'_{i+1} $ to $a$.
    \end{itemize}
Let this loop be known as $p''$, and note that $|p''| \leq 2D +1$, because it consists of two shortest paths and a single edge. Also, observe that:

\begin{align}
          d(a,p'_{i+1}) - d(a,p'_{i}) &\not\equiv 1 \pmod{m} \text{\,\,\,\,(As derived above)}\label{eq:DProof1}  \\
          d(a,p'_{i+1}) + d(p'_{i+1},a) &\equiv 0 \pmod{m} \text{\,\,\,\,(A self-loop in $ \mathcal{P}_a$)}\label{eq:DProof2}  \\
          -d(p'_{i+1},a) - d(a,p'_{i}) &\not\equiv 1 \pmod{m} \text{\,\,\,\,(Subtract Eq. \ref{eq:DProof2} from  Eq. \ref{eq:DProof1})} \label{eq:DProof3}  \\
         d(a,p'_{i}) +   d(p'_{i+1},a)  &\not\equiv -1 \pmod{m} \text{\,\,\,\,(Rearrange and negate Eq. \ref{eq:DProof3}.)} \label{eq:DProof4} \\
         d(a,p'_{i}) + 1 +  d(p'_{i+1},a)  &\not\equiv 0 \pmod{m} \text{\,\,\,\,(Add 1 to Eq. \ref{eq:DProof4}.)} \label{eq:DProof5}\\   
         |p''|  &\not\equiv 0 \pmod{m} \text{\,\,\,\,(From the definition of $p''$ and Eq. \ref{eq:DProof5}.)} 
\end{align}
Note that we can construct such a $p''$ for each factor $m$ of $n$. Then we can let $\mathcal{Q}_a$ consist of each loop in $\mathcal{P}_a$, and additionally one of these $p''$ loops constructed for for each factor $m$. Then by construction, $\text{gcd}(\{|p| \big{|} p \in \mathcal{Q}_a\}) = 1$ and $\forall p \in \mathcal{Q}_a, \,\,|p| \leq 2D+1$, as desired. (Additionally, $\mathcal{Q}_a$ still contains the loop of length $\leq D+1$ from $\mathcal{P}_a$, as desired.)
\end{enumerate}
Now, we have constructed  $\mathcal{Q}_a$ such that the longest loop in $\mathcal{Q}_a$ has length at most $2D+1$ , while the shortest loop has length at most $D+1$, and furthermore that the lengths of the loops are relatively prime. Then the Theorem of Schur given above tells us that by taking some combination of self-loops in  $\mathcal{Q}_a$ from $a$ in sequence, we can construct a path from $a$ to $a$ of any arbitrary length $n$, for any $n \geq (2D+1-1)(D+1-1) = 2D^2$. Then for any $b \in \mathcal{S}$, we can construct a path from $a$ to $b$ of length exactly $2D^2 + d(a,b)$ (by self-looping for exactly $2D^2$ steps at $a$, and then taking the shortest path to $b$), and also a path from $a$ to $a$ of length exactly $2D^2 + d(a,b)$ (because  $2D^2 + d(a,b) \geq 2D^2$, so such a self-loop must exist). Then $a$ can act as a witness for $a$ and $b$, and $W(a,b) \leq 2D^2 + d(a,b)  \leq 2D^2+D$, as desired.

Finally, we return to the periodic case. Let $k$ be the periodicity of the Markov chain, such that the chain is partitioned into the $k$ cyclic classes $\mathcal{S}_0, ..., \mathcal{S}_{k-1}$, where a state in class $i \pmod{k}$ is always succeeded by a state in class $i+1 \pmod{k}$. There are two possibilities:
\begin{itemize}
    \item States $a$ and $b$ belong to the same cyclic class,  which we can call $\mathcal{S}_{i_a}$. Then consider the Markov chain raised to the $k$'th power; that is, the Markov chain induced by taking $k$ steps of the original Markov chain at each step; call this new chain $M^k$. We will show that $M^k$ is irreducible, bounded-diameter, and aperiodic, and thus we can apply the above aperiodic case to $M^k$.
    First, consider any path starting in $\mathcal{S}_{i_a}$ in the original chain, of length $|p|$. Either $k \mid|p|$, in which case it ends in  $\mathcal{S}_{i_a}$ and has an equivalent path of length $|p|/k$ in $M^k$, or  $k \nmid|p|$, in which case it does not end in  $\mathcal{S}_{i_a}$  and also has no equivalent path in $M^k$. Thus we see that, if we start in $\mathcal{S}_{i_a}$,  $M^k$ forms an irreducible Markov chain on the states of $\mathcal{S}_{i_a}$ alone. Furthermore,   $M^k$  on the states $\mathcal{S}_{i_a}$ has diameter at most $\lfloor D/k \rfloor$ (because there is a path of length of most $D$ between every pair of states in $\mathcal{S}_{i_a}$ in the original Markov chain). Finally, $M^k$  on the states $\mathcal{S}_{i_a}$ is aperiodic: to confirm this fact, note that by the definition of periodicity, we have that in the original Markov chain, the g.c.d. of the lengths of all self-loops from any particular state in $\mathcal{S}_{i_a}$ is $k$. All of these self-loops will still exist in $M^k$, but their lengths will be divided by $k$. Therefore, the g.c.d. of the lengths of the self-loops will be 1. This fact directly implies that $M^k$ on $\mathcal{S}_{i_a}$ is aperiodic. 
    Now, because $M^k$ on $\mathcal{S}_{i_a}$ is irreducible and aperiodic with bounded diameter $\lfloor D/k \rfloor$, we can apply the aperiodic case above to $M^k$. Then, in   $M^k$ on $\mathcal{S}_{i_a}$, we have that $W(a,b) \leq 2\lfloor D/k \rfloor^2 + \lfloor D/k \rfloor \leq 2 D^2/k^2 +  D/k$.  The paths from the witness state to $a$ and $b$ will also exist in the original MDP, but will be longer by a factor of $k$. Therefore the witness distance in the original MDP will be  $W(a,b)  \leq 2 D^2/k +  D \leq 2 D^2 +  D$.

    \item States  $a$ and $b$ belong to different cyclic classes, which we can call $\mathcal{S}_{i_a}$ and  $\mathcal{S}_{i_b}$, where $i_a \not\equiv i_b \pmod{k}$. In this case, for any witness state $c$ in any $\mathcal{S}_{i_c}$, note that any path from $c$ to $a$ will have length congruent to $i_a -i_c \pmod{k}$, while any path from $c$ to $b$ will have length congruent to $i_b -i_c \pmod{k}$. Because  $i_a \not\equiv i_b \pmod{k}$, we have that $i_a - i_c\not\equiv i_b -i_c \pmod{k}$, so no path from $c$ to $a$ can be the same length as any path from $c$ to $b$. Then $W(a,b) = \infty$. Note that this case is the \textbf{only} case in which $W(a,b) = \infty$.
\end{itemize}
\end{proof}

The above result shows that the maximum witness distance $D'$ is upper-bounded by $2D^2+D$, and thus, if we are given diameter $D$, we can safely set the hyperparameter $K=2D^2+D$ and be guaranteed that ACDF will correctly discover a endogenous state representation.

We now show two kinds of lower bounds on $K$, the number of  multi-step inverse steps \textit{actually} needed to learn the endogenous state:
\begin{itemize}
    \item A bound for the AC-State loss. For this bound, we assume that the dynamics are aperiodic; thus $K=2D^2+D$ steps are in fact sufficient to find a correct endogenous state representation.
    \item A bound for the ACDF loss.
\end{itemize}
These results are proven by construction: we will explicitly construct examples of dynamics for which AC-State/ACDF fail when $K$ is too small.

We first give the following useful lemma:
\begin{lemma} \label{lemma:one-to-one}
    Consider an Ex-BMDP defined on states indexed as $\mathcal{X} = \{0,...,|\mathcal{X}|-1\}$, such that the dynamics of the Ex-BMDP are \textit{deterministic}, and such that any Markov chain induced by any policy on the Ex-BMDP (that puts nonzero probability on every action) is \textit{irreducible} and \textit{aperiodic}. Then the \textit{only} valid endogenous latent encoder (up to a relabeling permutations) is the trivial one: $\phi(x) = x$. That is to say, the endogenous state $s$ is the full state $x$, and the endogenous transition function is the same as the full Ex-BMDP transition function. 
\end{lemma}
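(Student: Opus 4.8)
The plan is to show that, under these hypotheses, the exogenous encoder $\phi_e$ must be \emph{constant} on $\mathcal{X}$; once that is established, $\phi$ is forced to be injective, and identifying $\mathcal{S}$ with the range of $\phi$ gives exactly $\phi(x)=x$ (up to relabeling), with the endogenous transition function equal to the full transition function and the exogenous factor a single self-looping state.

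\textbf{Step 1 (determinism propagates to the exogenous factor and the emission).} Since $\mathcal{T}(\cdot\mid x,a)$ is a point mass for every $x,a$, I would first argue that $\mathcal{T}_e(\cdot\mid e)$ is a point mass. Indeed, for any $x$ with $\phi(x)=s$, $\phi_e(x)=e$, the law of $x_{t+1}$ given $x_t=x,a_t=a$ is the mixture $\sum_{e'}\mathcal{T}_e(e'\mid e)\,\mathcal{Q}(\cdot\mid T(s,a),e')$; if $\mathcal{T}_e(\cdot\mid e)$ had two distinct support points $e'\neq e''$, the block assumption would make $\mathcal{Q}(\cdot\mid T(s,a),e')$ and $\mathcal{Q}(\cdot\mid T(s,a),e'')$ have disjoint nonempty supports, so the mixture could not be a point mass — contradiction. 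Hence $e_{t+1}=g(e_t)$ for a function $g$, and then each emission $\mathcal{Q}(\cdot\mid s,e)$ over latent pairs that arise from some $x\in\mathcal{X}$ is itself a point mass. Combined with the block assumption, this makes $x\mapsto(\phi(x),\phi_e(x))$ an injection of $\mathcal{X}$ into the set of latent pairs.

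\textbf{Step 2 (aperiodicity forces $\phi_e$ constant).} Along any trajectory $\phi_e(x_t)=g^{t}(\phi_e(x_0))$. Fix any $x\in\mathcal{X}$. By irreducibility and aperiodicity of the induced Markov chain, the set of lengths of return loops at $x$ is additively closed with $\gcd 1$, so by the Theorem of Schur (reproduced in Appendix~\ref{sec:d_bounds}) it contains all sufficiently large integers, in particular two consecutive values $N$ and $N+1$. A loop of length $N$ gives $g^{N}(\phi_e(x))=\phi_e(x)$ and one of length $N+1$ gives $g^{N+1}(\phi_e(x))=\phi_e(x)$; together these yield $g(\phi_e(x))=\phi_e(x)$, i.e.\ $\phi_e(x)$ is a fixed point of $g$. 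For any other $x'\in\mathcal{X}$, irreducibility gives a path of some length $m$ from $x$ to $x'$, whence $\phi_e(x')=g^{m}(\phi_e(x))=\phi_e(x)$. So $\phi_e$ is constant on $\mathcal{X}$.

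\textbf{Step 3 (conclusion).} With $\phi_e$ constant, Step~1's injection shows $\phi$ is injective; relabeling $\mathcal{S}$ along $\phi$ gives $\phi(x)=x$, and then $T(x,a)=\phi(\mathcal{T}(x,a))=\mathcal{T}(x,a)$ while $\mathcal{E}$ collapses to one self-looping state, which is exactly the trivial representation. I expect the main obstacle to be Step~1: carefully turning determinism of the \emph{observed} transition into determinism of the exogenous transition and emission, so that the ``injection into latent pairs'' used in Step~3 is fully justified by the block assumption; the aperiodicity argument in Step~2 is then short.
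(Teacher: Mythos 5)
Your proposal is correct and follows essentially the same route as the paper's proof: use determinism of the observed transitions plus the block assumption to force the exogenous dynamics to be deterministic and to make $x\mapsto(\phi(x),\phi_e(x))$ a bijection onto realized latent pairs, then use irreducibility and aperiodicity to collapse the exogenous factor, leaving $\phi$ injective. The only (immaterial) differences are in the sub-steps: the paper rules out two observations sharing a latent pair via an in-edge argument rather than your point-mass-emission argument, and it kills the exogenous cycle by noting every self-loop length is divisible by $|\mathcal{E}|$ rather than by extracting consecutive return times via Schur's theorem.
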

\begin{proof}
We proceed as follows:
\begin{itemize}
    \item Consider any pair of endogenous state $s$ and exogenous state $e$. By the block assumption, each observed state $x$ corresponds to exactly one such pair, and any two observed states $x,x'$ which correspond to the same pair $(s,e)$ must have in-edges in $M$ from exactly the same set of states in $\mathcal{X}$, with exactly the same actions for each edge, with a fixed probability ratio. Because the Ex-BMDP is deterministic, there are no such pairs, and it immediately follows that each pair $(s,e)$ corresponds to exactly one $x$, and vice-versa.
    \item Because the full dynamics of $x= (s,e)$ are deterministic, and because the dynamics of $s$ are deterministic (by definition), it immediately follows that the dynamics of $e$ must be deterministic. 
    \item Given that the dynamics of $e$ are deterministic and by definition do not depend on actions, $\mathcal{E}$ and $\mathcal{T}_e$ can only consist of some set of disjoint cycles of states. However, if there are multiple such cycles, then under any choice of policy, any Markov chain on $x= (s,e)$ would have multiple communicating classes (because states $x =(s,e)$ with values of $e$ on different cycles would be inaccessible from each other), and would therefore not be irreducible. Therefore $\mathcal{E}$ can only consist of a single cycle. However, this cycle can in fact only consist of a single state: note that the length of any self-loop of any state $x = (s,e)$ will by divisible by the cycle length of $\mathcal{E} = |\mathcal{E}|$. Because the Markov chain is aperiodic, the g.c.d of such loops must be 1. Then $ |\mathcal{E}|$ must be equal to 1.
\end{itemize}
Therefore, each $x$ corresponds one-to-one with a endogenous-exogenous pair $(s,e)$, and there is only a single exogenous state $e$. So each $x$ corresponds one-to-one with an endogeous state $s$, as desired.
\end{proof}
We now present our lower-bound results:

\begin{proposition}
       $\forall D$, there exists an Ex-BMDP such that a minimal endogenous latent dynamics has a diameter of $D$ and is aperiodic, and such that the AC-State algorithm given by Equation \ref{eq:multistep_inverse} with a uniform exploration policy will return an encoder that does \textbf{not} produce a valid endogenous state representation using $K \leq h(D)$ -step inverse dynamics, where $h(D) \in \Omega(D^2)$. Here, we are using the Hardy-Littlewood $\Omega$ notation (not to be confused with Knuth's $\Omega$), and specifically $h(D)\sim D^2/2$, in the sense that:
     \begin{equation}
        \limsup _{D\rightarrow \infty} \frac{h(D)}{\frac{1}{2} D^2} = 1
     \end{equation} \label{prop:ACState_fail}
\end{proposition}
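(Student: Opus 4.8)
The plan is to exhibit, for each $D$, a concrete deterministic Ex-BMDP whose minimal endogenous dynamics is aperiodic with diameter exactly $D$, and such that two of its endogenous states $c,d$ are not distinguished by the AC-State loss unless $K$ is at least some $h(D)\sim D^2/2$. By Lemma~\ref{lemma:one-to-one}, since the dynamics is deterministic, irreducible and aperiodic, the only valid endogenous encoder is the trivial identity encoder; so if the AC-State-optimal encoder conflates $c$ and $d$ (which it will, since conflating them does not hurt action prediction for small $K$ while it strictly reduces $\|\mathrm{Range}(\phi)\|$), the returned encoder is not a valid endogenous representation. Concretely I would generalize the four-state example of Figure~\ref{fig:witness_dist_figure}-B,C,D: take a ``long cycle plus a short chord'' graph on roughly $D$ states, where from a single ``branch'' state $a$ the action $L$ versus $R$ sends the agent onto one of two cycles whose lengths are coprime (e.g.\ lengths $\sim D$ and $\sim D-1$, or $2$ and $D$), so the chain is aperiodic with diameter $D$, but the only way to learn to predict $a_t$ given $\phi(x_t),\phi(x_{t+k}),k$ requires a witness of distance $\sim D^2/2$ between $c$ and $d$ (the two states that must be separated), matching the tightness claim made just before the proposition.

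The key steps, in order, would be: (1) define the family of Ex-BMDPs $\mathcal{M}_D$ explicitly (states, action effects), verify the transition function is deterministic, that every policy with full support induces an irreducible chain, and compute the periodicity (showing it is $1$ via coprimality of the two cycle lengths) and the diameter (showing it equals $D$); (2) identify the pair of states $c,d$ whose separation is ``expensive,'' and compute exactly, as a function of $k$, the set of states reachable from each state in exactly $k$ steps — the crucial combinatorial fact being that for all $k\le h(D)$, whenever $x_t$ is fixed and $k$ is given, the conditional distribution of $a_t$ given $(\phi_\theta^{\mathrm{triv}}(x_t),\phi_\theta^{\mathrm{triv}}(x_{t+k}),k)$ is unchanged if we merge $c$ and $d$; equivalently, no witness state reaches $c$ and $d$ in the same number of steps $k\le h(D)$, so $W(c,d)>h(D)$; (3) conclude that the encoder $\phi$ merging $c$ and $d$ achieves the same minimal AC-State classification loss as the identity encoder but with strictly smaller range, so by the tie-break in Equation~\ref{eq:multistep_inverse} the AC-State-optimal $\phi_{\theta^*}$ does not separate $c$ and $d$; by Lemma~\ref{lemma:one-to-one} this $\phi_{\theta^*}$ is not a valid endogenous representation; (4) finally, compute $h(D)$ precisely enough to get $\limsup_{D\to\infty} h(D)/(\tfrac12 D^2)=1$, which amounts to an asymptotic/number-theoretic estimate (Chicken-McNugget / Frobenius-number type) of the largest $k$ not expressible as a nonnegative combination witnessing a common-length path to both $c$ and $d$, using the two coprime cycle lengths $\sim D$.

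The main obstacle I anticipate is step (2) combined with step (4): one has to choose the chord/cycle-length parameters so that simultaneously (i) the diameter is exactly $D$ (not $D\pm O(1)$, or at least controlled enough that the stated normalization constant $\tfrac12$ is correct), (ii) aperiodicity holds, and (iii) the Frobenius-type threshold for when a common-length witness path first appears is asymptotically $\tfrac12 D^2$ rather than, say, $\tfrac14 D^2$ or $2D^2$. Getting the leading constant exactly $1/2$ is the delicate part — it requires the ``coincidence distance'' between the two reachable sets to be governed by a Frobenius number of the form $(\ell_1-1)(\ell_2-1)\sim D\cdot D$ with an extra factor $1/2$ coming from the geometry of the construction (e.g.\ the witness only needs to reach a common meeting point, not return), and verifying the $\limsup$ rather than a plain limit lets one use a sparse subsequence of ``good'' $D$'s (e.g.\ $D$ for which the two chosen cycle lengths are genuinely coprime and the arithmetic is clean), which is presumably why Hardy–Littlewood $\Omega$ notation is used. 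A secondary, more routine obstacle is checking that merging $c$ and $d$ really does not increase the classification loss for \emph{any} admissible behavior policy of the stated form (uniform exploration), which is handled by the reachable-set computation in step (2) but needs to be stated carefully for all source states $x_t$, not just $a$.
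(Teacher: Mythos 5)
Your plan follows essentially the same route as the paper's proof: a deterministic ``cycle plus chord'' Ex-BMDP on $\sim D$ states with two coprime loop lengths through a single branch state, Lemma \ref{lemma:one-to-one} to force the identity encoder as the only valid one, a reachable-set/witness-distance computation showing two states cannot be co-reached from the branch state in $k\le h(D)$ steps, and the minimum-range tie-break to conclude AC-State conflates them. Two concrete corrections to your step (4). First, your alternative parametrization ``cycle lengths $2$ and $D$'' fails: each traversal of the short chord then costs only $O(1)$ steps, so the two reachable sets coincide after only $O(D)$ steps and you get a linear, not quadratic, bound; you need both loop lengths $\Theta(D)$ (the paper takes two primes $p<q$ with $q-p\le 246$ via bounded prime gaps, though your $q$ and $q-1$ suggestion would also work since $\gcd(q-1,q)=1$). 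Second, the constant $\tfrac12$ does not come from a Frobenius number (which for two coprime lengths $\sim D$ would be $\sim D^2$, not $D^2/2$): in the paper the state reached after $k$ steps from the branch state is $(k+(q-p)r)\bmod q$ with $r\le\lceil k/p\rceil$ chord-traversals, and the target pair $x,x'$ is chosen with offset $(q-p)(q-1)/2 \bmod q$, so that co-reachability forces $r-r'\equiv (q-1)/2 \pmod q$ and hence $\max(r,r')\ge (q-1)/2$; since each traversal costs $\ge p$ steps this gives $k\ge \tfrac12 p(q-3)+1\sim D^2/2$. The factor $\tfrac12$ is thus the worst-case residue offset being at most half the cycle length, not a ``meeting point versus return'' effect. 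With those repairs your outline matches the paper's argument.
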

\begin{proof}
Consider two arbitrary primes $p,q$, with $p < q$, and let these primes define a deterministic Ex-BMDP $M$ with transition function $T(x,a)$ as follows:
\begin{itemize}
    \item $q$ states, labeled $\mathcal{X} = \{0,...,q-1\}$
    \item Two actions, $\{L,R\}$
    \item $T(s,a)$ defined as:
    \begin{itemize}
        \item $T(0,L) = 1$
        \item $T(0,R) = q-p+1$
        \item $\forall x\in \mathcal{X}, x\neq 0$: $T(x,L) = T(x,R) = (x+1)\,\%
        \,q$, where $\%$ is the modulo operator.
    \end{itemize}
\end{itemize}

We first note that for this Ex-BMDP, under any policy that assigns nonzero probability to all actions, the resulting Markov chain is irreducible and aperiodic. For irreducibility, simply note that any state can be reached from any other state (i.e., any state $b$ can be reached from any state $a$ by simply taking action L $(b-a)\,\%\,q$ times). For aperiodicity, note that the state $0 \in \mathcal{X}$ has a self-loop of length $q$ (by taking $L$ and then any other $q-1$ actions) and a self-loop of length $p$ (by taking $R$ and then any other $p-1$ actions). Because these are both prime, the g.c.d. of the lengths of all of state $0$'s self-loops is $1$, so the Markov chain is aperiodic.  Then, by applying Lemma \ref{lemma:one-to-one}, we know that the \textit{only} valid endogenous latent encoder (up to a relabeling permutations) is the trivial one: $\phi(x) = x$. We can thus regard the states $\mathcal{S} = \mathcal{X}$ as the only ``true'' endogenous states: any encoder which maps any two states in $\mathcal{X}$ to the same endogenous state must be incorrect. 

Now, consider the AC-State prediction task of determining $a_t$ from $\phi(x_t)$,  $\phi(x_{t+k})$, and $k$. We will assume that $\phi$ maps the state $0\in \mathcal{X}$ to its own endogenous state (otherwise, AC-State has \textit{already} failed, as desired).  

Note that under a uniform exploration policy, where defined,
\begin{equation}
   \forall x\in \mathcal{X}\setminus \{0\},\,  x'\in \mathcal{X},\, k\in \mathbb{N}\,\,\,\,\,\,\Pr(a_t = L | x_t=x,x_{t+k}=x') = 0.5 
\end{equation}
that is, it is impossible to determine $a_t$ from the inverse dynamics unless $x_t = 0$ (because for $x_t\neq 0$, the next action $a_t$ does not affect the dynamics). Then it is only useful to distinguish two values of $x \in\mathcal{X} \setminus \{0\}$ if they produce distinct values of $\Pr(a_t = L | x_t=0,x_k=x)$ for some \textit{particular} $k \leq K$. In particular, \textbf{if two states $x,x'\in\mathcal{X} \setminus \{0\}$ cannot each occur exactly $k$ steps after state 0 for some $k \leq K$, then the AC-State loss can be minimized without $\phi$ distinguishing $x$ and $x'$.} Because AC-State returns the minimum-range encoder $\phi$, it will preferentially return an encoder that maps $x$ and $x'$ to the same endogenous state over the only correct encoder, which maps every $x$ to its own $s$. Therefore, to show that AC-State will fail for some $K$, we need only show that there are two states $x,x'\in\mathcal{X} \setminus \{0\}$ which cannot both be reached from state $0$ in the same number of steps $\leq K$. 

For simplicity and without loss of generality assume that the Ex-BMDP is in state $x=0$ at timestep $t=0$. Then the state at timestep $k$ is given by:
\begin{equation}
    (k + (q-p)r)\,\%\,q
\end{equation}
where $r$ is the number of times the action $R$ is selected while the Ex-BMDP is in the state 0, and is at most $\lceil k/p\rceil$ (because the agent can start to take the ``short'' p-length loop at most this many times in k total steps.) Now, let $x,x'$ be any two states in $\mathcal{X}\setminus \{0\}$ such that 
\begin{equation}
    (x-x') \equiv (q-p)((q-1)/2) \pmod{q} \label{eq:single_loop_counterexample_eq_0}
\end{equation}

(Concretely, if $((q-p)(q-1)/2)\,\%\,q \neq q-1$, we can set $x= q-1$; $x' = q-1-((q-p)(q-1)/2)\,\%\,q$; otherwise we can set $x= q-2$, $x' =q-1$.)

Then, a path of length $k$ ends at $x$ if and only if it contains $r$ R-actions at state 0, where:
\begin{equation}
    (k + (q-p)r) \equiv x \pmod{q} \label{eq:single_loop_counterexample_eq_1}
\end{equation}
and such a path ends at $x'$ if and only if it contains $r'$ R-actions at state 0, where
\begin{equation}
    (k + (q-p)r') \equiv x' \pmod{q} \label{eq:single_loop_counterexample_eq_2}
\end{equation}
Subtracting these two congruences (Eq. \ref{eq:single_loop_counterexample_eq_1} and \ref{eq:single_loop_counterexample_eq_2}) gives:

\begin{equation}
    (q-p)(r-r') \equiv x-x' \pmod{q}. \label{eq:single_loop_counterexample_eq_3}
\end{equation}

Substituting in Eq. \ref{eq:single_loop_counterexample_eq_0}  gives
\begin{equation}
    (q-p)(r-r') \equiv  (q-p)((q-1)/2)  \pmod{q}.
\end{equation}

Because $q$ is prime and $q-p < q$, the quantities $(q-p)$ and $q$ are relatively prime, so we can safely divide off $(q-p)$:

\begin{equation}
    r-r' \equiv (q-1)/2  \pmod{q}.
\end{equation}

Now either $r-r' > 0$, in which case $r-r'$ is at least $(q-1)/2$, or $r-r' < 0$, in which case $r-r'$ is at most $(q-1)/2 -q = -(q+1)/2$, so $r'-r$ is at least $(q+1)/2$. In either case, because $r$ and $r'$ are non-negative, one of $r$ or $r'$ must be at least  $(q-1)/2$. But $r$ and $r'$ must be at most  $\lceil k/p\rceil$, as noted above. Therefore, for $x$ and $x'$ to both be reached from state 0 in the same number of steps $k$, we must have:

\begin{equation}
   (q-1)/2 \leq \lceil k/p\rceil \leq (k-1)/p +1
\end{equation}
Then 
\begin{equation}
   \frac{1}{2} p(q-3) +1\leq k
\end{equation}
Therefore, AC-State will fail if:
\begin{equation}
  K \leq \frac{1}{2} p(q-3).
\end{equation}
Now, we need to frame this bound in terms of the radius $D$, rather than $p$ and $q$. First, note that the radius of the MDP is in fact $D=q-1$ (In particular, the longest distance between any two states is the distance from $i$ to $i-1$, for $0 < i \leq q-p$.)

Secondly, in order to make this bound as tight as possible in terms of $D$, we would like to make $p$ as close to $D$ as possible. It is known \citep{polymath2014variants} that there are an infinite number of pairs of primes $p,q$ such that  $q-p \leq 246$. Then, in terms of $D$ alone, we have that there are arbitrarily large values of $D$ for which AC-State can fail for all values of $K$ with:
\begin{equation}
  K \leq \frac{1}{2} (D-245)(D-2)  = \frac{1}{2}D^2 + O(D).
\end{equation}

If the Twin Prime conjecture holds, then this bound becomes tighter (although not asymptotically): in this case, there are arbitrarily large values of D for which AC-State can fail for all values of $K$ with:
\begin{equation}
  K \leq \frac{1}{2} (D-1)(D-2)  = \frac{1}{2} D^2 -  \frac{3}{2}D +1
\end{equation}

Finally, because we have shown that such MDPs can be constructed for an infinite number of arbitrarily large values of $D$, we can define $h(D) = \frac{1}{2} (D-245)(D-2) $ for such values of D, and $h(D) =0$ elsewhere. By the definition of $\limsup$, we have:
     \begin{equation}
        \limsup _{D\rightarrow \infty} \frac{h(D)}{\frac{1}{2} D^2} = 1
     \end{equation}
as desired, and thus $h(D) \in \Omega(D^2)$ (by the Hardy-Littlewood definition) as desired.
\end{proof}
While we have shown that we do in fact need to use  $K=\Omega(D^2)$ steps in the AC-State method (in environments with aperiodic latent dynamics, where AC-State works at all), it is natural to ask whether  $\Omega(D^2)$ steps are still necessary when using the ACDF loss. In the ACDF case, it is \textit{not} enough to show that the multistep inverse loss can conflate two individual states (i.e., to show that two states can have a witness distance quadratic in $D$, as above). Rather, in order to fail, the method must conflate some \textit{sets} of states such that the resulting dynamics on the combined ``states'' produced by the encoder are deterministic. In fact, this requirement does not hold for the construction above: empirically, in Section \ref{sec:numerica} we see that ACDF is able to consistently, successfully infer $\phi$ even with \textit{one-step} inverse dynamics ($K=1$) on the version of this MDP with $p=3$, $q=5$.\footnote{Shown in the first numerical experiment at the top of Figure \ref{fig:numeric_results}. Note that while the bound above for this problem would suggest that AC-State will fail for $K\leq 3$, a tighter analysis of the specific instance shows that it will in fact fail for $K \leq 6$, which is confirmed in the experimental results.} 

However, we are still able to construct a family of Ex-BMDPs such that ACDF will fail for $K$ smaller than a function $\in \Omega(D^2)$,  specifically, a function that goes as $D^2/4$.

\begin{proposition}
       $\forall D$, there exists an Ex-BMDP such that a minimal endogenous latent dynamics has a diameter of $D$, and such that the ACDF algorithm given by Equation \ref{eq:acdf} with a uniform exploration policy will return an encoder that does \textbf{not} produce a valid endogenous state representation using $K \leq h(D)$ -step inverse dynamics, where $h(D) \in \Omega(D^2)$. Here, we are using the Hardy-Littlewood $\Omega$ notation (not to be confused with Knuth's $\Omega$), and specifically $h(D)\sim D^2/4$, in the sense that:
     \begin{equation}
        \limsup _{D\rightarrow \infty} \frac{h(D)}{\frac{1}{4} D^2} = 1
     \end{equation}
\end{proposition}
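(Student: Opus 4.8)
The plan is to exhibit, for an infinite sequence of arbitrarily large $D$, an Ex-BMDP whose minimal control-endogenous dynamics have diameter $D$ and on which the encoder returned by $\mathcal L_\text{ACDF}$ is \emph{not} a valid endogenous representation for every $K \le h(D)$ with $h(D)\sim D^2/4$. The obstruction used in Proposition \ref{prop:ACState_fail} — a single pair of states with quadratic witness distance — is not enough here, because the forward term in $\mathcal L_\text{ACDF}$ rejects any encoder whose quotient dynamics are non-deterministic. So the construction must instead supply an encoder $\psi$ that (i) is a \emph{congruence} of the true deterministic automaton $M^*$ (so its quotient dynamics are deterministic and its forward loss is $0$), (ii) is nevertheless \emph{not} a valid endogenous representation, and (iii) for every $k\le K$ retains everything that multistep-inverse prediction of the first action can use (so its inverse loss is the minimum attainable). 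Given (i)--(iii), $\psi$ attains the global minimum of $\mathcal L_\text{ACDF}$ (since it simultaneously minimizes both terms) while having strictly smaller range than $M^*$; hence by the tie-breaking rule ACDF returns an encoder of range $<|\mathcal S^*|$, and if we arrange that $M^*$ is the \emph{unique} minimal valid representation, that encoder is necessarily invalid.

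For the gadget I would glue two cycles of coprime lengths $p<q$ at a common hub state, with the action chosen at the hub selecting which cycle is entered; this is the same "incommensurate return-times" idea as in Proposition \ref{prop:ACState_fail} and gives minimal endogenous diameter $\Theta(p+q)$. I then augment every state with a parity bit that is toggled each time one distinguished cycle is completed, and let $\psi$ be the map that forgets this bit. Then $\psi$ is a congruence because the bit never affects which position is reached next; but the bit is not exogenous, since whether it toggles depends on whether the toggling cycle was traversed, i.e. on the action taken at the hub, so the quotient is not a valid endogenous representation. The augmented automaton is deterministic, irreducible, and (using $\gcd(p,q)=1$ together with a parity choice among $p,q$) aperiodic, so Lemma \ref{lemma:one-to-one} applies and the only valid endogenous encoder is the identity on all the augmented states; hence every proper coarsening, $\psi$ in particular, is invalid.

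The heart of the argument is (iii). I would show that the endpoint bit is a deterministic function of the two hub-relative positions, of $k$, and of the starting bit, for all $k$ below a threshold that is $\Theta(pq)$: any path between two fixed positions decomposes into completed $p$-cycles and $q$-cycles plus a partial cycle reaching the target, two such paths of equal length have cycle-count vectors differing by an integer multiple of $(q,-p)$, and they differ in the parity of the toggling-cycle count only when that multiple is odd, which forces one of the two paths to contain at least $q$ (resp.\ $p$) completed cycles of the relevant type and hence to have length at least $pq$. Below that length the bit is recoverable from what $\psi$ already exposes, and since the behavior policy ignores the bit and the position dynamics ignore the bit, the bit is conditionally independent of the first action given the positions; therefore $\psi$ loses no multistep-inverse information for $k\le K$ once $K$ is below the threshold. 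I would then do the diameter bookkeeping to express the threshold in terms of $D$, choose $p,q$ near $D/2$ and coprime (consecutive integers suffice, or twin primes for the tightest constants) along an infinite sequence of $D$'s so that $pq\sim D^2/4$, and package the conclusion with the Hardy--Littlewood $\Omega$ definition exactly as in the proof of Proposition \ref{prop:ACState_fail}.

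The main obstacle I expect is this last bookkeeping: making the minimal endogenous diameter as small as possible \emph{relative to} the $\Theta(pq)$ threshold, i.e.\ choosing where the bit toggle sits and how the two cycles share structure so that reaching any augmented state from any other — including the occasional forced "parity-correction" detour — costs only $p+q+O(1)$ rather than a larger multiple of $p$ or $q$; this is precisely what pins the constant at $D^2/4$ instead of a smaller constant times $D^2$. Secondary care points are verifying uniqueness of the minimal valid representation in light of the non-independent initial-distribution discussion of Appendix \ref{sec:no_indep}, and confirming that the threshold is genuinely $\sim pq$ and not a smaller multiple, which requires tracking the uniform-policy weighting of the competing equal-length paths (the number of "don't-care" steps along each) once both become available.
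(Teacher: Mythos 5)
Your proposal follows essentially the same route as the paper's proof: the same ``two incommensurate return-times at a hub'' gadget augmented with a parity bit that is toggled by one of the two loops, the same appeal to Lemma \ref{lemma:one-to-one} to force the trivial encoder to be the unique valid representation, the same observation that forgetting the bit is a congruence (so the forward loss is zero) whose quotient is exactly the Proposition \ref{prop:ACState_fail} automaton, and the same mod-$q$ counting argument showing that reaching both bit-values of a fixed position from the hub in the same number of steps forces at least $q$ traversals of the $p$-loop and hence $k\geq (q-1)p+1$. Your remark that consecutive coprime integers suffice in place of bounded-gap primes is correct and would mildly simplify the paper's parameter choice (only $\gcd(p,q)=1$, $q$ odd, and $\gcd(q-p,q)=1$ are actually used).

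The one substantive gap is the piece you yourself flag as ``the main obstacle'': the geometry that pins the constant at $1/4$. With two \emph{disjoint} cycles of lengths $p$ and $q$ glued only at the hub, the augmented automaton's diameter is not $p+q+O(1)$: to go from a state just past the hub on the $q$-cycle with bit $0$ to a state just before the hub on the $q$-cycle with bit $1$, you must return to the hub ($q-1$ steps), traverse the entire toggling $p$-cycle to flip the bit ($p$ steps), and then go around again ($q-1$ steps), giving diameter $\approx 2q+p$ and hence only $pq \sim D^2/9$. The paper's construction resolves this by taking two full copies of the $q$-cycle and letting the hub's second action jump \emph{into the interior of the other copy} at offset $q-p+1$ (so the $p$-shortcut and the bit flip are the same move, and the $p$-loop is a sub-arc of the $q$-cycle rather than a detour); the worst-case path is then $1 \to \cdots \to 0 \to (q-p+1)' \to \cdots \to (q-p)'$ of length exactly $2q-1$, while the mod-$q$ obstruction is unaffected, yielding $h(D) = (q-1)p \sim \tfrac{1}{4}D^2$. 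Without this design choice your argument still proves the $\Omega(D^2)$ statement, but not the specific $\limsup_{D\to\infty} h(D)/(\tfrac14 D^2) = 1$ claimed in the proposition.
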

\begin{proof}
Consider two arbitrary primes $p,q$, with $p < q$, and let these primes define a deterministic Ex-BMDP $M$ with transition function $T(x,a)$ as follows:
\begin{itemize}
    \item $2q$ states, labeled $\mathcal{X} = \{0,...,q-2,q-1; 0',1'...,(q-1)'\}$
    \item Two actions, $\{L,R\}$
    \item $T(s,a)$ defined as:
    \begin{itemize}
        \item $T(0,L) = 1$
        \item $T(0,R) = (q-p+1)'$
        \item $T(0',L) = 1'$
        \item $T(0,R) = q-p+1$
        \item $\forall x\in \{1,...,q-1\}$: $T(x,L) = T(x,R) = (x+1)\,\%
        \,q$; and $T(x',L) = T(x',R) = ((x+1)\,\%
        \,q)'$, where $\%$ is the modulo operator.
    \end{itemize}
\end{itemize}
First, we note that this Ex-BMDP is irreducible and aperiodic. To show that it is irreducible, simply note that we can reach any state from any other state (i.e., any state $b$ can be reached from any state $a$ by simply taking action L $(b-a)\,\%\,q$ times; any state $b'$ can be reached from $a'$ in a similar manner; any $b'$ can be reached from $a$ by first going from $a$ to $0$ using L actions as described, using the action R to go from $0$ to $(q-p+1)'$, and then using L actions to go to $b'$; $b$ can be reached from $a'$ similarly). For aperiodicity, note that the state $0 \in \mathcal{X}$ has a self-loop of length $q$ (by taking L and then any other $q-1$ actions) and a self-loop of length $2p$ (by taking $R$ and then any other $p-1$ actions to get to state $0'$, then repeating this process to return to $0$). Because $p$ and $q$  are both prime and $q \neq 2$, the g.c.d. of the lengths of these two self-loops is $1$, so the Markov chain is aperiodic.  Then, by applying Lemma \ref{lemma:one-to-one}, we know that the \textit{only} valid endogenous latent encoder (up to a relabeling permutations) is the trivial one: $\phi(x) = x$. We can thus regard the states $\mathcal{S} = \mathcal{X}$ as the only ``true'' endogenous states: any encoder which maps any two states in $\mathcal{X}$ to the same endogenous state must be incorrect. 

We will show that there is an alternate encoder, $\phi'$, such that the dynamics on the inferred ``states'' produced by $\phi'$ are deterministic and the multistep inverse loss of $\phi'$ is minimal if $K \leq h(D)$, where $h(D) \in \Omega(D^2)$. Specifically, $\forall x \in [0,q-1]$, $\phi'$ maps both $x$ and $x'$ to the same element. To avoid ambiguity, we will refer to this element as $x^*$. Thus, for example, the element $3^*$ is the image of the set $\{3,3'\}$ under $\phi'$. Because $\phi'$ can only output $q$ unique elements, its range is smaller than that of the correct encoder $\phi$, and therefore ACDF will produce $\phi'$ instead of $\phi$ if both have the same loss. 

To show that $\phi'$ has zero forward dynamics loss, we need to verify that the dynamics on the elements $x^*$ are deterministic. In fact, they are:
\begin{itemize}
    \item $T(0^*,L) = 1^*$
    \item $T(0^*,R) = (q-p+1)^*$
    \item $\forall x\in \{1,...,q-1\}$: $T(x^*,L) = T(x^*,R) = ((x+1)\,\%\,q)^*$; 
\end{itemize}
which happen to be the dynamics of the Ex-BMDP used in the proof of Proposition \ref{prop:ACState_fail}.

As in the construction used in the proof of Proposition \ref{prop:ACState_fail}, we note that the first action only affects the dynamics, and hence will only be predictable, if the Ex-BMDP is initially in state $0$ or state $0'$. We also note that by the symmetry of the dynamics:
\begin{equation}
   \forall x\in \{0,...,q-1\},\, k\in \mathbb{N}\,\,\,\,\,\,\Pr(a_t = L | x_t=0,x_{t+k}=x) = \Pr(a_t = L | x_t=0',x_{t+k}=x') 
\end{equation}
and:
\begin{equation}
   \forall x\in \{0,...,q-1\},\, k\in \mathbb{N}\,\,\,\,\,\,\Pr(a_t = L | x_t=0',x_{t+k}=x) = \Pr(a_t = L | x_t=0,x_{t+k}=x') 
\end{equation}
Therefore, up to symmetry, the \textit{only} case in which  distinguishing states $x$ and $x'$ (where  $x\in \{0,...,q-1\}$) is necessary to minimize the multistep-inverse loss is when it is needed to distinguish 
\begin{equation}
  \Pr(a_t = L | x_t=0,x_{t+k}=x') 
\end{equation}
from 
\begin{equation}
  \Pr(a_t = L | x_t=0,x_{t+k}=x) 
\end{equation}
for some \textit{particular} choice of $x$ and $k$: otherwise $\phi'$ will be a sufficient encoder to minimize the loss.

For simplicity and without loss of generality assume that the Ex-BMDP is in state $x=0$ at timestep $t=0$. Then the state at timestep $k$ is given by:
\begin{equation}
    x_k = \begin{cases}
  (k + (q-p)r)\,\%\,q  &\text{if } r \text{ even} \\
   ((k + (q-p)r)\,\%\,q)'  &\text{if } r \text{ odd} \\
\end{cases}
\end{equation}
where $r$ is the number of times the action $R$ is selected while the Ex-BMDP is in the state $0$ \textit{or} the state $0'$, and is at most $\lceil k/p\rceil$. (because the agent can start to take the ``short'' p-length path from $0$ to $0'$ or vice-versa at most this many times in k total steps.) 
Then, in order to reach both $x$ and $x'$ in the same number of steps $k$, we need, for some even $r$  and odd $r'$:
\begin{equation}
    (k + (q-p)r)\,\%\,q  = (k + (q-p)r')\,\%\,q 
\end{equation}
which implies
\begin{equation}
    k + (q-p)r  \equiv k + (q-p)r' \pmod{q}
\end{equation}
Rearranging:
\begin{equation}
   0  \equiv (q-p)(r'-r) \pmod{q}
\end{equation}
Because $q$ is prime and $p<q$, we know that $q-p$ is relatively prime with $q$, so this equivalence implies:
\begin{equation}
   0  \equiv r'-r \pmod{q}
\end{equation}
Because $r'$ is odd and non-negative, this equivalence implies that $r' \geq q$. But then we have:
\begin{equation}
    q \leq r' \leq \lceil k/p\rceil \leq (k-1)/p +1
\end{equation}
Which gives us:
\begin{equation}
    (q-1)p +1 \leq k
\end{equation}
Therefore, ACDF will fail if:
\begin{equation}
  K \leq(q-1)p.
\end{equation}
Now, we need to frame this bound in terms of the radius $D$, rather than $p$ and $q$. First, note that the radius of the MDP is $D=2q-1$ (In particular, the longest distance between any two states is the distance from $1$ to $(q-p)'$, which requires going ``all the way around'' the non-primed states to $0$, going from $0$ to $(q-p+1)'$, and then going ``all the way around'' the primed states to $(q-p+1)'$.) then in terms of $D$ and the ``prime gap'' $g:= q-p$:
\begin{equation}
  K \leq \left( \frac{1}{2} (D-1) \right) \cdot \left( \frac{1}{2} (D+1)  - g\right) 
\end{equation}
As discussed in the proof of Proposition \ref{prop:ACState_fail}, there are an infinite number of pairs of primes with prime gaps bounded by a constant (=246), and under the Twin Primes conjecture, this bound can be made even tighter, with $g=2$. In either case, we have, for arbitrarily large values of $D$:
\begin{equation}
  K \leq  \frac{1}{4}D^2 + O(D).
\end{equation}
One can then proceed exactly as in Proposition \ref{prop:ACState_fail} to prove this proposition.
\end{proof}

\section{ACDF returns a correct, minimal Encoder } \label{sec:ACDF}
We show both that  every function $\phi$ which minimizes the ACDF loss is a valid endogenous state encoder of the Ex-BMDP, and that there exists a valid, minimal-state endogenous state encoder of the Ex-BMDP which minimizes the ACDF loss. This is sufficient to prove that the minimal-state encoder $\phi'$ returned Equation \ref{eq:acdf} will be a valid, minimal-state endogenous state encoder of the Ex-BMDP. Our main result is given as Theorem \ref{thm:main_thm} below.

To show that a particular $\phi'$ is a valid endogenous state encoder, we will use the fact that, by the definition of the Ex-BMDP, there must exist \textit{at least one} minimal endogenous state representation, which we will call $s^* \in \mathcal{S}^*$, and a corresponding exogenous state representation, which we will call $e^* \in \mathcal{E}^*$.  We assume this encoder follows the assumptions discussed in Section \ref{sec:assumptions}. Even though it is arbitrarily chosen, for simplicity in our proofs we will call this the ``canonical'' endogenous/exogenous representation. The ``canonical'' endogenous encoder $\phi^*$, which maps $\mathcal{X}$ to $\mathcal{S}^*$, and ``canonical'' exogenous encoder $\phi_e^*$, which maps $\mathcal{X}$ to $\mathcal{E}^*$, are also defined. We will similarly define the canonical emission distribution  $\mathcal{Q}^*$, such that $ x_{t} \sim \mathcal{Q}^*(x|s,e)$, and $\phi^*$ and $\phi_e^*$ are inverses of $\mathcal{Q}^*$.

We will also introduce a pair of new objects, which we refer to as the  ``enhanced exogenous encoder'' $\bar{\phi_e}^*(x)$ and ``enhanced exogenous state'' $\bar{\mathcal{E}}^*$. These are relevant if the dynamics on the canonical endogenous states $\mathcal{S}^*$ are periodic. Specifically, let $k$ be the periodicity of the canonical endogenous transition function $T^*$, and let $\text{c}(s) \in \mathcal{S}^* \rightarrow [0,k-1]$ refer to the cyclic class of the canonical state $s \in  \mathcal{S}^*$. (We can fix any labeling of these classes; if $T^*$ is aperiodic, then $\forall s, \text{c}(s) = 0$). Then the ``enhanced exogenous state'' is defined by concatenating the canonical exogenous state with $ \text{c}(\phi^*(x))$; that is,  $\bar{\phi_e}^*(x) := (\phi_e^*(x), \text{c}(\phi^*(x)))$. Note that because the cyclic class of $s$ evolves  independently of actions, the dynamics of states in $\bar{\mathcal{E}}^*$ is Markovian and exogenous. For a state $e' \in \bar{\mathcal{E}}^*$, we will refer to the two components as $e'[0]$ and $e'[1]$.

We then introduce the following ``simulation lemma'':

\begin{lemma}
    Consider an Ex-BMDP with states $\mathcal{X}$, transition function $\mathcal{T}$, canonical endogenous states  $\mathcal{S}^*$ and canonical endogenous states  $\mathcal{S}^*$, canonical encoders $\phi^*$ and  $\phi_e^*$, and canonical emission distribution $\mathcal{Q}^*$. If some encoder $\phi'$ exists, such that:
    \begin{enumerate}
        \item $\forall x \in \mathcal{X},$ there is a deterministic function which maps from the pair $(\bar{\phi_e^*}(x),\phi'(x))$  to $\phi^*(x)$.
        \item The dynamics on the encoded latent states produced by $\phi'(x)$ are deterministic. That is, if we fix any $s \in \text{Range}(\phi')$ and $a \in \mathcal{A}$, then $\forall x \in \{x| \phi'(x) = s\},$ let $x' \sim \mathcal{T}(x,a)$, then $\phi'(x')$ takes a single deterministic value, a function only of $s$ and $a$. Additionally, we assume that the range of $\phi'$ is discrete and finite.
    \end{enumerate}
    then $\phi'$ is a valid endogenous latent encoder for the Ex-BMDP.
    \label{lemma:simulation}
\end{lemma}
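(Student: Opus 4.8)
The goal is to exhibit a valid Ex-BMDP factorization of the transition function $\mathcal{T}$ (in the sense of Equation~\ref{eq:def_exbmdp}) whose endogenous encoder is the given $\phi'$. The plan is to take $\mathcal{S}' := \text{Range}(\phi')$ with endogenous transition function $T'(s',a)$ equal to the deterministic successor value supplied by hypothesis~2, and to use the enhanced exogenous encoder itself as the new exogenous encoder: set $\phi_e'(x) := \bar{\phi_e}^*(x) = (\phi_e^*(x),\text{c}(\phi^*(x)))$ and $\mathcal{E}' := \text{Range}(\bar{\phi_e}^*) \subseteq \mathcal{E}^* \times [0,k-1]$, which is finite since $\mathcal{E}^*$ is. First I would check that $\phi_e'$ has exogenous dynamics: its first coordinate evolves by $\mathcal{T}_e$ independently of the action, and its second coordinate, the cyclic class of the canonical endogenous state, increments by $1 \bmod k$ on every transition regardless of the action (the defining property of cyclic classes under a full-support policy), so $\mathcal{T}_e'$ is a well-defined action-independent Markov kernel.

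The crux is the emission distribution. By hypothesis~1 there is a deterministic $F$ with $F(\bar{\phi_e}^*(x),\phi'(x)) = \phi^*(x)$ for every $x$, so for a jointly-realizable pair $(s',e')$ the quantities $F(e',s')$ and $e'[0]$ recover the canonical pair $(s^*,e^*)$ emitting it, and one also gets the bookkeeping identity $\text{c}(F(e',s')) = e'[1]$. The structural fact I would establish first is \emph{fibre-constancy of $\phi'$}: $\phi'$ is constant on every canonical fibre $\{x : (\phi^*(x),\phi_e^*(x)) = (s^*,e^*)\}$. Indeed, by the no-transient-states assumption every observation is recurrent, so any $x_0$ in such a fibre has an in-edge $(x_{-1},a_{-1})$; any other $x_1$ in the same fibre lies in the support of the same $\mathcal{Q}^*(\cdot\mid s^*,e^*)$ and hence is also a possible successor of $x_{-1}$ under $a_{-1}$, so hypothesis~2 forces $\phi'(x_0)=\phi'(x_1)=T'(\phi'(x_{-1}),a_{-1})$. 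Given this, define $\mathcal{Q}'(x\mid s',e') := \mathcal{Q}^*(x\mid F(e',s'),e'[0])$; fibre-constancy makes $\phi'(x)=s'$ hold on its support, and the identity above makes $\phi_e'(x)=\bar{\phi_e}^*(x)=e'$ hold there too, which simultaneously gives the inverse property and the block assumption for $\mathcal{Q}'$ (distinct $(s',e')$ have disjoint supports because every emitted $x$ is forced back to $(s',e')$).

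It remains to verify the factorization reproduces $\mathcal{T}$; since this is a property of the one-step kernel, it suffices to fix $x_t$ with canonical pair $(s^*_t,e^*_t)$ and encodings $(s'_t,e'_t)$ and an action $a_t$, and compare successor laws. Under the construction, $s'_{t+1}=T'(s'_t,a_t)$, $e'_{t+1}=(e^*,\text{c}(s^*_t)+1)$ with $e^*\sim\mathcal{T}_e(\cdot\mid e^*_t)$, and $x_{t+1}\sim\mathcal{Q}'(\cdot\mid s'_{t+1},e'_{t+1})$. Applying $F$ to any concrete emission of the canonical pair $(T^*(s^*_t,a_t),e^*)$ — which is a possible $x_{t+1}$, hence has $\phi'$-value $s'_{t+1}$ by hypothesis~2 — shows $F((e^*,\text{c}(s^*_t)+1),s'_{t+1})=T^*(s^*_t,a_t)$, so $\mathcal{Q}'(\cdot\mid s'_{t+1},(e^*,\text{c}(s^*_t)+1))=\mathcal{Q}^*(\cdot\mid T^*(s^*_t,a_t),e^*)$ with no renormalization needed; marginalizing over $e^*$ reproduces exactly $\sum_{e^*}\mathcal{T}_e(e^*\mid e^*_t)\,\mathcal{Q}^*(\cdot\mid T^*(s^*_t,a_t),e^*)$, the true one-step law under the canonical decomposition. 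I expect the main obstacle to be precisely the transfer from hypothesis~2, which constrains $\phi'$ only along transitions out of $x_t$, to control over the $\phi'$-value of \emph{every} emission of the successor canonical pair regardless of which predecessor produced it — this is exactly where fibre-constancy and the no-transient-states assumption are load-bearing; the periodic bookkeeping handled by the enhanced exogenous coordinate is routine by comparison.
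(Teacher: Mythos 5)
Your proposal is correct and follows essentially the same route as the paper's proof: the same decomposition (endogenous encoder $\phi'$, exogenous encoder $\bar{\phi_e}^*$), the same pulled-back emission distribution $\mathcal{Q}'(x\mid s',e') := \mathcal{Q}^*(x\mid F(s',e'), e'[0])$, and the same common-predecessor-plus-determinism argument (via irreducibility of $\mathcal{S}^*$ and the absence of transient states in $\mathcal{E}^*$) forcing $\phi'$ to be constant on canonical fibres, which yields the block and inverse properties. The only cosmetic difference is that you front-load fibre-constancy as a standalone claim and verify the one-step kernel directly, whereas the paper interleaves these steps in a longer displayed chain of equalities.
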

\begin{proof}
We will show that the Ex-BMDP's transition function $\mathcal{T}$ can be decomposed into endogenous and exogenous parts as defined in Equation \ref{eq:def_exbmdp}, such that the endogenous encoder is $\phi'$, and the exogenous decoder is $\bar{\phi_e}^*(x)$. We will refer to the endogenous/exogenous states under this decomposition as $s'$ and $e'$. We showed above that the enhanced exogenous state has Markovian dynamics which do not depend on actions; we will refer to these dynamics as $\mathcal{T}_e'$.

By assumption, there is a deterministic function $F(\cdot,\cdot)$, which maps $(\phi'(x),\bar{\phi_e^*}(x))$  to $\phi^*(x)$. There is also, by assumption, a deterministic transition function $T'(\cdot, \cdot)$ which maps $\phi'(x)$ and $a$ to $\phi'(x')$, where $x'$ is any element in the support of $\mathcal{T}(x,a)$.

Then, we can define the emission distribution $\mathcal{Q'}$ as:

\begin{equation}
   \forall x \in \mathcal{X}, s' \in \mathcal{S}', e' \in \bar{\mathcal{E}}',\,\, \Pr_{Q'}(x_t = x | s'_t = s', e'_t = e' ) :=  \Pr_{Q^*}(x_t = x | s^*_t = F(s',e'), e^*_t = e'[0] ) \label{eq:def_q'}
\end{equation}

We now show that:
\begin{enumerate}
    \item The complete transition function given by:
    \begin{equation}
\begin{split}
    x_{t+1} &\sim \mathcal{Q}'(x|s'_{t+1},e'_{t+1}),\\
    s'_{t+1} &= T'(s'_{t}, a_{t}), \,\,\,\,\, s'_t = \phi'(x_t)\\
    e'_{t+1} &\sim \mathcal{T}'_e(e'|e'_{t}),\,\,\,\,\, e'_t = \bar{\phi_e}^*(x),\\
\end{split} 
\end{equation}
is equivalent to the Ex-BMDP transition function $\mathcal{T}$, which by assumption can be expressed as:
\begin{equation}
    \begin{split}
    x_{t+1} &\sim \mathcal{Q}^*(x|s^*_{t+1},e^*_{t+1}),\\
    s^*_{t+1} &= T^*(s^*_{t}, a_{t}), \,\,\,\,\, s^*_t = \phi^*(x_t)\\
    e^*_{t+1} &\sim \mathcal{T}^*_e(e^*|e^*_{t}),\,\,\,\,\, e^*_t = \phi_e^*(x),\\
\end{split} 
\end{equation}
To show this fact, is it sufficient to demonstrate that the overall transition probabilities on $\mathcal{X}$ are equal in the two models. That is, we must show that:

\begin{equation}
  \Pr_\mathcal{T}(x_{t+1} \hspace{-3pt}= x|x_t,a_t) = \hspace{-6pt}\sum_{e' \in \bar{\mathcal{E}}^*} \hspace{-3pt}\Pr_{Q'}(x_{t+1}\hspace{-3pt}=x| s_{t+1}' \hspace{-3pt}=T'(\phi'(x_t) ,a_t), e'_{t+1}\hspace{-3pt}= e') \cdot \Pr_{\mathcal{T}_e'}( e' | e'_t = \bar{\phi}^*_e(x_t)) 
\end{equation}.

We can proceed as follows:
\begin{align}
    \sum_{e' \in \bar{\mathcal{E}}^*} \Pr_{Q'}(x_{t+1} =x| s_{t+1}' =T'(\phi'(x_t) ,a_t), e'_{t+1} = e') \cdot \Pr_{\mathcal{T}_e'}( e' | e'_t = \bar{\phi}^*_e(x_t)) &=\\
    \sum_{e' \in \bar{\mathcal{E}}^*} \hspace{-3pt}\Pr_{Q'}(x_{t+1}\hspace{-3pt}=x| s_{t+1}' =\phi'(x_{t+1}), e'_{t+1} = e') \cdot \Pr_{\mathcal{T}_e'}( e' | e'_t = \bar{\phi}^*_e(x_t)) &= \label{eq:main_proof_lemma_1}\\  
    \sum_{e' \in \bar{\mathcal{E}}^*} \hspace{-3pt}\Pr_{Q^*}(x_{t+1}\hspace{-3pt}=x | s^*_{t+1} = F(\phi'(x_{t+1}),  e'), e^*_{t+1} = e'[0] )
    \cdot \Pr_{\mathcal{T}_e'}( e' | e'_t = \bar{\phi}^*_e(x_t)) &=\label{eq:main_proof_lemma_2}\\
    \sum_{e' \in \bar{\mathcal{E}}^*} \hspace{-3pt}\Pr_{Q^*}(x_{t+1}\hspace{-3pt}=x | s^*_{t+1} \hspace{-3pt}= \hspace{-3pt}F(\phi'(x_{t+1}),  e'), e^*_{t+1} \hspace{-3pt}= e'[0], \text{c}(s^*_{t+1}) \hspace{-3pt}= \hspace{-3pt}e'[1]  ) \hspace{-2pt} \cdot \hspace{-2pt} \Pr_{\mathcal{T}_e'}( e' | e'_t = \bar{\phi}^*_e(x_t)) &=\label{eq:main_proof_lemma_3}\\  
    \sum_{e' \in \bar{\mathcal{E}}^*} \hspace{-3pt}\Pr_{Q^*}(x_{t+1}\hspace{-3pt}=\hspace{-2pt}x | s^*_{t+1} \hspace{-3pt}= \hspace{-3pt}F(\phi'(x_{t+1}),  e'_{t+1}), e^*_{t+1} \hspace{-3pt}= \hspace{-2pt}e'[0], \text{c}(s^*_{t+1}) \hspace{-3pt}= \hspace{-3pt}e'[1]  )
   \hspace{-2pt} \cdot \hspace{-2pt} \Pr_{\mathcal{T}_e'}( e' | e'_t \hspace{-2pt}= \hspace{-2pt}\bar{\phi}^*_e(x_t))\hspace{-3pt}&=\label{eq:main_proof_lemma_4}\\   
    \sum_{e' \in \bar{\mathcal{E}}^*} \hspace{-3pt}\Pr_{Q^*}(x_{t+1}\hspace{-3pt}=x | s^*_{t+1} = \phi^*(x_{t+1}), e^*_{t+1} = e'[0], \text{c}(s^*_{t+1})= e'[1]  )  \cdot \Pr_{\mathcal{T}_e'}( e' | e'_t = \bar{\phi}^*_e(x_t)) &=\label{eq:main_proof_lemma_5}\\     
    \sum_{e' \in \bar{\mathcal{E}}^*} \hspace{-3pt}\Pr_{Q^*}(x_{t+1}\hspace{-3pt}=x | s^*_{t+1} =T^*(\phi^*(x_t),a_t), e^*_{t+1} = e'[0] )  \cdot \Pr_{\mathcal{T}_e'}( e' | e'_t = \bar{\phi}^*_e(x_t)) &=\label{eq:main_proof_lemma_6}\\ 
    \begin{split}
            \sum_{e' \in \bar{\mathcal{E}}^*} \hspace{-3pt}\Pr_{Q^*}(x_{t+1}\hspace{-3pt}=x | s^*_{t+1} =T^*(\phi^*(x_t),a_t), e^*_{t+1} = e'[0] ) &\\
            \cdot \Pr_{\mathcal{T}_e^*}(e'[0]  | e^*_t = \phi^*_e(x_t)) \cdot  \mathbbm{1}_{e'[1] \equiv \text{c}(\phi^*(x_t)) +1\hspace{-6pt} \pmod{k}} &= \label{eq:main_proof_lemma_7}
    \end{split}\\
    \sum_{e'[0] \in \mathcal{E}^*}\hspace{-3pt}\Pr_{Q^*}(x_{t+1}\hspace{-3pt}=x | s^*_{t+1} =T^*(\phi^*(x_t),a_t), e^*_{t+1} = e'[0] )  \cdot \Pr_{\mathcal{T}_e^*}(e'[0]  | e^*_t = \phi^*_e(x_t)) &=\label{eq:main_proof_lemma_8}\\  
    \sum_{e^* \in \mathcal{E}^*}\hspace{-3pt}\Pr_{Q^*}(x_{t+1}\hspace{-3pt}=x | s^*_{t+1} =T^*(\phi^*(x_t),a_t), e^*_{t+1} = e^* )  \cdot\Pr_{\mathcal{T}_e^*}(e^*  | e^*_t = \phi^*_e(x_t)) &=\label{eq:main_proof_lemma_9}\\   
     \Pr_\mathcal{T}(x_{t+1} \hspace{-3pt}= x|x_t,a_t) \label{eq:main_proof_lemma_10}
\end{align}
where Eq. \ref{eq:main_proof_lemma_1} follows by the assumption that $T'$ exists and is deterministic (Assumption 2 of the Lemma); Eq.  \ref{eq:main_proof_lemma_2}  follows by Assumption 1 of the Lemma;  Eq.  \ref{eq:main_proof_lemma_3} follows from the fact that, if $\Pr_{\mathcal{T}_e'}( e' | e'_t = \bar{\phi}^*_e(x_t))$ is nonzero, then $e'[1]$ must follow $e'_t[1] \pmod{k}$, therefore  $e'[1] = \text{c}(s^*_{t+1})$ for all of these terms; the conditioning is merely making this constraint on $e'$ explicit. Then in Eq. \ref{eq:main_proof_lemma_4} we use the two conditions that  $e'[0] = e^*_{t+1}$  and  $e'[1] = \text{c}(s^*_{t+1})$ to conclude that $e' = e'_{t+1}$, and substitute using this identity in the other condition. Eq. \ref{eq:main_proof_lemma_5} follows from the definition of $F(\cdot,\cdot)$; and Eq. \ref{eq:main_proof_lemma_6} follows from the definition of the canonical endogenous transition function $T^*$ (we also hide the implicit conditioning/constraint on $e'[1]$, which as discussed above is redundant on all nonzero terms.)  Eq. \ref{eq:main_proof_lemma_7} factors the transition function $\mathcal{T}_e'$ into the canonical exogenous transition function $\mathcal{T}_e^*$ and the deterministic procession of the cyclic class of $s^*$.  Note that $e'[1]$ now only appears in the final indicator function term. Furthermore, the indicator is equal to 1 for exactly one of the possible values of $e'[1]$. Therefore, in Eq. \ref{eq:main_proof_lemma_8} we  split the sum over $e' \in \bar{\mathcal{E}}^*$ into a sum over $e'[0] \in\mathcal{E}^*$ and a sum over $e'[1] \in [0,k-1]$. The latter sum can be factored to apply only to the indicator function term, and the resulting sum is equal to exactly 1, so it can be eliminated. Eq. \ref{eq:main_proof_lemma_9} is simply a notational change, from $e'[0]$ to $e^*$. Finally, Eq. \ref{eq:main_proof_lemma_10} holds directly from the definition of the Ex-BMDP in terms of the canonical endogenous and exogenous dynamics.
\item The conditions on Eq. \ref{eq:def_exbmdp} are met:
\begin{itemize}
    \item $\mathcal{S}'$, the range of $\phi'$ is  finite, by assumption.
    \item The transition function $T'$ is deterministic, by assumption.
    \item The transition function $\mathcal{T}_e'$ is Markovian and independent of actions, because it consists of factor $\mathcal{T}_e^*$, which is Markovian and independent of actions, and a cyclic term which evolves deterministically as $n \rightarrow (n +1)\,\%\,k $, and is thus also Markovian and independent of actions.
    \item $Q'$ follows the ``block'' assumption, because the encoders $\phi'$ and $\bar{\phi}_e^*$ are both functions, which implies that every value of $x \in \mathcal{X}$ corresponds to a unique $(s',e')$ pair.
    \item  $\phi'$ and $\bar{\phi}_e^*$ are inverses of $Q'$:
    in other words, if $x\sim Q'(s',e')$, then $\phi'(x) = s'$ and  $\bar{\phi}_e^*(x) = e'$. This statement is only meaningful (i,e., $ Q'(s',e')$ is only defined) in cases where $s'$ and $e'$ can co-occur (which is not always the case if $\mathcal{T}_e'$ and $T'$ are both periodic with the same period.)  The sets $\mathcal{S}'$ and $\bar{\mathcal{E}}^*$ are defined as the ranges of their respective encoders, so $(s',e')$ can co-occur if and only if there is some  $x' \in \mathcal{X}$ such that $s' = \phi'(x')$ and $e' = \bar{\phi}_e^*(x')$ . Then we must show that  $x\sim Q'(\phi(x'),\bar{\phi}_e^*(x'))$ implies $\phi(x) = \phi(x')$ and $ \bar{\phi}_e^*(x) = \bar{\phi}_e^*(x')$.  For the first implication, note that, by Eq. \ref{eq:def_q'}, we have that $x\sim Q^*(F(\phi(x'), \bar{\phi}_e^*(x')), \bar{\phi}_e^*(x')[0])$, and therefore that $\phi^*(x) = F(\phi(x'),\bar{\phi}_e^*(x'))$ and $\phi_e^*(x) =  \bar{\phi}_e^*(x')[0] = \phi_e^*(x')$ . Then, by the definition of $F$,  $\phi^*(x) = \phi^*(x')$ and $\phi_e^*(x) = \phi_e^*(x')$. Then, from the dynamics of the canonical latent encoding (specifically, the bounded diameter assumption of $\mathcal{S}^*$ and lack of transient states in $\mathcal{E}^*$), we know that there must exist some $x''$ such that $x''$ can transition to both $x$ and $x'$ under the same action $a$. Then, by the determinism of $T'$, we have that $T'(\phi'(x''),a) = \phi'(x) = \phi'(x')$, so $\phi(x) = \phi(x')$ as desired. Given this result and the fact that $\phi_e^*(x) = \phi_e^*(x')$, it follows by definition that $ \bar{\phi}_e^*(x) = \bar{\phi}_e^*(x')$. 
\end{itemize}
\end{enumerate}

\end{proof}
Given this lemma, we can now prove our main theorem:
\begin{theorem}
    Given an Ex-BMDP, under the assumptions listed in Section \ref{sec:assumptions}, there exists a correct minimal-state endogenous state encoder $\phi^*$ that will minimize the ACDF loss given in Equation \ref{eq:acdf} on the Ex-BMDP. Conversely, any encoder $\phi'$ which minimizes the ACDF loss given in Equation \ref{eq:acdf} on the Ex-BMDP is a correct endogenous state representation for the Ex-BMDP. As a consequence, the encoder $\phi'$ which minimizes Equation \ref{eq:acdf} with the minimum number of output states is a correct minimal-state endogenous representation. \label{thm:main_thm}
\end{theorem}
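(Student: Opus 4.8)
The plan is to treat the three sentences of the statement in turn, reducing the converse direction to the simulation lemma (Lemma~\ref{lemma:simulation}).

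\textbf{Floor values of the two loss terms, and the existence claim.} First I would observe that each of the two terms in $\mathcal{L}_\text{ACDF}$ (Equation~\ref{eq:acdf}) has an encoder-independent lower bound. The forward term $\min_g \mathbb{E}[-\log g_{\phi(x_{t+1})}(\phi(x_t),a_t)]$ is nonnegative, and equals $0$ exactly when $\phi(x_{t+1})$ is a deterministic function of $(\phi(x_t),a_t)$ wherever that pair occurs with positive probability -- by the coverage assumption of Section~\ref{sec:assumptions}, on every reachable $(\phi(x),a)$. The inverse term equals $\mathbb{E}_k\, H(a_t\mid \phi(x_t),\phi(x_{t+k});k)$, which by the data-processing inequality is at least $\mathbb{E}_k\, H(a_t\mid x_t,x_{t+k};k)$, with equality iff $(\phi(x_t),\phi(x_{t+k}))$ is a sufficient statistic for $a_t$. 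Using that $\pi$ depends only on $\phi^*$ and that the exogenous dynamics are action-independent, one checks $\Pr(a_t=a\mid x_t,x_{t+k})\propto \pi(a\mid\phi^*(x_t))\,\Pr(\phi^*(x_{t+k})\mid \phi^*(x_{t+1})=T^*(\phi^*(x_t),a))$, so $\phi^*$ is such a sufficient statistic and $\phi^*(x_{t+1})=T^*(\phi^*(x_t),a_t)$ is deterministic. Hence the minimal-state canonical encoder $\phi^*$ attains \emph{both} floors simultaneously, which proves the first sentence (minimality of $|\mathcal{S}^*|$ being one of the standing assumptions).

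\textbf{Converse via the simulation lemma.} Let $\phi'$ be any minimizer of $\mathcal{L}_\text{ACDF}$. Since $\phi^*$ attains the joint floor, so does $\phi'$, term by term. Attaining the forward floor is exactly hypothesis~2 of Lemma~\ref{lemma:simulation} (deterministic latent dynamics, with finite range since $\mathcal{X}$ is finite). For hypothesis~1 I must show: $\phi'(x)=\phi'(x')$ and $\bar{\phi_e^*}(x)=\bar{\phi_e^*}(x')$ imply $\phi^*(x)=\phi^*(x')$. Suppose not, and write $s^*=\phi^*(x)\neq\phi^*(x')=\tilde s$. Equality of $\bar{\phi_e^*}$ forces $\phi_e^*(x)=\phi_e^*(x')$ and $\text{c}(s^*)=\text{c}(\tilde s)$, so $s^*$ and $\tilde s$ lie in the same cyclic class of $T^*$; by Theorem~\ref{thm:finite_wd} their witness distance is finite, $W(s^*,\tilde s)=k\le 2D^2+D\le D'$, with a witness $c$ having length-exactly-$k$ paths to both. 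Minimality of $k$ forces the set of first actions along length-$k$ paths $c\to s^*$ to be disjoint from that along length-$k$ paths $c\to\tilde s$ (otherwise both would be reachable from the same successor of $c$ in $k-1$ steps). Using Lemma~\ref{lemma:reachability} and the coverage assumption, I would exhibit a sampled triple $(x_t,a_t,x_{t+k})$ with $\phi^*(x_t)=c$ and $x_{t+k}\in\{x,x'\}$; then any encoder identifying $x$ with $x'$ assigns a strictly larger conditional entropy to $a_t$ than the correct one, so it cannot attain the inverse floor -- contradicting that $\phi'$ is a minimizer. Thus hypothesis~1 holds, and Lemma~\ref{lemma:simulation} yields that $\phi'$ is a valid endogenous encoder, proving the second sentence.

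\textbf{Minimal-state consequence.} Combining the two directions: $\phi^*$ minimizes $\mathcal{L}_\text{ACDF}$ and has $|\text{Range}(\phi^*)|=|\mathcal{S}^*|$, the least possible among valid endogenous encoders; and every minimizer of $\mathcal{L}_\text{ACDF}$ is a valid endogenous encoder, hence has at least $|\mathcal{S}^*|$ output states. So the minimizer with the fewest output states has exactly $|\mathcal{S}^*|$ of them and is a correct minimal-state endogenous representation.

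\textbf{Main obstacle.} I expect hypothesis~1 of the simulation lemma to be the delicate part: converting ``$s^*,\tilde s$ have finite witness distance'' into ``a minimizer of the inverse term must separate $x$ and $x'$'' requires actually producing the needed observations inside $\mathcal{X}$ -- coverage only promises transitions among states that genuinely occur, and in the periodic regime $\mathcal{X}$ need not contain every $(s,e)$ pair, which is precisely why Lemma~\ref{lemma:reachability} is needed. Conditioning on the enhanced exogenous state $\bar{\phi_e^*}$ rather than $\phi_e^*$ is what rescues the periodic case, where states in distinct cyclic classes have infinite witness distance and can only be absorbed into the exogenous factor.
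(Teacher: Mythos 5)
Your proposal is correct and follows essentially the same route as the paper's proof: establish that the canonical minimal-state encoder $\phi^*$ attains the encoder-independent floors of both loss terms simultaneously, then show any joint minimizer satisfies the two hypotheses of Lemma~\ref{lemma:simulation} — the forward-loss floor giving deterministic latent dynamics, and the witness-distance/disjoint-first-action contradiction (using Lemma~\ref{lemma:reachability} and the no-transient-exogenous-states assumption to realize the witness observation inside $\mathcal{X}$) giving the deterministic map from $(\bar{\phi_e^*}(x),\phi'(x))$ to $\phi^*(x)$. The delicate point you flag at the end is exactly the one the paper handles by first choosing an exogenous state $e^*$ that reaches $\phi_e^*(x)$ in exactly $w$ steps and then invoking Lemma~\ref{lemma:reachability}.
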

\begin{proof}

Our general approach is as follows:
\begin{enumerate}
    \item We show that there exists a minimal-state endogenous encoder which \textit{simultaneously} minimizes \textit{both} the multi-step inverse loss term and the latent forward loss term in Equation \ref{eq:acdf}. As a consequence, the encoder also minimizes the overall loss, the sum of these two terms.
    \item We show that any $\phi'$ which achieves the minimum possible value of the \textit{both} terms in the ACDF loss (\textit{independently}) is a correct endogenous-state encoder.
    \item Due to item (1) above, we know that it is possible to simultaneously minimize both loss terms. Therefore, \textit{any} $\phi'$ which achieves the minimum value of the overall loss in  Equation \ref{eq:acdf} must also minimize both loss terms independently. Then we can conclude that any $\phi'$ which minimizes the overall loss $\mathcal{L}_{\text{ACDF}}$ is a correct endogenous latent state encoder. 
    \item Finally, because all minimizers of $\mathcal{L}_{\text{ACDF}}$ are correct endogenous state encoders, and we know that at least one of these minimizers is minimal-state among correct encoders, it follows that the  minimizers of $\mathcal{L}_{\text{ACDF}}$ which have the fewest numbers of output states are all minimal-state endogenous state encoders.
\end{enumerate}
It show that points (1) and (2) above are correct; points (3) and (4) follow immediately.

\textbf{(1) A correct, minimal-state encoder minimizes both loss terms.}

Let $\phi^*$ be the correct ``canonical'' minimal-state endogenous state encoder, as described in Section~\ref{sec:assumptions}. To show that $\phi^*$ minimizes the multistep inverse loss, we can proceed exactly as in Proposition 5.1 in \cite{lamb2022guaranteed}.

However, for completeness, we present a version of the proof here. Recall that, by assumption (4) listed in Section \ref{sec:assumptions}, the behavior policy used to collect data depends only on the endogenous state $s^*$. Let $M$ be the represent the transition operator of the entire Ex-BMDP on $\mathcal{X}$ under the behavior policy, so that $x_{t+k} \sim M^k x_t$. Similarly, let $M_e$ represent the exogenous-state transition operator under the exogenous transition function $\mathcal{T}_e^*$ and  $M_s$ represent the endogenous-state transition operator under the behavioral policy $\pi$ and the exogenous transition function $T^*$. Then, by the definition of the Ex-BMDP dynamics, we have that

\begin{equation}
\Pr_{M^k}(x_{t+k} = x | x_t) = \Pr_{Q^*}(x | \phi^*(x), \phi^*_e(x)) \cdot \Pr_{M_e^k}(e^*_{t+k} = \phi^*_e(x) | \phi^*_e(x_t))  \cdot \Pr_{M_s^k}(s^*_{t+k} = \phi^*(x) | \phi^*(x_t)) \label{eq:full_mk}
\end{equation}

Additionally:

\begin{equation}
\begin{split}
    &\Pr_{\mathcal{M}^{k-1}, \mathcal{T}} (x_{t+k} = x | a_t = a, x_t) =\\ &\Pr_{Q^*}(x | \phi^*(x), \phi^*_e(x)) \cdot \Pr_{M_e^k}(e^*_{t+k} = \phi^*_e(x) | \phi^*_e(x_t))  \cdot \Pr_{M_s^{k-1}}(s^*_{t+k} = \phi^*(x) | s^*_{t+1} = T^*(\phi^*(x_t))) \label{eq:partial_mk}
\end{split}
\end{equation}

Now, for any $(x_t, x_{t+k};k)$ tuple, the multistep-inverse loss in Equation \ref{eq:acdf} is minimized by the function $f^{\text{opt.}}$ defined as: $f^{\text{opt.}}_{a_t}(x_t,  x_{t+k}; k) := \Pr_{\mathcal{D}_{(k)}}(a_t|x_t,  x_{t+k}) $ where $\mathcal{D}_{(k)}$ is the sampling distribution over which we compute the loss, as defined in Section \ref{sec:assumptions}; note that the transition function on this distribution is the $M$ defined above. Now, we can write:

\begin{equation}
\begin{split}
 \mathop{\Pr}_{\mathcal{D}_{(k)}}(a_t= a|x_t,  x_{t+k} = x)  &=\\ 
 \frac{
 \mathop{\Pr}_{\pi}(a_t= a| x_t)
 \cdot \mathop{\Pr}_{\mathcal{M}^{k-1}, \mathcal{T}}(x_{t+k}=x |a_t= a, x_t) }{\mathop{\Pr}_{M^k}(x_{t+k}=x| x_t)} &=\\
 \frac{
\begin{multlined}
     \mathop{\Pr}_{\pi}(a_t= a| \phi^*(x_t))
 \cdot  \\ \mathop{\Pr}_{Q^*}(x | \phi^*(x), \phi^*_e(x)) \cdot \mathop{\Pr}_{M_e^k}(e^*_{t+k} = \phi^*_e(x) | \phi^*_e(x_t))  \cdot \mathop{\Pr}_{M_s^{k-1}}(s^*_{t+k} = \phi^*(x) | s^*_{t+1} = T^*(\phi^*(x_t)))
 \end{multlined}}{\mathop{\Pr}_{Q^*}(x | \phi^*(x), \phi^*_e(x)) \cdot \mathop{\Pr}_{M_e^k}(e^*_{t+k} = \phi^*_e(x) | \phi^*_e(x_t))  \cdot \mathop{\Pr}_{M_s^k}(s^*_{t+k} = \phi^*(x) | \phi^*(x_t)) } &=\\
  \frac{
 \mathop{\Pr}_{\pi}(a_t= a|\phi^*(x_t))
 \cdot  \mathop{\Pr}_{M_s^{k-1}}(s^*_{t+k} = \phi^*(x) | s^*_{t+1} = T^*(\phi^*(x_t))) }{\mathop{\Pr}_{M_s^k}(s^*_{t+k} = \phi^*(x) | \phi^*(x_t)) } &\\
\end{split}
\end{equation}

Where the we use Equations \ref{eq:full_mk} and \ref{eq:partial_mk}, and the fact that $\pi$ depends only on the endogenous state, in the third line. Note that the final line of the equation depends on $x$ and $x_t$ only through $\phi^*(x)$ and $\phi^*(x_t)$ respectively, so $f^{\text{opt.}}$ can be equivalently written as a function only of $\phi^*(x_{t+k})$, $\phi^*(x_t)$, and $k$. Then the encoder $\phi^*$ is able to achieve the minimum possible value of the multistep inverse loss term.

We now show that $\phi^*$ minimizes the latent forward loss as well. Because $T^*$ is deterministic, it follows that, for all transitions $(x_t,a_t,x_{t+1})$, $\phi^*(x_{t+1})$ is a deterministic function of $\phi^*(x_t)$ and $a_t$ (in particular, $\phi^*(x_{t+1}) = T^*(\phi^*(x_t),a_t)$). Then the forward-prediction loss term will be exactly zero, achieved by setting $g_{s'}(s,a) :=  \mathbbm{1}_{s' = T^*(s,a)}$. This is the minimum possible value for a negative-$\log$ loss term. 

\textbf{(2) An encoder $\phi'$ which minimizes both loss terms is a correct endogenous encoder.}

    By assumption, the Ex-BMDP must have some canonical endogenous states  $\mathcal{S}^*$ and canonical exogenous states  $\mathcal{E}^*$, canonical encoders $\phi^*$ and  $\phi_e^*$, and canonical emission distribution $\mathcal{Q}^*$. In order to use the above Lemma \ref{lemma:simulation}, we will first show that if $\phi'$ minimizes the multistep-inverse loss term, then there is a deterministic mapping from $(\bar{\phi_e^*}(x),\phi'(x))$  to $\phi^*(x)$.
    
    We proceed by contradiction. Suppose, for two distinct $x, x'$, we have that $(\bar{\phi_e^*}(x),\phi'(x)) = (\bar{\phi_e^*}(x'),\phi'(x'))$, but  $\phi^*(x) \neq \phi^*(x')$.  Because $\bar{\phi_e^*}(x)[1] = \bar{\phi_e^*}(x')[1]$, we know that  $\phi^*(x)$ and $\phi^*(x')$ must belong to the same cyclic class in $\mathcal{S}^*$; additionally, we have that  $\phi_e^*(x)=\phi_e^*(x')$. Let $w = W(\phi^*(x),\phi^*(x'))$. Because  $\phi^*(x)$ and $\phi^*(x')$ belong to the same cyclic class, we know by Theorem \ref{thm:finite_wd} that $w \leq D' < \infty$. Let $c \in \mathcal{S^*}$ be the corresponding witness state. By the lack of transient states in $\mathcal{E}^*$, we know that there is some $e^* \in E^*$ such that under the exogenous dynamics $\mathcal{T}_e^*$, $\phi_e^*(x)$ can be reached from $e^*$ in exactly $w$ steps. We can step from $(c,e^*)$ to $(\phi^*(x), \phi_e^*(x))$ in exactly $w$ steps, or from $(c,e^*)$ to $(\phi^*(x'), \phi_e^*(x))$ in exactly $w$ steps.
    By Lemma \ref{lemma:reachability}, there exists an observation $x_c \in \mathcal{X}$ such that $\phi^*(x_c) = c$ and $\phi_e^*(x_c) = e^*$. Then, by Assumption 5 listed in Section \ref{sec:assumptions}, we sample $(x_t = x_c,a_t \in \mathcal{A},x_{t+w} = x)$ with finite probability and $(x_t = x_c,a_t \in \mathcal{A},x_{t+w} = x')$ with finite probability. By assumption, $\phi'(x) = \phi'(x')$, so
    
    \begin{equation}
        f(\phi'(x_c) ,\phi'(x) ; w) = f(\phi'(x_c) ,\phi'(x') ; w). \label{eq:same_pred}
    \end{equation}
     Note that the multistep inverse loss term is minimized on \textit{every} tuple $(x_t, x_{t+k}; k)$ by the trivial encoder $\phi_{\text{triv}}$ which maps $\phi_{\text{triv}}(x): = x$. Under this encoder, we can simply define $f^{\text{opt.}}_{a_t}(x_t,  x_{t+k}; k) := \Pr_{\mathcal{D}_{(k)}}(a_t|x_t,  x_{t+k}) $ where $\mathcal{D}_{(k)}$ is the sampling distribution as defined in Section \ref{sec:assumptions}. 
     Furthermore, for a fixed  $(x_t, x_{t+k}; k)$ which occurs with nonzero probability, this minimizer  $f^{\text{opt.}}(x_t, x_{t+k}; k)$ is unique: the excess loss conditioned on  $(x_t, x_{t+k}; k)$ is exactly the KL-divergence between $\Pr_{\mathcal{D}_{(k)}}(a_t|x_t,  x_{t+k}) $ and the  distribution output by the classifier $f(x_t, x_{t+k}; k)$. 

    Then, $\phi'$ can only minimize the loss term if, for some $f$ and all $a \in \mathcal{A}$:

    \begin{equation}
    \begin{split}
                \Pr_{\mathcal{D}_{(w)}}(a_t =a|x_t = x_c,  x_{t+w} = x) = &f_a(\phi'(x_c) ,\phi'(x) ; w) = \\
                f_a(\phi'(x_c) ,\phi'(x') ; w) = &  \Pr_{\mathcal{D}_{(w)}}(a_t =a|x_t = x_c,  x_{t+k} = x'). \label{eq:thm_contradiction_1}
    \end{split}
    \end{equation}
     Where the middle equality is Equation \ref{eq:same_pred}.

     However, $\Pr_{\mathcal{D}_{(w)}}(a_t =a|x_t = x_c,  x_{t+w} = x) $ and $\Pr_{\mathcal{D}_{(w)}}(a_t =a|x_t = x_c,  x_{t+w} = x') $ have \textit{disjoint support} over values of $a\in \mathcal{A}$. To see why, suppose that some $a'\in \mathcal{A}$ existed such that both of these probabilities were nonzero. This would imply that, for the canonical endogenous latent state $d\in \mathcal{S}^*$ defined as $d:=T^*(c,a')$, it is possible to reach both latent states $\phi^*(x)$ and  $\phi^*(x')$ from $d$ in exactly $w-1$ steps of $T^*$. But then $W(\phi^*(x),\phi^*(x')) \leq w-1$, which contradicts the definition of $w:= W(\phi^*(x),\phi^*(x'))$.
     
     Then we can conclude that distributions $\Pr_{\mathcal{D}_{(w)}}(a_t =a|x_t = x_c,  x_{t+w} = x) $ and $\Pr_{\mathcal{D}_{(w)}}(a_t =a|x_t = x_c,  x_{t+w} = x') $ have disjoint support. This leads to a contradiction with Equation \ref{eq:thm_contradiction_1}, which requires the two distributions to be equal. Therefore $\phi'$ cannot minimize the multistep inverse loss term if, for any two distinct $x, x'$, we have  $(\bar{\phi_e^*}(x),\phi'(x)) = (\bar{\phi_e^*}(x'),\phi'(x'))$, but  $\phi^*(x) \neq \phi^*(x')$. Therefore for any  $\phi'$ which minimizes the loss term, there must be a deterministic mapping from $(\bar{\phi_e^*}(x),\phi'(x))$  to $\phi^*(x)$.

     Secondly, we note that any $\phi'$ which minimizes the forward dynamics loss term must produce a set of states $\mathcal{S}'$ with a deterministic transition function. To see this, note that this loss term can be zero with \textit{some} encoder (such as the encoder which maps all $x$ to a single latent state), and further note that the loss is \textit{only} zero if $\phi'(x_{t+1})$ is exactly predictable from  $\phi'(x_{t})$ and $a_t$; which means that the dynamics are deterministic.

     From these two conclusions, that there is a deterministic mapping from $(\bar{\phi_e^*}(x),\phi'(x))$  to $\phi^*(x)$, and that there are deterministic dynamics on the endogenous states produced by $\phi'$, we can conclude by Lemma \ref{lemma:simulation} that $\phi'$ is a valid endogenous encoder of the Ex-BMDP.

\end{proof}

\section{Alternative ``fixes'' to AC-State that do not work} \label{sec:alt_fixes}
While ACDF has nice theoretical properties, one concern is that, unlike the purely \textit{supervised} action-prediction task of AC-State, the ACDF method's latent-forward prediction task has \textit{moving targets}. In other words, the latent code of the next state $s_t+1$ will evolve throughout training, complicating the training optimization. Given this difficulty, it would be desirable to come up with a method for provably learning the endogenous encoder of an Ex-BMDP that does \textit{not} rely on moving targets. We considered two initially promising possibilities, but found that both can fail to capture a correct endogenous state representation.  
\subsection{Full Multi-Step Inverse}
\subsubsection{What is it?/Why was it Promising?}
The multistep inverse method proposed by \cite{lamb2022guaranteed} uses $\phi(x_t)$ and $\phi(x_{t+k})$ to predict $a_t$, that is, the action immediately following $x_t$. A natural extension of this method would be to model the probability distribution over \textit{all} of the actions on the path between  $\phi(x_t)$ and $\phi(x_{t+k})$, that is, $a_t, a_{t+1}.,,,,a_{t+k-1}$, based on $\phi(x_t)$ and $\phi(x_{t+k})$ alone. Equivalently, this modeling can be done autoregressively by, for each $k' \in [0,k-1]$, predicting $a_{t+k'}$ given  $\phi(x_t)$, $\phi(x_{t+k})$, and $a_t, a_{t+1}.,,,,a_{t+k'-1}$. This formulation yields the following modified loss function:
\begin{equation}
\begin{split}
    \mathcal{L}&_\text{AC-State-Full-Multi}(\phi_\theta) :=\\& \min_f \mathop\mathbb{E}_{k \sim \{1,...,K\} } \mathop\mathbb{E}_{k' \sim \{1,...,k-1\} } \mathop\mathbb{E}_{(x_t,a_t,...,a_{t+k'-1}, x_{t+k})}  -\log(  f_{a_{t+k'}}(\phi_\theta(x_t) ,\phi_\theta(x_{t+k}), a_t,...,a_{t+k'-1}; k))
\end{split}
\end{equation}
Conceptually, this technique might seem promising  because the task of predicting  $a_{t+k'}$ from  $\phi(x_t)$, $\phi(x_{t+k})$, and $a_t, a_{t+1}.,,,,a_{t+k'-1}$ can be accomplished by decomposing the problem into predicting $\phi(x_{t+k'})$ from  $\phi(x_t)$ and $a_t, a_{t+1}.,,,,a_{t+k'-1}$, and then predicting $a_{t+k'}$ from $\phi(x_{t+k'})$ and $\phi(x_{t+k})$. In other words, it can be accomplished by composing a \textit{latent forward model} with a \textit{multistep inverse model}. Thus, it would seem to require learning a representation similarly rich to the representation learned by ACDF, without dealing with the moving-target issue caused by explicitly learning a forward dynamics model.

Furthermore, the ``AC-State-Full-Multi'' method successfully learns the encoder for the 5-state periodic Ex-BMDP shown in Figure \ref{fig:witness_dist_figure}-E in the main text, which AC-State fails on. In particular, the encoder must be able to distinguish states $b$ and $c$,  because $\Pr(a_{t+1} = L| s_t=b, s_{t+3} = b,a_t = L) = 0.5$, while $\Pr(a_{t+1} = L| s_t=c, s_{t+3} = c,a_t = L) = 1$.
States $d$ and $e$ can be distinguished similarly.

\subsubsection{Why it fails} \label{sec:full_multi_fail}
Unfortunately, while the ``AC-State-Full-Multi'' prediction task \text{can be accomplished by} learning a deterministic latent forward model and a first-action multistep-inverse model, it does not \textit{require} learning these two things. We show a counterexample here. Consider the following Ex-BMDP on the states $\mathcal{X} = \{a,b,c,a',b',c'\}$, with actions $\mathcal{A} = \{A,B,C\}$ and deterministic transition function $T(x,a)$ shown in Figure \ref{fig:counterexample_full_multi}.

\begin{figure}[h!]
    \centering
\begin{tikzpicture}[node distance=2cm and 4cm]
\node[draw=black,shape=circle](a){a~};
\node[draw=black,shape=circle](b)[below = of a] {b~};
\node[draw=black,shape=circle](c)[below = of b] {c~};
\node[draw=black,shape=circle](a')[right = of a] {a'};
\node[draw=black,shape=circle](b')[below = of a'] {b'};
\node[draw=black,shape=circle](c')[below = of b'] {c'};
\path  (a) edge [bend right]["A",swap, pos=0.9](a');
\path  (a) edge [bend right]["B",swap, pos=0.9](b');
\path  (a) edge [bend right]["C",swap, pos=0.9](c');
\path  (b) edge [bend right]["A",swap, pos=0.9](a');
\path  (b) edge [bend right]["B",swap, pos=0.9](b');
\path  (b) edge [bend right]["C",swap, pos=0.9](c');
\path  (c) edge [bend right]["A",swap, pos=0.9](a');
\path  (c) edge [bend right]["B",swap, pos=0.9](b');
\path  (c) edge [bend right]["C",swap, pos=0.9](c');
\path  (a') edge [bend right]["A",swap, pos=0.9](a);
\path  (a') edge [bend right]["B",swap, pos=0.9](b);
\path  (a') edge [bend right]["C",swap, pos=0.9](c);
\path  (b') edge [bend right]["A",swap, pos=0.9](a);
\path  (b') edge [bend right]["B",swap, pos=0.9](b);
\path  (b') edge [bend right]["C",swap, pos=0.9](c);
\path  (c') edge [bend right]["A",swap, pos=0.9](a);
\path  (c') edge [bend right]["B",swap, pos=0.9](c);
\path  (c') edge [bend right]["C",swap, pos=0.9](b);
\end{tikzpicture}
\caption{Transitions of $T(x,a)$.}
    \label{fig:counterexample_full_multi}
\end{figure}
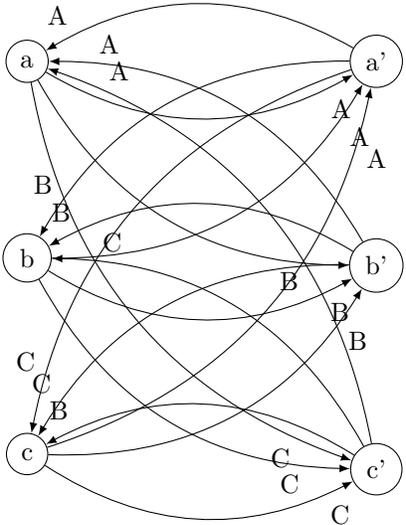
The transition function is defined such that, letting x/X and y/Y represent a/A, b/B, or c/C, we have that for \textit{most} inputs, $T(x,Y) = y'$ and $T(x',Y) = y$, with the exceptions that $T(c',B) = c$ and $T(c',C)= b$.
This state representation turns out to be minimal (the only correct  encoder up to permutation is the trivial one $\phi(x) = x$). To confirm this fact, briefly, note being in $\{a,a'\}$, $\{b,b'\}$, or $\{c,c'\}$ is clearly controllable, and furthermore we must always keep track of whether we are in a primed or non-primed state to determine the effect of actions B and C when we reach $\{c,c'\}$.

However, the ``AC-State-Full-Multi'' loss will be minimized by an encoder $\phi'$ that produces only 5 latent states, with $a$ and $a'$ conflated into a single latent state. The resulting endogenous dynamics are not deterministic, because starting at the state $\{a,a'\}$ and taking the action $B$ may lead to either the state $b$ or state $b'$, and similarly for for the action $C$. (See Figure \ref{fig:counterexample_full_multi_2}).

\begin{figure}[h!]
    \centering
\begin{tikzpicture}[node distance=2cm and 2cm]
\node[draw=black,shape=circle](a){$\{a,a'\}$};
\node[draw=black,shape=circle](b)[below left = of a] {b~};
\node[draw=black,shape=circle](c)[below = of b] {c~};
\node[draw=black,shape=circle](b')[below right = of a] {b'};
\node[draw=black,shape=circle](c')[below = of b'] {c'};
\path  (a) edge [loop above]["A",swap, pos=0.9](a);
\path  (a) edge [bend right]["B",swap, pos=0.9](b');
\path  (a) edge [bend right]["C",swap, pos=0.9](c');
\path  (b) edge [bend right]["A",swap, pos=0.9](a);
\path  (b) edge [bend right]["B",swap, pos=0.9](b');
\path  (b) edge [bend right]["C",swap, pos=0.9](c');
\path  (c) edge [bend right]["A",swap, pos=0.9](a);
\path  (c) edge [bend right]["B",swap, pos=0.9](b');
\path  (c) edge [bend right]["C",swap, pos=0.9](c');
\path  (a) edge [bend right]["B",swap, pos=0.9](b);
\path  (a) edge [bend right]["C",swap, pos=0.9](c);
\path  (b') edge [bend right]["A",swap, pos=0.9](a);
\path  (b') edge [bend right]["B",swap, pos=0.9](b);
\path  (b') edge [bend right]["C",swap, pos=0.9](c);
\path  (c') edge [bend right]["A",swap, pos=0.9](a);
\path  (c') edge [bend right]["B",swap, pos=0.9](c);
\path  (c') edge [bend right]["C",swap, pos=0.9](b);
\end{tikzpicture}
\caption{Nondeterministic transitions on $T(\phi'(x),a)$ for the example in Section \ref{sec:full_multi_fail}.}
    \label{fig:counterexample_full_multi_2}
\end{figure}
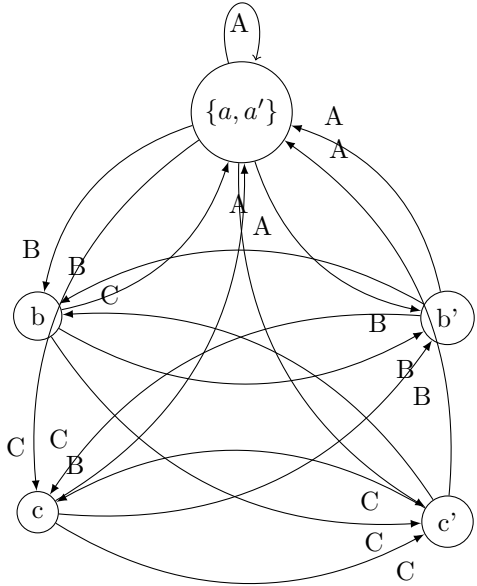

To show that this incorrect $\phi'$ minimizes the ``AC-State-Full-Multi' loss, note that, under a uniform policy:
\begin{itemize}
    \item For any sequence of actions that ends in state $a$ or $a'$, none of the actions can be meaningfully inferred except the \textit{final} action of the sequence, which we know is $A$ with probability 1. Therefore distinguishing whether $x_{t+k} = a$ or $x_{t+k} = a'$ is not necessary to minimize the loss.
    \item For any sequence of actions that begins with state $a$ or $a'$, but ends in b,b',c or c', one can distinguish if the initial state is a or a' by simply looking at the final state and the parity of the length $k$ of the sequence between them (e.g, if the final state is $b$, and $k$ is odd, we know the initial state was a' rather than a). Then it is not necessary for the encoder to distinguish between $x_t = a$ and $x_t = a'$.
\end{itemize}
Therefore, $\phi'$ is sufficient to minimise the ``AC-State-Full-Multi' loss, and is more minimal than the correct endogenous state representation, but is an incorrect endogenous representation. Therefore the ``AC-State-Full-Multi'' loss fails to discover the true minimal endogenous state representation.
\subsection{Artificial Self-Edges/Imprecise $k$} \label{sec:self_edges}
\subsubsection{What is it?/Why was it Promising?}
Note that for any bounded-diameter (i.e., irreducible) MDP, adding any self-edges, even a single self-edge, will make the MDP aperiodic. (A self-edge is a transition from a state to itself.) Making this change to the endogenous dynamics of an Ex-BMDP would therefore eliminate the need to use a latent forward model, as in ACDF. In fact, if we add a self-edge to any state, the witness distance between any pair of states will automatically become $\leq D$: if $c$ is the state with the self-edge, then any states $a$ and $b$ can both be reached in exactly $\max(d(c,a), d(c,b))$ steps. Concretely, if $a$ is the further state, then we can reach $b$ in $d(c,a)$ steps by taking the self-edge at $c$ for $d(c,a) - d(c,b)$  timesteps before going to $b$.

Unfortunately, we cannot simply \textit{alter} the underlying dynamics by adding a self-edge, or many self-edges. However, we might hope to \textit{simulate} such self-edges by randomly duplicating some observations $x$ in the replay buffer, and inserting a ``new'' action symbol in between the duplicated observations. After learning the state abstraction and dynamics, the self-edges with the ``new'' action symbol can simply be removed.

A nearly-equivalent idea is to, rather than trying to predict $a_t$ given $\phi(x_t)$, $\phi(x_{t+k})$, and $k$, instead to predict $a_t$ given $\phi(x_t)$, $\phi(x_{t+k})$, and $k'$, where $k'$ is an \textit{upper bound} on the true value of $k$ (i.e., $k\leq k'$). Note that this formulation is similar to saying that we are given  $\phi(x_t)$, $\phi(x_{t+k'})$, and $k'$, but this path of length $k'$ may contain some number ($k' -k$) of artificially-inserted self-edges. (The only difference is that this second formulation does not allow the \textit{first} action, the predicted action $a_t$, to be the self-edge, but this distinction is a minor one.) This is an appealing picture, because it directly addresses the core problem with the AC-State method, which is that in order to distinguish two states, they must be \textit{exactly} the same distance $k$ (on some path) from a third state. By allowing some imprecision in $k$, this method would seem to address this issue.
\subsubsection{Why it fails}
The problem with this approach is that duplicating an observation effectively ``pauses'' the \textit{exogenous} state of the Ex-BMDP, not just the endogenous state. This modification makes the exogenous state seem to be ``controllable'', and therefore may cause the encoder to ``leak'' information about the exogenous state. Concretely, an incorrect encoder $\phi'$, with a \textit{larger} range of output ``states'' than a correct minimal-range encoder $\phi$, will have a lower loss than $\phi$.

To give an example, consider the Ex-BMDP defined by control-endogenous states $\mathcal{S} = \{a,b,c\}$, exogenous states $\mathcal{E} = \{0,1\}$, actions $\mathcal{A} = \{L,R\}$ and transitions shown in Figure \ref{fig:self_transitions_dont_help}.
\begin{figure}[h!]
\centering
\begin{subfigure}{0.45\textwidth}
\centering
\begin{tikzpicture}
\node[draw=black,shape=circle](a){a};
\node[draw=black,shape=circle](b)[below right = of a]{b};
\node[draw=black,shape=circle](c)[below left = of a]{c};
\path  (a) edge [bend left]["R",swap](b);
\path  (a) edge [bend left]["L"](c);
\path  (b) edge [bend left]["L/R"](c);
\path  (c) edge [bend left]["L/R"](a);
\end{tikzpicture}
    \caption{Endogenous transitions $T(s,a)$}
\end{subfigure}
\centering
\begin{subfigure}{0.45\textwidth}
\centering
\begin{tikzpicture}
\node[draw=black,shape=circle](0){0};
\node[draw=black,shape=circle](1)[right = of 0]{1};
\path  (1) edge [bend right]["p = 1.0",swap](0);
\path  (0) edge [bend right]["p = 1.0",swap](1);
\end{tikzpicture}
    \caption{Exogenous transitions $\mathcal{T}_e(e|e_t)$}
\end{subfigure}
    \caption{An Ex-BMDP with $|\mathcal{S}| = 2$} \label{fig:self_transitions_dont_help}
\end{figure}
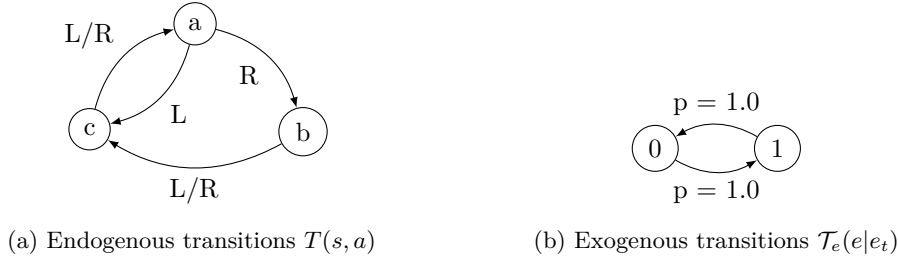

The observations $x \in \mathcal{X}$ are defined by simply concatenating the endogenous and exogenous state labels: $\mathcal{X} = \{a0,a1,b0,b1,c0,c1\}$. Let $\phi$ be a ``correct'' minimal-state encoder, which maps $a0$ and $a1$ to $a$, and so on.

The diameter of these dynamics is $D=2$, and this value is indeed sufficient to learn all endogenous state representations (although the maximum witness distance is in fact $D'=3$).  

Let's consider what happens if we use ``imprecise $k$'' multi-step inverse, with $k' = 2$. Suppose we have that $x_t = a0$, $x_{t+k} = c1$. If we use the encoder $\phi$, our inverse model is only given that $s_t = a$, $s_{t+k} = c$, and $k \in \{1,2\}$. Then $a_t$ can either be L or R, because one can reach $c$ from $a$ in one step with action L, or in two steps with action R. The exact value of $\Pr(a_t=L|s_t = a, s_{t+k} = c, k\in \{1,2\})$ will depend on specifics of the implementation: either the distribution from which $k$ is drawn given $k'$, or equivalently, the probability of taking an artificial self-edge (that is, of duplicating any given state on the replay buffer). However, crucially,  $\Pr(a_t=L|s_t = a, s_{t+k} = c, k\in \{1,2\})$\textit{ will not equal 1}.
However, if we instead use a \textit{less minimal} encoder $\phi'$, which distinguishes a0 from a1 and c0 from c1, we can infer by parity  that $k$ is in fact equal to 1, from the facts that $x_t = a0$,  $x_{t+k} = c1$, and $k \in \{1,2\}$. We then know that $\Pr(a_t=L) =1$, because taking $L$ is the only way to reach $c$ from $a$ in a single step. Therefore, the inverse dynamics model will have a smaller loss if the learned encoder $\phi'$ outputs extra exogenous information. Then this loss function is \textit{not} minimized by the minimal endogenous state encoder.

\section{Numerical Simulation Experiment Details} \label{sec:numerical_apdx}

\subsection{Method}

For each Ex-BMDP shown in Figure \ref{fig:numeric_results}, we perform 50 data-collection runs, and use this data for each value of $K$ and each loss function, to generate 50 trial encoders each. We then determine whether each resulting encoder is either (a) a minimal control-endogenous latent representation, (b) a non-minimal but still correct control-endogenous latent representation, or (c) an incorrect encoder.
For each trial run, we:
\begin{itemize}
    \item Collect two trajectories, each of $T$ timesteps, used as ``training'' and ``validation'' datasets. (This setup is similar to the setup of \cite{lamb2022guaranteed}, where a single trajectory is used for training.)
    \item Iterate over all possible encoders $\phi$. For each $\phi$, we:
        \begin{itemize}
            \item Fit the classifiers $f$ and ($g$ if applicable) on the ``training'' trajectory, with:
            \begin{equation*}
                f_a(s,s',k) := \frac{\text{Freq. of } (s, a) \xrightarrow[]{(k)} s' \text{ in Train.}}{\text{Freq. of } s \xrightarrow[]{(k)} s' \text{ in Train.} }
            \end{equation*}
            \begin{equation*}
                g_{s'}(s,a) := \frac{\text{Freq. of } (s, a) \xrightarrow[]{(1)} s' \text{ in Train.}}{\text{Freq. of } (s,a) \text{ in Train.} }
            \end{equation*}
        
        Note that the states $s,s'$ here are the \textit{encoded} states under $\phi$. For both classifiers, if the denominator is zero, we set the distribution as uniform over actions/latent states. If the numerator is zero but the denominator is not, we set the probability as $10^{-7}$ to avoid infinite losses.
    \item Evaluate the expected loss (Equation \ref{eq:multistep_inverse} or \ref{eq:acdf}) on the ``validation'' trajectory, and record the expected loss as $\mathcal{L}(\phi)$.  (The train/validation split is necessary to avoid over-fitting.)
    \end{itemize}
    \item Find the $\phi$ with the minimum loss. Among the $\phi$'s with loss within 0.1\% of this overall minimum loss, choose the one with the smallest number of latent states to return. 
\end{itemize}
Note that the number of possible encoders grows extremely quickly with $|\mathcal{X}|$ ($> 10^5$ for 10 states), so we limit our examples to cases with  $|\mathcal{X}|=10$. We also use a uniform random behavioral policy, rather than incorporating planning for exploration as in \cite{lamb2022guaranteed}, in order to avoid repeatedly searching for the optimal encoder.

One minor technical caveat is that, for the sake of efficient parallel computation, when measuring the frequency of a transition $(s, a) \xrightarrow[]{(k)} s'$, we only consider spans $s_{t} \rightarrow s_{t+k}$ for $t \in [0,T-K_\text{max}-1]$, where $K_\text{max}$ is the largest $K$ considered in the experiment (e.g., $K_\text{max} = 7$ for the first experiment in Figure \ref{fig:numeric_results}). This wastes a small number (up to  $K_\text{max} -k$) of possible samples from the trajectory. However, this is negligible compared to the overall length of the trajectory $T$, and in any case we would not expect this to bias us towards either AC-State or ACDF. (This approximation was also used when computing one-step frequencies for ACDF.)

\subsection{Additional Results}
\subsubsection{Success Rates for Correct \textit{Minimal} Encoders}
In the previous section, we noted that a returned encoder can either be (a) a minimal control-endogenous latent representation, (b) a non-minimal but still correct control-endogenous latent representation, or (c) an incorrect encoder. In the results shown in the main text, we consider a ``success'' as either case (a) \textit{or} case (b). That is, we consider all correct encoders as successes, even if they are not minimal-state. Here, in Figure \ref{fig:numeric_results_minimal}, we present the results considering only minimal-state encoders as successful.  Interestingly, this only differed at all from the results shown in the main text in two cases (the top and bottom row examples of Figure \ref{fig:numeric_results}), so we only show results for these cases. For the other two Ex-BMDPs, neither method ever returned a correct but non-minimal encoder.

\begin{figure}[h]
    \centering
    \includegraphics[width=\textwidth]{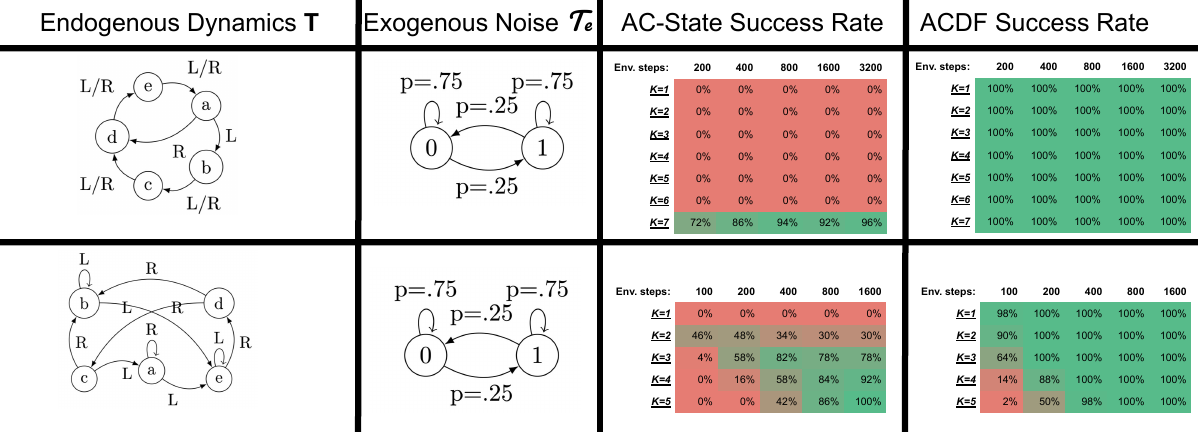}
    \caption{Results of numerical simulation experiments, where the ``Success Rate'' includes only cases where the learned encoder is both correct and state-minimal. For the other two examples in Figure \ref{fig:numeric_results}, no correct but non-minimal encoders were returned, so the results are identical to those shown in the main text.}
    \label{fig:numeric_results_minimal}
\end{figure}
\subsubsection{Complete Results for the Deterministic Ex-BMDP Example (Second Row of Figure \ref{fig:numeric_results})}
In the example shown on the second row of Figure \ref{fig:numeric_results}, we tested over a larger range of $K$ than could fit in the figure. The complete results are shown in Figure \ref{fig:numeric_results_b}.
\begin{figure}
    \centering
\includegraphics[width=\textwidth]{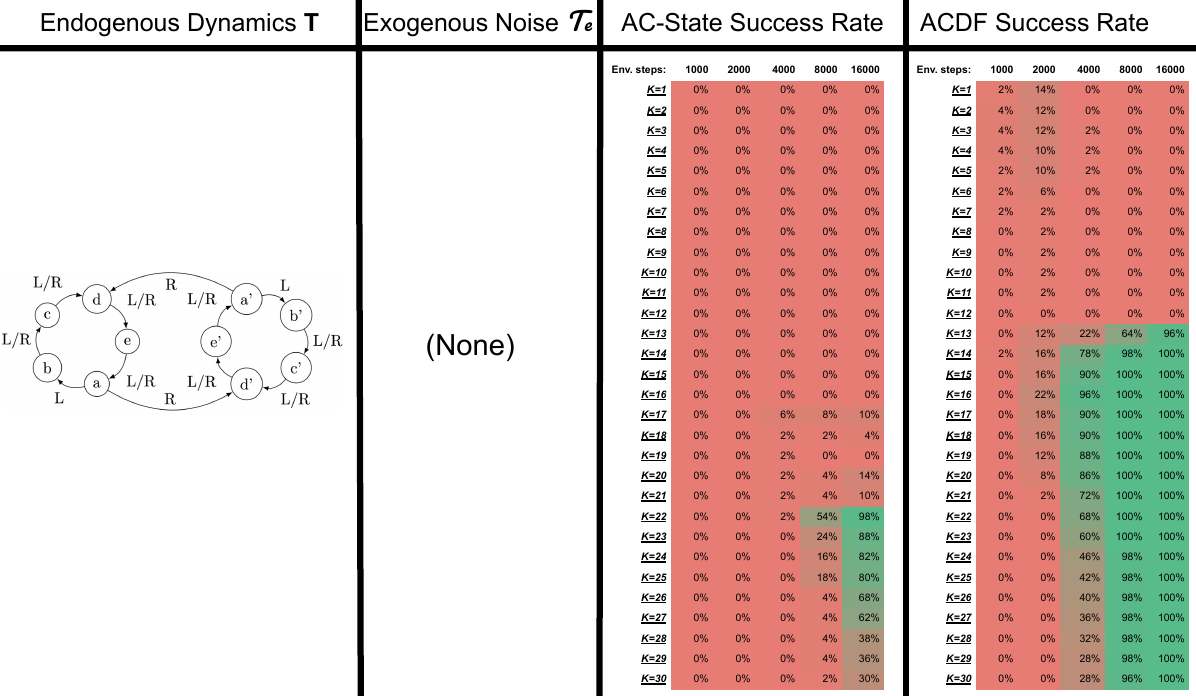}
    \caption{Full results for the numerical simulation experiment shown in the second row of Figure~\ref{fig:numeric_results}.}
    \label{fig:numeric_results_b}
\end{figure}

\section{Deep RL Experiment Details} \label{sec:DRL}
In this section, we describe the deep representation learning experiments we conducted, where the final results are shown in Table \ref{tab:drl_results}. 
\subsection{Environments}
We tested on two environments, both shown in Figure \ref{fig:drl_envs}. The first ``baseline'' environment is taken from the released code from 
\cite{lamb2022guaranteed}, and is similar to the environment described in Section 6.2 of that paper (specifically, it is similar the variant of the environment from that section without ``reset'' actions). The baseline environment consists of nine copies of a four-room maze, where the controllable agent navigates using four actions (up/down/left/right), in just one of the nine mazes. The other eight mazes have other ``agents'' in them which take random actions. The observation is a 11~$\times$~99  image of all nine mazes. For this environment, we start at a random configuration and run for a single trajectory of 5000 steps. 

The second environment we tested is designed to be similar in format to the baseline environment, but to specifically highlight the flaws of AC-State. This environment has a periodic transition function, and an action space consisting of two possible actions. To accomplish the periodicity, in this environment, each of the nine identical mazes consists of a track of 40 states, where in most states, the agent simply moves to the next state regardless of action.  However, in every fifth state, there is an action which transports the agent to some other position in the track, in the pattern shown in Figure \ref{fig:drl_envs}. In each of these long-distance jumps, the agent either moves back or forward by a multiple of 10 states. This gives the overall dynamics a periodicity of 10. The maximum witness distance $D'$ of these dynamics is also 10. For data collection, we again collect 5000 transitions; here, we do so as 25 trajectories of length 200 each. In each trajectory, we initialize the state at a random configuration. (The reason for using multiple trajectories is that, because both the ego-agent and the distractor agents have periodic dynamics with the same periodicity, a single episode will not cover the full configuration space.)

\subsection{Architecture and Hyperparameters}
In general, unless otherwise noted here, we use exactly the architecture and training hyperparameter settings that are default in the released code of \cite{lamb2022guaranteed} for gridworld exploration environments.\footnote{In particular, we used exactly the network architecture for the encoder provided by \cite{lamb2022guaranteed}'s released code, for both AC-State and ACDF. We have observed that this  architecture design appears to be specifically well-suited to easily learn to filter out the exogenous ``distractor'' mazes in these multi-maze environments, and may not be more generally applicable. However because we use the same architecture for both ACDF and AC-State, our direct comparison of these methods is valid.}

\subsection{Training Schedule and Behavior Policy}
We conducted our experiments in an ``offline'' setting: we collected all 5000 transitions for each each environment under uniformly random actions. We then performed 30,000 training iterations, evaluating after every 5000 steps. When evaluating, we evaluated using the version of $\phi_\theta$ from the previous training steps which achieved the lowest training loss using a rolling average over 20 batches. We exclude the first 3000 training iterations when selecting the lowest-loss previous $\phi_\theta$.

\subsection{Implementation of the forward-dynamics loss}
To implement the forward dynamics loss of ACDF (Equation \ref{eq:acdf}), for the dynamics model $g$ we use a LeakyReLU MLP consisting of four layers of sizes $[512+10,1024,1024,N]$ where the input is the discretized vector output of the vector-quantization output layer of the state encoder, representing the encoded state (a vector of size 512), concatenated with the action label. The output is the discrete \textit{index} of the vector-quantized latent representation code of the next latent state. (The number of codes is a hyperparameter of AC-State which we vary.) Note that because we treat the next-state as a discrete index, we do not backpropagate into the encoder of $\phi_\theta(x_{t+1})$, only into the encoder of $\phi_\theta(x_{t})$. This is to mitigate the ``moving-target'' issue mentioned at the beginning of Appendix \ref{sec:alt_fixes}. Also to mitigate this issue, we \textit{only} use the forward prediction loss to update the encoder $\phi_\theta$ every fifth training iteration: at other iterations, we update the parameters of $g$ alone (and separately update $\phi_\theta$ with the multistep-inverse loss). 
\begin{figure}[t]
    \centering
    \includegraphics[width=\textwidth]{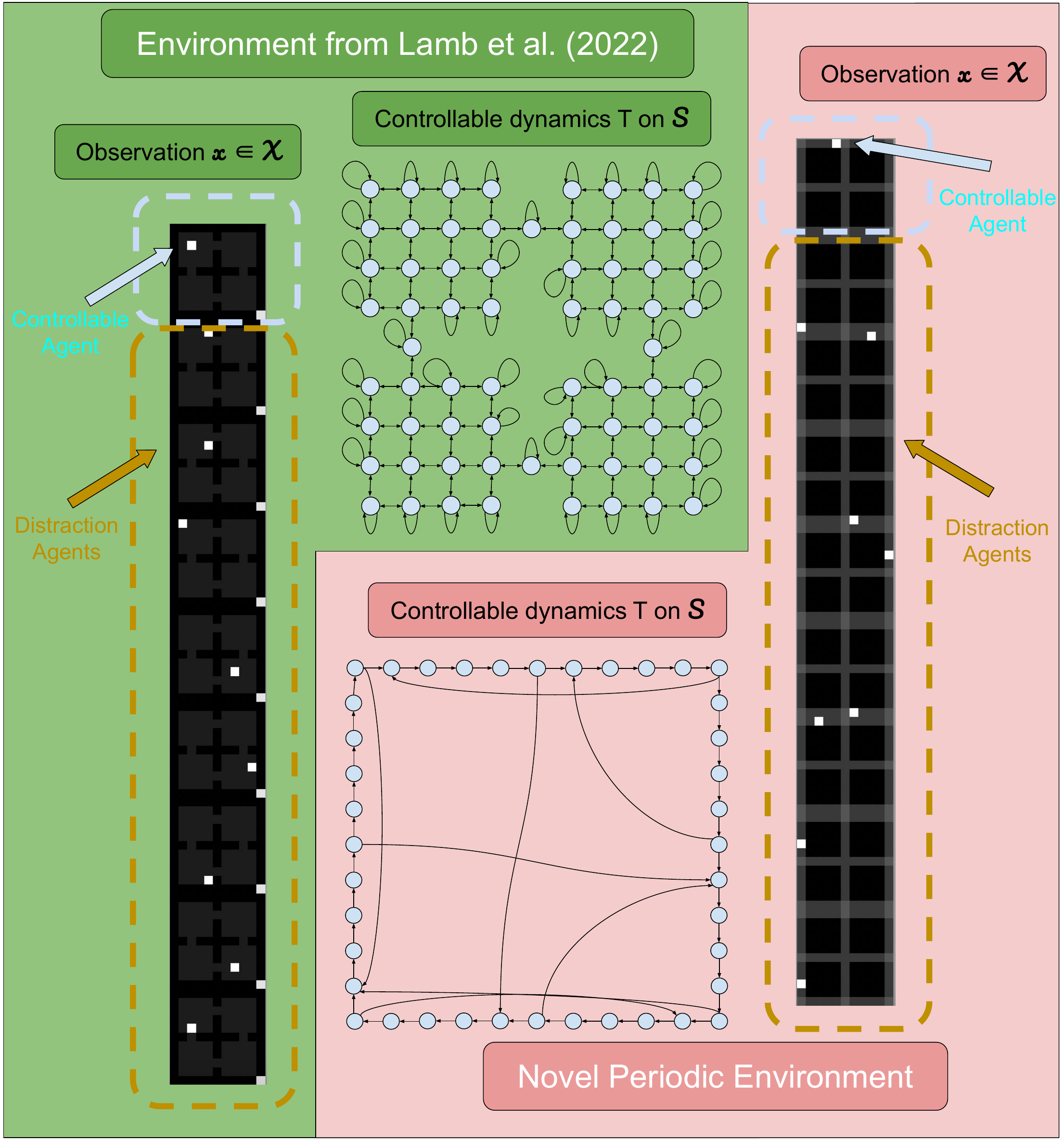}
    \caption{The two high-dimensional environments we tested on. For both environments, we show the image observations $x\in \mathcal{X}$ as well as the  states $S^*$ and deterministic state transitions $T^*$  in the controllable latent dynamics (for readability, we do not show action labels). In these environments, the controllable latent dynamics only represent the top maze in the image: the other mazes contain agents which move randomly.}
    \label{fig:drl_envs}
\end{figure}

\subsection{Evaluation}
In order to capture a real-world usability of the learned representations in  planning, we measure success based on the ability to perform open-loop planning to successfully reach a goal.
\begin{itemize}
    \item We first infer a count-based tabular representation of the deterministic forward dynamics $T$ on the encoded states output by $\phi_{\theta}$, by passing  each observation of all transitions in the dataset through the final learned $\phi_{\theta}$, and inferring that the correct latent transition for a given latent state and action is the one which occurs most often. 
    \item Using this graph, we repeat the following test 1000 times:
\begin{itemize}
    \item We first select two random observations $x$ and $x'$, representing a start state and an end state, from the observation space $\mathcal{X}$ if the environment.
    \item We record the ground-truth controllable-agent state for both observations, and compute the learned latent states $\phi_\theta(x)$ and $\phi_\theta(x')$. We then use our count-based tabular representation of the learned latent transition function to plan (using Dijkstra's algorithm) a shortest path from $\phi_\theta(x)$ to $\phi_\theta(x')$.  This generates a sequence of actions that should (ideally) navigate from $x$ to $x'$.
    \item We execute this sequence of actions from $x$ in an open-loop manner.
    \item We count the trial as a success if the ground-truth controllable-agent state after executing these actions is equal to the ground-truth controllable-agent state of $x'$.
\end{itemize}
\item The overall success rate over the 1000 trials is recorded as a percentage. If it is at least 98\% (980/1000), the representation is considered  to be successfully learned.
\end{itemize}

\subsection{Results}
For both environments and both methods, we ran a hyperparameter sweep over $K$, the number of steps used in multistep-inverse, and $N$, the maximum number of latent states (i.e., the size of the codebook of the final vector-quantization layer of the encoder). For each hyperparameter setting, we ran on 10 seeds. We then evaluated on the ``best'' version of the hyperparameters for each method for 20 additional seeds. Our final results reflect the rate of successful representation learning at the 98\% planning success threshold, at training step 30,000, as described above. Note that the final averages are over the 20 additional seeds alone, to avoid multiple-comparison issues. 

To select the ``best'' hyperparameter setting, we used the following criteria:
\begin{itemize}
    \item We first used the percentage of successfully-learned representations out of the 10 seeds after 30,000 iterations, based on the 98\%-planning success threshold for each seed.
    \item To break ties, we used the number successfully-learned representations over all 10 seeds and all 6 evaluation times (every 5000 training steps) for each seed. This rewards configurations which learn a successful encoder early, as well as those which may have arrived on successful encoders by chance but then later found a lower-loss encoder for the loss function which was incorrect.
    \item To break further ties, we used the percentage of \textit{all} open-loop planning trials, over all evaluation times and all seeds, on which the agent succeeded. Note, however, that this is often a misleading statistic, because an incorrect representation of the environment's dynamics might still happen to wind up on the correct place with considerable frequency.
\end{itemize}

We report the full hyperparameter-sweep results over each of these statistics here (Tables \ref{tab:res_sbf}-\ref{tab:res_cpp}). We note that while both methods are able to consistently learn the controllable dynamics on the baseline environment, only ACDF consistently learns the dynamics of the periodic environment. By contrast, there is only a single hyperparameter configuration (K=1, N=120) where the lowest-loss encoder under AC-State at the final training iteration was correct, and this was \textit{for only two out of ten  random seeds}. (This result is robust to changes in the threshold planning accuracy to be considered a correct representation: these two seeds were the only runs out of the hyperparameter sweep where AC-State succeeded at the final training iteration for success thresholds as low as 75\%.)

In the final results shown in the main text, we show that both methods succeeded on the baseline environment on all 20 seeds, while ACDF succeeded on the periodic environment in 19/20 seeds\footnote{For the remaining one seed, the open-loop navigation task succeeded in 97.5\% of trials, falling just barely under the 98\% threshold we set to consider the run a success.}, and AC-State only succeeded with one training seed out of 20 on this environment. This is a highly statistically significant difference. Using the Clopper-Pearson method, the 99\% CI for the ACDF success rate on this environment is $(.68-0.9997)$, while the 99\% CI for AC-State is $(.0003-.32)$. The final open-loop planning accuracies for each of the 20 evaluation seeds for each method and each environment are reported in Tables \ref{tab:final_results_baseline} and \ref{tab:final_results_periodic}.

\begin{table}[h]
    \centering
    \begin{tabular}{|c|c|c|c|c|c|}
    \hline
&N=68&N=78&N=88&N=98&N=108\\
\hline
K=3&10.0\%&80.0\%&90.0\%&100.0\%&100.0\%\\
K=5&10.0\%&80.0\%&90.0\%&100.0\%&100.0\%\\
K=7&0.0\%&60.0\%&60.0\%&90.0\%&100.0\%\\
    \hline
    \end{tabular}
    \caption{AC-State, Baseline Environment, Final-Iteration Complete-Representation Success Rate}
    \label{tab:res_sbf}
\end{table}
\begin{table}[h]
    \centering
    \begin{tabular}{|c|c|c|c|c|c|}
    \hline
&N=68&N=78&N=88&N=98&N=108\\
\hline
K=3&3.3\%&58.3\%&61.7\%&78.3\%&80.0\%\\
K=5&3.3\%&61.7\%&71.7\%&76.7\%&73.3\%\\
K=7&0.0\%&45.0\%&46.7\%&68.3\%&80.0\%\\
    \hline
    \end{tabular}
    \caption{AC-State, Baseline Environment, All-Iterations Complete-Representation Success Rate}
    \label{tab:res_sba}
\end{table}
\begin{table}[h]
    \centering
    \begin{tabular}{|c|c|c|c|c|c|}
    \hline
&N=68&N=78&N=88&N=98&N=108\\
\hline
K=3&68.8\%&85.8\%&88.4\%&91.5\%&91.2\%\\
K=5&66.4\%&83.7\%&85.4\%&88.7\%&86.8\%\\
K=7&60.0\%&73.7\%&78.4\%&85.7\%&88.4\%\\
    \hline
    \end{tabular}
    \caption{AC-State, Baseline Environment, All-Iterations Path-Planning Success Rate}
    \label{tab:res_sbp}
\end{table}

\begin{table}[h]
    \centering
    \begin{tabular}{|c|c|c|c|c|c|}
    \hline
&N=68&N=78&N=88&N=98&N=108\\
\hline
K=3&10.0\%&90.0\%&100.0\%&100.0\%&100.0\%\\
K=5&40.0\%&90.0\%&100.0\%&100.0\%&100.0\%\\
K=7&10.0\%&80.0\%&100.0\%&100.0\%&100.0\%\\
    \hline
    \end{tabular}
    \caption{ACDF, Baseline Environment, Final-Iteration Complete-Representation Success Rate}
    \label{tab:res_cbf}
\end{table}
\begin{table}[h]
    \centering
    \begin{tabular}{|c|c|c|c|c|c|}
    \hline
&N=68&N=78&N=88&N=98&N=108\\
\hline
K=3&6.7\%&78.3\%&83.3\%&81.7\%&83.3\%\\
K=5&16.7\%&61.7\%&76.7\%&80.0\%&76.7\%\\
K=7&1.7\%&48.3\%&65.0\%&73.3\%&76.7\%\\
    \hline
    \end{tabular}
    \caption{ACDF, Baseline Environment, All-Iterations Complete-Representation Success Rate}
    \label{tab:res_cba}
\end{table}
\begin{table}[h]
    \centering
    \begin{tabular}{|c|c|c|c|c|c|}
    \hline
&N=68&N=78&N=88&N=98&N=108\\
\hline
K=3&72.1\%&87.5\%&88.0\%&88.3\%&88.7\%\\
K=5&72.3\%&80.4\%&82.6\%&84.4\%&85.0\%\\
K=7&65.0\%&76.1\%&77.9\%&81.3\%&82.4\%\\
    \hline
    \end{tabular}
    \caption{ACDF, Baseline Environment, All-Iterations Path-Planning Success Rate}
    \label{tab:res_cbp}
\end{table}

\begin{table}[h]
    \centering
    \begin{tabular}{|c|c|c|c|}
    \hline
&N=40&N=80&N=120\\
\hline
K=1&0.0\%&0.0\%&20.0\%\\
K=5&0.0\%&0.0\%&0.0\%\\
K=10&0.0\%&0.0\%&0.0\%\\
    \hline
    \end{tabular}
    \caption{AC-State, Periodic Environment, Final-Iteration Complete-Representation Success Rate}
    \label{tab:res_spf}
\end{table}
\begin{table}[h]
    \centering
    \begin{tabular}{|c|c|c|c|}
    \hline
&N=40&N=80&N=120\\
\hline
K=1&0.0\%&0.0\%&11.7\%\\
K=5&0.0\%&0.0\%&1.7\%\\
K=10&0.0\%&0.0\%&0.0\%\\
    \hline
    \end{tabular}
    \caption{AC-State, Periodic Environment, All-Iterations Complete-Representation Success Rate}
    \label{tab:res_spa}
\end{table}
\begin{table}[h]
    \centering
    \begin{tabular}{|c|c|c|c|}
    \hline
&N=40&N=80&N=120\\
\hline
K=1&15.3\%&28.8\%&44.3\%\\
K=5&9.2\%&16.6\%&26.5\%\\
K=10&4.7\%&7.9\%&15.6\%\\
    \hline
    \end{tabular}
    \caption{AC-State, Periodic Environment, All-Iterations Path-Planning Success Rate}
    \label{tab:res_spp}
\end{table}

\begin{table}[h]
    \centering
    \begin{tabular}{|c|c|c|c|}
    \hline
&N=40&N=80&N=120\\
\hline
K=1&10.0\%&100.0\%&100.0\%\\
K=5&0.0\%&0.0\%&0.0\%\\
K=10&0.0\%&0.0\%&0.0\%\\
    \hline
    \end{tabular}
    \caption{ACDF, Periodic Environment, Final-Iteration Complete-Representation Success Rate}
    \label{tab:res_cpf}
\end{table}
\begin{table}[h]
    \centering
    \begin{tabular}{|c|c|c|c|}
    \hline
&N=40&N=80&N=120\\
\hline
K=1&1.7\%&48.3\%&46.7\%\\
K=5&0.0\%&0.0\%&0.0\%\\
K=10&0.0\%&0.0\%&0.0\%\\
    \hline
    \end{tabular}
    \caption{ACDF, Periodic Environment, All-Iterations Complete-Representation Success Rate}
    \label{tab:res_cpa}
\end{table}
\begin{table}[h]
    \centering
    \begin{tabular}{|c|c|c|c|}
    \hline
&N=40&N=80&N=120\\
\hline
K=1&54.8\%&65.5\%&65.9\%\\
K=5&2.1\%&4.2\%&4.4\%\\
K=10&2.3\%&2.0\%&1.9\%\\
    \hline
    \end{tabular}
    \caption{ACDF, Periodic Environment, All-Iterations Path-Planning Success Rate}
    \label{tab:res_cpp}
\end{table}

\begin{table}[]
    \centering
    \begin{tabular}{|c|c|}
             \hline
         AC-State & 100.0\%, 100.0\%, 100.0\%, 100.0\%, 100.0\%, 100.0\%, 100.0\%, 100.0\%, 100.0\%,\\
         K=1, N=108&  100.0\%, 100.0\%, 100.0\%, 100.0\%, 99.9\%, 99.9\%, 99.9\%, 99.8\%, 99.8\%, 99.7\%, 99.5\% \\
         \hline
        ACDF & 100.0\%, 100.0\%, 100.0\%, 100.0\%, 100.0\%, 100.0\%, 100.0\%, 100.0\%, 100.0\%,\\
        K=1, N=108 &100.0\%, 100.0\%, 100.0\%, 100.0\%, 99.9\%, 99.9\%, 99.9\%, 99.9\%, 99.9\%, 99.9\%, 99.8\%\\
                 \hline
    \end{tabular}
    \caption{Open-loop path planning accuracy for all 20 random seeds for AC-State and ACDF on the baseline environment at final evaluation, after hyperparameter optimization. Values are sorted in descending order of accuracy.}
    \label{tab:final_results_baseline}
\end{table}

\begin{table}[]
    \centering
    \begin{tabular}{|c|c|}
             \hline
         AC-State & 99.6\%, 74.6\%, 72.0\%, 65.4\%, 64.5\%, 64.4\%, 60.2\%, 55.0\%, 53.8\%, 36.3\%,\\
         K=1, N=120& 34.8\%, 33.9\%, 30.8\%, 27.8\%, 27.5\%, 24.4\%, 23.8\%, 23.6\%, 19.8\%, 19.0\% \\
         \hline
        ACDF & 100.0\%, 99.9\%, 99.9\%, 99.9\%, 99.8\%, 99.7\%, 99.7\%, 99.7\%, 99.7\%, 99.6\%,\\
        K=1, N=80 & 99.5\%, 99.5\%, 99.4\%, 99.0\%, 99.0\%, 99.0\%, 98.9\%, 98.9\%, 98.6\%, 97.5\%\\
                 \hline
    \end{tabular}
    \caption{Open-loop path planning accuracy for all 20 random seeds for AC-State and ACDF on the periodic environment at final evaluation, after hyperparameter optimization. Values are sorted in descending order of accuracy.}
    \label{tab:final_results_periodic}
\end{table}

\end{document}